\documentclass{article}
\usepackage[top=1in, bottom=1in, left=1in, right=1in]{geometry} 
\usepackage{setspace} 
\onehalfspacing

\usepackage{amsfonts}
\usepackage{amsmath}
\usepackage{amsthm} 
\usepackage{amssymb} 
\usepackage{epsfig}
\usepackage{url}
\usepackage{bm}
\usepackage{bbm}
\usepackage{booktabs}
\usepackage{xcolor}
\usepackage{adjustbox}
\usepackage{mathtools} 
\usepackage{siunitx}
\usepackage{tikz-cd}
\tikzcdset{ampersand replacement=\&}
\usepackage{graphicx}
\usepackage{subcaption}
\usepackage[colorlinks,pagebackref=true]{hyperref}
\usepackage{cite}
\usepackage{float} 
\usepackage{subfiles}
\usepackage{longtable}
\usepackage{algorithm}
\usepackage{algpseudocode}
\usepackage{todonotes}
\setuptodonotes{inline} 
\usepackage{subfiles}

\usepackage{tikz}
\usetikzlibrary{decorations.pathmorphing, positioning}

\DeclareMathOperator{\diag}{diag}
\DeclareMathOperator{\vect}{vec}

\DeclareMathOperator{\rank}{rank}

\newcommand{\e}{{\rm e}}
\newcommand{\E}{{\mathbb E}}

\newcommand{\Q}{{\mathbb Q}}

\newcommand{\R}{{\mathbb R}}

\newcommand{\N}{{\mathbb N}}

\newcommand{\Dcal}{{\mathcal D}}

\newcommand{\Fcal}{{\mathcal F}}

\newcommand{\Hcal}{{\mathcal H}}
\newcommand{\Ical}{{\mathcal I}}

\newcommand{\Ucal}{{\mathcal U}}
\newcommand{\Vcal}{{\mathcal V}}



\newtheorem{proposition}{Proposition}[section]
\newtheorem{lemma}[proposition]{Lemma}
\newtheorem{theorem}[proposition]{Theorem}
\newtheorem{definition}[proposition]{Definition}

\newtheorem{remark}[proposition]{Remark}

\newtheorem{exampleemph}[proposition]{Example}   
\newenvironment{example}{\begin{exampleemph}\begin{upshape}}{\end{upshape}\end{exampleemph}} 

\newcommand\tcapfig[1]{\captionsetup{position=top, font=normalsize, labelfont=bf, textfont=normalfont, justification=centering, margin=0mm, aboveskip=2mm, belowskip=0mm, labelsep=colon, singlelinecheck=false}\caption{#1}}
\newcommand\bnotefig[1]{\captionsetup{position=bottom, font=footnotesize,  textfont=normalfont, margin=1mm, skip=2mm, justification=justified, singlelinecheck=false}\caption*{#1}}
\begin{document}

\title{Transfer Learning Across Fixed-Income Product Classes}
\author{ Nicolas Camenzind\footnote{EPFL, Switzerland. Email: nicolas.camenzind@epfl.ch} \and Damir Filipovi\'c\footnote{EPFL, Switzerland. Email: damir.filipovic@epfl.ch}} 
\date{13 January 2026}
\maketitle

\begin{abstract}
We propose a framework for transfer learning of discount curves across different fixed-income product classes. Motivated by challenges in estimating discount curves from sparse or noisy data, we extend kernel ridge regression (KR) to a vector-valued setting, formulating a convex optimization problem in a vector-valued reproducing kernel Hilbert space (RKHS). Each component of the solution corresponds to the discount curve implied by a specific product class. We introduce an additional regularization term motivated by economic principles, promoting smoothness of spread curves between product classes, and show that it leads to a valid separable kernel structure. A main theoretical contribution is a decomposition of the vector-valued RKHS norm induced by separable kernels. We further provide a Gaussian process interpretation of vector-valued KR, enabling quantification of estimation uncertainty. Illustrative examples show how transfer learning tightens confidence intervals compared to single-curve estimation. An extensive masking experiment demonstrates that transfer learning significantly improves extrapolation performance. 
\end{abstract}

\vspace{2ex}

\noindent {\bf Keywords:} yield curve estimation, transfer learning, nonparametric estimator, machine learning in finance, vector-valued reproducing kernel Hilbert space 

\vspace{2ex}


\noindent {\bf JEL Classification:}  C14, E43, G12

\section{Introduction}
\label{sec:introduction}

We introduce a framework for transfer learning of discount curves across different fixed-income product classes. Since discount curves are inherently unobservable, they must be inferred from the observable prices of fixed-income instruments. A key feature of the proposed framework is its ability to incorporate complementary market information across product classes. Accurate estimation is critical, as discount curves are fundamental to finance, providing the basis for appropriately discounting future cash flows. Consequently, their precise estimation holds significant practical relevance.

Numerous methods have been proposed for single discount curve estimation. Classical approaches include the parametric Nelson--Siegel--Svensson model~\cite{Nelson1987, SVENSSON1944, GURKAYNAK20072291}, as well as nonparametric methods such as Fama--Bliss~\cite{FAMABLISS1987}, Smith--Wilson~\cite{smi_wil_01}, and Liu--Wu~\cite{LIU20211395}. More recently, \cite{Filipovic2022} introduced a kernel ridge regression (KR) framework, providing a theoretically grounded solution based on reproducing kernel Hilbert space (RKHS) theory. KR yields a closed-form, linear estimator and empirically outperforms benchmark models for US and Swiss government bonds~\cite{Filipovic2022, CamenzindFilipovic2024}. However, like other methods, KR struggles with extrapolation in maturity ranges where data are sparse or absent~\cite{CamenzindFilipovic2024}.

This limitation motivates the use of transfer learning~\cite{Weiss2016-kn, Pan2010-qg}, a well-established concept in machine learning that is closely related to multitask learning~\cite{Caruana1997}. Transfer learning seeks to improve estimation by jointly solving related problems and sharing information across them, particularly when data for the primary task are limited, noisy, or costly to obtain. In the context of discount curves, transfer learning arises naturally across fixed-income product classes, with suitable adjustment for cross-currency effects. A product class refers to a group of fixed-income instruments priced using a common discount curve, denominated in the same currency and characterized by similar risk features such as issuer type, collateralization, or credit quality. Examples include government bonds issued by the same sovereign, interest-rate swaps referencing a common overnight risk-free rate~\cite{Schrimpf2019, SixSaron, SNBSaron, ECBEstr, BOESoniaOverhaul, FedSofr}, and corporate bonds within a given credit rating class.

This paper develops a theoretical framework for transfer learning of discount curves followed by an extensive empirical masking experiment that demonstrates the economical significant benefits of it. The experiment is centered around transfer learning between US government bonds and SOFR swaps.\footnote{SOFR stands for Secured Overnight Financing Rate and is the new overnight risk-free reference rate in the US, see \cite{FedSofr}}\footnote{Another study is in progress on transfer learning government bonds across currencies.} Our methodology generalizes to any set of fixed-income products that can be represented jointly under a discounted cash flow framework. Although limits to arbitrage may cause different product classes to imply distinct discount curves even when all are considered risk-free~\cite{wu_jar_24}, we show that the discounted cash flow principle can be naturally embedded into an arbitrage-free pricing framework.

We formulate the transfer learning problem as a vector-valued KR, leading to a convex optimization problem in a vector-valued RKHS. The objective function balances pricing errors against the smoothness of the resulting discount curve, as induced by the norm associated with an operator-valued kernel. This norm serves as a regularization term that ensures a well-posed problem and mitigates overfitting. Each component of the solution corresponds to the discount curve implied by a specific product class. Analogous to the scalar case, we derive a closed-form expression for the vector-valued KR estimator.

The theory of vector-valued RKHS is well-established~\cite{pau_rag_16, pontil:on_learning_vvf}, with operator-valued kernels, such as matrix-valued kernels in~$\mathbb{R}^n$, playing a central role~\cite{kadri_duflos_2016}. Fundamental results from RKHS theory, including the representer theorem and Moore's theorem, extend naturally to the vector-valued setting~\cite{pau_rag_16}. A particularly tractable subclass, separable kernels, has been extensively studied~\cite{baldassarre:multi_output_learning, sheldon:graph_multi_task_learning, pontil:kernels_multi_task_learning, alv_ros_law_11}. Separable kernels are constructed as the product of a scalar kernel and a constant covariance matrix, the latter encoding the transfer learning structure. They offer computational advantages, including simple computation of inner products and induced norms~\cite{baldassarre:multi_output_learning}.

Building on the scalar case, we introduce an additional regularization term motivated by economic principles, penalizing the spread between discount curves across product classes. A main theoretical contribution of our work is a decomposition of the norm induced by separable kernels, generalizing a result from~\cite{baldassarre:multi_output_learning}. We prove that the resulting regularization yields a valid separable kernel, specifically tailored to our transfer learning problem. Rather than enforcing identical curves, the regularization promotes smoothness of spread curves under an economically motivated norm. This connects naturally to graph regularization techniques~\cite{Smola2003, sheldon:graph_multi_task_learning}.

We further provide a Gaussian process~\cite{Chen2019} interpretation of the vector-valued KR, enabling quantification of estimation uncertainty for the discount curves. In doing so, we extend the well-known correspondence between KR and Gaussian processes in the scalar case~\cite{Rasmussen2005-rn} to the setting of transfer learning. Our illustrative examples show that the transfer learning framework significantly tightens confidence intervals around the estimated discount curves.

The remainder of the paper is organized as follows. Section~\ref{sec:transfer_learning_problem_formulation} formulates the transfer learning problem for discount curves and presents the representation theorem essential for implementation. Section~\ref{sec:multivariate_gp_vies} develops the Gaussian process perspective. Section~\ref{sec:sep_kernels} introduces separable kernels as a natural class of matrix-valued kernels for our setting. Section~\ref{sec:std_product_formulation} discusses how standard fixed-income products can be embedded into the transfer learning formulation. Section ~\ref{sec:empirical_illustrations} covers the applied part where we focus on US government bonds and SOFR swaps. We first introduce the data and perform hyperparameter selection, show illustrative yield and forward curves based on transfer learning before performing an in depth masking experiment which shows the economical significant effects of transfer learning. The Appendix provides a self-contained introduction to the theory of vector-valued RKHS, collects all proofs, and details the embedding of the discounted cash flow principle into an arbitrage-free pricing framework.

\section{Transfer Learning Problem Formulation}
\label{sec:transfer_learning_problem_formulation}

In this section, we present the general problem formulation for transfer learning of discount curves across $A$ different fixed-income product classes. Our framework requires only that the theoretical price of a fixed-income product be expressed as the sum of its discounted cash flows.

Specifically, for every product class $a=1,\dots,A$, there are $M_a$ fixed-income instruments with common cash flow dates $0<x_1<\cdots <x_N$, stacked into the column vector $\bm x = (x_1,\dots,x_N)^\top$.\footnote{Cash flow dates $\bm x$ are assumed to be common across all product classes without loss of generality.} The total number of instruments is given by $M=M_1+\cdots+M_A$. For each instrument we observe noisy ex-coupon prices, 
$P_a = (P_{a,1}, \dots, P_{a,M_a})^\top$.
We denote the associated $M_a\times N$ cash flow matrix by $C_a=(C_{a,ij})$ where $C_{a,ij}$ is the cash flow of instrument $i$ of product class $a$
that occurs in $x_j$.

In line with the discounted cash flow principle, we assume that for every product class $a$, there exists a unique discount curve $g_a:[0,\infty)\to \R$ with $g_a(0)=1$ and such that the price of every instrument $i$ in product class $a$ is given by
\begin{equation}
\label{eq:pricing_eq}
   P_{a,i} = \sum_{j=1}^N C_{a,ij} g_a(x_j). 
\end{equation}

The objective of this paper is to jointly estimate the discount curves $g = (g_1,\dots,g_A)^\top$ from observed market prices $P_a$. To this end, we decompose each curve $g_a$ as the sum of an exogenous prior function $p_a$ and a hypothesis function $h_a$, that is,
\[
g_a = p_a + h_a \quad \text{for all $a = 1,\dots,A$.}
\]
Here, the prior $p = (p_1,\dots,p_A)^\top : [0,\infty) \to \mathbb{R}^A$ is assumed to satisfy $p(0) = 1$, and the hypothesis $h = (h_1,\dots,h_A)^\top : [0,\infty) \to \mathbb{R}^A$ is constrained to satisfy $h(0) = 0$.\footnote{This additive specification mirrors the structure of linear-rational term structure models; see \cite{Filipovic2017-wh}.} A natural and simple choice for the prior is the constant function $p = 1$. 

We model $h$ as an element of a vector-valued RKHS $\mathcal{H}$ over the domain $E = [0,\infty)$, taking values in $\mathbb{R}^A$ and satisfying $h(0) = 0$ for all $h \in \mathcal{H}$. The associated reproducing kernel is a matrix-valued function $K : [0,\infty) \times [0,\infty) \to \mathbb{R}^{A \times A}$. Appendix~\ref{secRKHS} provides a self-contained introduction to the theory of vector-valued RKHS, including its foundational properties and practical relevance for our setting.

To enable matrix notation, we introduce the following conventions.  For any function $f$, we write $f(\bm x)=(f(x_1),\dots,f(x_N))^\top$ for the corresponding array of function values. For a general matrix $Q\in\mathbb{R}^{m\times n}$, we write $Q_i=(Q_{i1},\dots,Q_{in})$ for its $i$-th row vector, and define the vectorization of $Q$ as the vector obtained by stacking its columns, $\vect(Q)= (Q_{11},\dots,Q_{m1},Q_{12},\dots,Q_{m2},\dots,Q_{1n},\dots,Q_{mn})^\top \in\R^{nm}$. Accordingly, we denote the matrix $h^\top(\bm x)= (h_1(\bm x),\dots, h_A(\bm x))\in\R^{N\times A}$, and we obtain the vector
\[ \vect(h^\top(\bm x)) =
  \begin{pmatrix}
    h_1(\bm x)\\ \vdots \\ h_A(\bm x)
  \end{pmatrix} =(h_1(x_1),\dots,h_1(x_N),h_2(x_1),\dots,h_2(x_N),\dots,h_A(x_1),\dots,h_A(x_N))^\top\in \R^{AN}.\]

We also stack the cash flow matrices and price vectors across product classes as
\[
\bm{C} =
\begin{pmatrix}
C_1 &        &        \\
    & \ddots &        \\
    &        & C_A
\end{pmatrix} \in \mathbb{R}^{M \times AN}, \quad 
\bm{P}=
  \begin{pmatrix}
    P_1  \\\vdots\\
    P_A
  \end{pmatrix} \in\mathbb{R}^{M},
\]
where \( \bm{C} \) is block diagonal with the individual cash flow matrices \( C_a \) along the diagonal, and all off-diagonal blocks equal to zero. The discounted cash flow equation \eqref{eq:pricing_eq} then reads $\bm P =\bm C \vect( p^\top(\bm x) + h^\top(\bm x))$. Including pricing errors $\bm\epsilon$ leads to
\begin{equation}
\label{eq:pricing_eq_error}
  \bm P= \bm C \vect( p^\top(\bm x) + h^\top(\bm x))+ \bm\epsilon. 
\end{equation}
Such pricing errors occur due to market imperfections and data errors.

The estimation objective reduces to finding a function \( h \in \mathcal{H} \) that balances the tradeoff between the weighted mean-squared pricing error,
\[
\sum_{a=1}^A \sum_{i=1}^{M_a} \omega_{a,i} \big( P_{a,i} - C_{a,i} p_a(\bm{x}) - C_{a,i} h_a(\bm{x}) \big)^2,
\]
and the regularity of \( h \) as quantified by adding the term $\lambda \|h\|_\Hcal^2$, for the vector-valued RKHS norm \( \|h\|_{\mathcal{H}} \) and a regularity parameter $\lambda>0$. This leads to the vector-valued KR problem 
\begin{equation}
    \label{eq:krr_obj}
    \min_{h\in\mathcal{H}} \Bigl\{\sum_{a=1}^A \sum_{i=1}^{M_a} \omega_{a,i} ( P_{a,i}-C_{a,i}p_a(\bm x) -C_{a,i} h_a(\bm x))^2 +\lambda\|h\|^2_{\mathcal{H}} \Bigr\}.
  \end{equation}
The weights \( \omega_{a,i} > 0 \) are exogenously specified and reflect the relative importance of the pricing terms. By the vector-valued kernel representer theorem, there exists a unique solution, which admits a closed-form expression in terms of the kernel matrix
\begin{equation}\label{eq:kernel_matrix}
  \bm{K}=
  \begin{pmatrix}
    \bm{K_{11}} & \dots  & \bm{K_{1A}}\\
    \vdots & \ddots & \vdots \\
    \bm{K_{A1}} & \cdots & \bm{K_{AA}}
  \end{pmatrix} \in\mathbb{R}^{AN\times AN},
\end{equation}
where each block \( \bm{K_{ab}} \in \mathbb{R}^{N \times N} \) has entries \( \bm{K}_{\bm{ab},ij} = K_{ab}(x_i, x_j) \). The following theorem formalizes this.

\begin{theorem}
  \label{thm:vect_krr}
  There exists a unique solution of the vector-valued KR problem \eqref{eq:krr_obj}, which is given by $\bar h = \sum_{j=1}^N K(\cdot,x_j) \beta_j$ where $\beta=(\beta_1,\dots,\beta_N)\in \R^{A\times N}$ takes the form
  \[ \vect( \beta^\top) = \bm C^\top\left( \bm C\bm K\bm C^\top + \bm \Lambda\right)^{-1}(\bm P - \bm C \vect( p^\top(\bm x))),\]
  for the block diagonal matrix $\bm\Lambda=\operatorname{diag}(\Lambda_1,\dots,\Lambda_A)\in\mathbb{R}^{M\times M}$ with $\Lambda_a=\operatorname{diag}(\lambda/\omega_{a,1},\dots,\lambda/\omega_{a,M_a})$. The corresponding discount curves are given by $\bar g =  p +\bar h$.
\end{theorem}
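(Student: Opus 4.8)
The plan is to reduce the infinite-dimensional problem \eqref{eq:krr_obj} to a finite-dimensional convex quadratic program via the vector-valued representer theorem, and then to solve the associated normal equations in closed form. First I would observe that the data-fitting term depends on $h$ only through the finitely many evaluations $h_a(x_j)$, since $C_{a,i}h_a(\bm x)=\sum_{j}C_{a,ij}h_a(x_j)$. Because $\lambda>0$, the term $\lambda\|h\|_{\mathcal H}^2$ is strictly convex and coercive, so the objective is strictly convex and coercive on $\mathcal H$, giving existence and uniqueness of the minimizer. Invoking the representer theorem (Appendix~\ref{secRKHS}), I would write any $h=h_0+h_\perp$ with $h_0\in\spn\{K(\cdot,x_j)e_a: a,j\}$ (here $e_a$ is the $a$-th standard basis vector of $\R^A$) and $h_\perp$ orthogonal to this subspace. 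The reproducing property gives $h(x_j)^\top c=\langle h,K(\cdot,x_j)c\rangle_{\mathcal H}=\langle h_0,K(\cdot,x_j)c\rangle_{\mathcal H}=h_0(x_j)^\top c$ for all $c$, hence $h(x_j)=h_0(x_j)$, while $\|h\|_{\mathcal H}^2=\|h_0\|_{\mathcal H}^2+\|h_\perp\|_{\mathcal H}^2$. Minimality forces $h_\perp=0$, so the unique solution has the stated form $\bar h=\sum_{j=1}^N K(\cdot,x_j)\beta_j$ with $\beta=(\beta_1,\dots,\beta_N)\in\R^{A\times N}$.

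The second step is to rewrite the objective as a quadratic in $\gamma:=\vect(\beta^\top)$. Using the reproducing property twice, I would establish the two translation identities $\vect(\bar h^\top(\bm x))=\bm K\gamma$ and $\|\bar h\|_{\mathcal H}^2=\gamma^\top\bm K\gamma$. Both follow from matching the block-and-index conventions fixed in the setup: the $(a,k)$ entry of $\bm K\gamma$ is $\sum_{b,j}K_{ab}(x_k,x_j)\beta_{bj}=\bar h_a(x_k)$, and $\langle K(\cdot,x_i)\beta_i,K(\cdot,x_j)\beta_j\rangle_{\mathcal H}=\beta_i^\top K(x_i,x_j)\beta_j$ sums to $\gamma^\top\bm K\gamma$. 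Collecting weights into $W=\diag(\omega_{a,i})$ and setting $\bm y=\bm P-\bm C\vect(p^\top(\bm x))$, the objective becomes
\[
J(\gamma)=(\bm y-\bm C\bm K\gamma)^\top W(\bm y-\bm C\bm K\gamma)+\lambda\,\gamma^\top\bm K\gamma ,
\]
where $\bm\Lambda=\lambda W^{-1}$ by the definition of the blocks $\Lambda_a$. Since $\bm K\succeq 0$ as the Gram matrix of a positive-definite matrix-valued kernel (Appendix~\ref{secRKHS}), $J$ is convex, so every stationary point is a global minimizer.

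Finally I would solve $\nabla J(\gamma)=2\bm K\big[-\bm C^\top W(\bm y-\bm C\bm K\gamma)+\lambda\gamma\big]=0$; it suffices to meet the stronger condition $\bm C^\top W(\bm y-\bm C\bm K\gamma)=\lambda\gamma$. I would then verify directly that the claimed $\gamma=\bm C^\top\alpha$ with $\alpha=(\bm C\bm K\bm C^\top+\bm\Lambda)^{-1}\bm y$ satisfies it. The inverse exists because $\bm\Lambda\succ0$ and $\bm C\bm K\bm C^\top\succeq0$ make the sum positive definite, and the push-through computation $\bm y-\bm C\bm K\bm C^\top\alpha=(\bm C\bm K\bm C^\top+\bm\Lambda)\alpha-\bm C\bm K\bm C^\top\alpha=\bm\Lambda\alpha$ yields $\bm C^\top W(\bm y-\bm C\bm K\gamma)=\bm C^\top W\bm\Lambda\alpha=\lambda\bm C^\top\alpha=\lambda\gamma$, using $W\bm\Lambda=\lambda I$. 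This gives $\vect(\beta^\top)=\bm C^\top(\bm C\bm K\bm C^\top+\bm\Lambda)^{-1}(\bm P-\bm C\vect(p^\top(\bm x)))$, and $\bar g=p+\bar h$ is immediate from $g=p+h$ together with $p(0)=1$, $\bar h(0)=0$.

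The hard part will be the careful bookkeeping in the second step: aligning the vectorization $\vect(\beta^\top)$, the block structure of $\bm K$, and the block-diagonal $\bm C$ so that the two translation identities hold with exactly the index ordering used in the problem formulation. A secondary subtlety is that the coefficients $\beta$ need not be unique when $\bm K$ is singular; uniqueness is asserted only for the function $\bar h$ (equivalently for $\bm K\gamma$ and hence for the fitted prices), and the explicit formula merely exhibits one admissible representation.
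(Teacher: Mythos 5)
Your proposal is correct and takes essentially the same route as the paper: a representer-theorem reduction to the span of the kernel sections $K(\cdot,x_j)e_a$, the translation identities $\vect(\bar h^\top(\bm x))=\bm K\gamma$ and $\|\bar h\|^2_{\Hcal}=\gamma^\top \bm K\gamma$, and the closed form $(\bm C\bm K\bm C^\top+\bm\Lambda)^{-1}$ from the resulting finite-dimensional quadratic. The paper organizes the first step via the sampling operator $S$ and its adjoint (Lemma~\ref{lemadjS}), restricting immediately to $h=S^\ast\bm C^\top q$ with $q\in\R^{M}$ and deferring the remaining algebra to the scalar case, whereas you work over the full $AN$-dimensional span and verify the formula by a push-through identity---purely organizational differences---and your closing remark that only the function $\bar h$ (not the coefficient matrix $\beta$ when $\bm K$ is singular) is unique is accurate and consistent with the theorem as stated.
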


A common choice for the weights $\omega_{a,i}$, see, e.g.,~\cite{Filipovic2022, CamenzindFilipovic2024}, is based on the duration of the underlying instruments as presented in the following example.

\begin{example}
\label{ex:weights}
For any fixed-income instrument~$i$ of product class $a$ with cash flows~$C_{a,ij}$ at dates~$x_j$, its price as a function of yield-to-maturity (YTM)~$Y$ is given by
\[
Y \mapsto \Pi_{a,i}(Y) = \sum_{j=1}^N C_{a,ij} e^{-Y x_j}.
\]
The market-implied YTM~$Y_{a,i}$ is defined by $\Pi_{a,i}(Y_{a,i}) = P_{a,i}$, where $P_{a,i}$ denotes the observed market price. The model-implied YTM~$Y_{a,i}^g$, based on the discount curves~$g$, satisfies $\Pi_{a,i}(Y_{a,i}^g) = P_{a,i}^g = C_{a,i} g_a(\bm{x})$, consistent with the discounted cash flow equation~\eqref{eq:pricing_eq}. Using a first-order approximation, $P_{a,i}^g - P_{a,i} \approx \Pi_{a,i}'(Y_{a,i})(Y_{a,i}^g - Y_{a,i})$, we express the squared YTM error as an approximately weighted squared price error,
\[
(Y_{a,i}^g - Y_{a,i})^2 \approx \frac{1}{\left(\Pi_{a,i}'(Y_{a,i})\right)^2}(P_{a,i}^g - P_{a,i})^2.
\]
YTM is often used to compare fixed-income instruments across maturities. Weighting squared price errors by~$\omega_{a,i} = \frac{1}{M} \frac{1}{\left(\Pi_{a,i}'(Y_{a,i})\right)^2}$ in \eqref{eq:krr_obj} therefore ensures that estimation errors are more uniformly comparable across the maturity spectrum. See also Figure~\ref{fig:bid_ask} below.
\end{example}
 
\begin{remark}
 Theorem \ref{thm:vect_krr} can be extended towards infinite weights $\omega_{a,i}=\infty$, with the convention
  $\lambda/\infty = 0$, which corresponds to an exact fit of $P_{a,i}$, for selected $a$, $i$. This requires that the corresponding block of $\bm C\bm K\bm C^\top$ is invertible. See \cite[Theorem A.1]{Filipovic2022} for details.   
\end{remark}

\begin{remark}
Theorem~\ref{thm:vect_krr} remains valid even when no quotes are available for a given product class~$a$, i.e., when $M_a = 0$. In this case, the corresponding rows in $\bm{C}$ and $\bm{P}$ are omitted, and we adopt the convention that $\sum_{i=1}^0 = 0$. Remarkably, the solution curve $\bar h_a$ still depends on the other product classes via the joint regularization term in \eqref{eq:krr_obj}. In the extreme case where no quotes are available at all, $M = 0$, the solution $\bar{h}$ is identically zero, and the resulting discount curves reduce to the priors, $\bar{g} = p $.
\end{remark}

\section{Gaussian Process View}
\label{sec:multivariate_gp_vies}
Similar to the scalar case one can develop a Gaussian process perspective of the kernel ridge regression in the vector-valued case. We first discuss the general case and then specialize to separable kernels.

\subsection{Vector-valued Gaussian processes}

We recap the theory of vector-valued Gaussian processes and prove the equivalence of the posterior mean function and the vector-valued KR solution. We denote by $\mathcal{N}(m,\Sigma)$ the multivariate normal distribution with mean vector $m$ and covariance matrix $\Sigma$.

\begin{definition}[vector-Valued Gaussian Process]
\label{def:vgp}
  We say $g:E\to\mathbb{R}^A$ is a vector-valued Gaussian process with mean function $m=(m_1,\dots,m_A)^\top:E\to\mathbb{R}^A$ and kernel function
  $K(x,y):E\times E\to\mathbb{R}^{A\times A}$ if and only if for any $\bm x = (x_1,\dots,x_{N})^\top$
  \begin{equation*}
    \vect(g^\top(\bm x))\sim\mathcal{N}(\vect(m^\top (\bm x)),\bm K)
  \end{equation*}
  with $m^\top (\bm x)=(m_1(\bm x),\dots, m_A(\bm x))\in\mathbb{R}^{N\times A}$ and $\bm K$ as in \eqref{eq:kernel_matrix}.
  In this case we write $g\sim\mathcal{MG}(m, K)$.
\end{definition}
\begin{remark}
  There is no restriction to use $\bm x$ across all components of $g$. One can formulate a Gaussian process for any finite collection
  of points $\{\bm x_1,\dots,\bm x_n\}$, $\bm x_i\in\mathbb{R}^N$, such that $(g_1(\bm x_1),\dots,g_A(\bm x_n))\in\mathbb{R}^{N\times A}$.
\end{remark}

We replicate the results \cite[Section A.4]{Filipovic2022} for the vector-valued case, which is straightforward. For this we assume that $g$ is a vector-valued Gaussian process with mean function $m$ and kernel function $K(x,y)$, i.e.,
$g\sim\mathcal{MG}(m, K)$.
The pricing equation with errors is given by equation \eqref{eq:pricing_eq_error}
where we assume $\bm \epsilon\sim\mathcal{N}(0,\bm \Sigma)$ with
\[\bm{\Sigma}=
  \begin{pmatrix}
    \Sigma_1 & 0 & \dots  & 0\\
    0 &  \Sigma_2 & \dots   & 0 \\
    \vdots & \vdots  & \ddots &\vdots\\
    0 & \cdots & 0 & \Sigma_A
  \end{pmatrix} \in\mathbb{R}^{M\times M}\]
for symmetric positive definite $M_a\times M_a$-matrices $\Sigma_a$.

For $n$ arbitrary cash flow dates $\bm z=(z_1,\dots,z_n)^\top$ this implies that $\vect(g^\top(\bm z))$ and $\bm P$ are
jointly Gaussian distributed
\begin{equation}\label{eq:joint_distr}
  \begin{pmatrix}
    \vect(g^\top(\bm z))  \\
    \bm P
  \end{pmatrix}
  \sim\mathcal{N}\left(
    \begin{pmatrix}
      \vect(m^\top (\bm z))\\
      \bm C \vect(m^\top (\bm x))
    \end{pmatrix},
    \begin{pmatrix}
      K(\bm z, \bm z^\top)& K(\bm z, \bm{x}^\top)\bm C^\top\\
      \bm C K(\bm{x}, \bm z^\top) & \bm C\bm K \bm C^\top + \bm\Sigma
    \end{pmatrix}\right)
\end{equation}
where $K(\bm{x}, \bm z^\top)$ is the block matrix with entries $K(x_i,z_j)$, similar for $K(\bm z, \bm z^\top)$
such that $\bm K = K(\bm x, \bm x^\top)$.

Bayesian updating implies that the conditional distribution of $g$, given the observed prices $\bm P$, is still vector-valued Gaussian with posterior mean function
\begin{equation}
  \label{eq:post_mean}
  m^{\text{post}}(z) = m(z)+K(z,\bm x^\top)\vect(\beta^\top),
\end{equation}
with
\begin{equation}
\label{eq:mean_sol}
  \vect(\beta^\top) = \bm C^\top(\bm C \bm K \bm C^\top +\bm\Sigma)^{-1}(\bm P-\bm C \vect(m^\top(\bm x)),
\end{equation}
and posterior kernel function
\begin{equation*}
  K^{\text{post}}(y, z) = K(y, z)-K(y, \bm{x}^\top)\bm C^\top(\bm C \bm K\bm C^\top +\bm\Sigma)^{-1}\bm C K(\bm{x},z).
\end{equation*}
Hence we recovered the following vector-valued version of \cite[Lemma 9]{Filipovic2022}.

\begin{theorem}
  \label{thm:vect_post_gaussian_krr}
  Suppose the kernel $K$, the prior mean function $m=p$ and $\bm\Sigma = \bm\Lambda$ are as in Theorem \ref{thm:vect_krr}. Then
  the posterior mean function \eqref{eq:post_mean} coincides with the KR estimator $\bar g(z)$ in Theorem \ref{thm:vect_krr}.
\end{theorem}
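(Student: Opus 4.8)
The plan is to show that two closed-form expressions coincide: the Gaussian-process posterior mean \eqref{eq:post_mean}--\eqref{eq:mean_sol} and the kernel ridge estimator $\bar g = p + \bar h$ of Theorem~\ref{thm:vect_krr}. Since the posterior formulas are already derived in the text preceding the statement, the proof reduces to specializing them under the hypotheses $m = p$ and $\bm\Sigma = \bm\Lambda$ and matching the two objects termwise. If a self-contained argument is preferred, I would first record the joint law \eqref{eq:joint_distr}: by Definition~\ref{def:vgp} the stacked evaluations $\vect(g^\top(\bm z))$ and $\vect(g^\top(\bm x))$ are jointly normal, and because $\bm P = \bm C\,\vect(g^\top(\bm x)) + \bm\epsilon$ is an affine image of $\vect(g^\top(\bm x))$ plus independent noise $\bm\epsilon\sim\mathcal N(0,\bm\Sigma)$, the pair $(\vect(g^\top(\bm z)),\bm P)$ is jointly Gaussian, with the mean and covariance blocks of \eqref{eq:joint_distr} read off from $\E[\bm P] = \bm C\,\vect(m^\top(\bm x))$, $\operatorname{Cov}(\bm P) = \bm C\bm K\bm C^\top + \bm\Sigma$, and $\operatorname{Cov}(\vect(g^\top(\bm z)),\bm P) = K(\bm z,\bm x^\top)\bm C^\top$. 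The posterior mean \eqref{eq:post_mean} and coefficient \eqref{eq:mean_sol} then follow from the standard Gaussian conditioning formula.

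The decisive step is purely algebraic. Substituting $m = p$ and $\bm\Sigma = \bm\Lambda$ into \eqref{eq:mean_sol} yields
\[
\vect(\beta^\top) = \bm C^\top\bigl(\bm C\bm K\bm C^\top + \bm\Lambda\bigr)^{-1}\bigl(\bm P - \bm C\,\vect(p^\top(\bm x))\bigr),
\]
which is verbatim the coefficient vector of Theorem~\ref{thm:vect_krr}. Hence both constructions produce the same $\vect(\beta^\top)$, and therefore the same $\beta = (\beta_1,\dots,\beta_N)\in\R^{A\times N}$.

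It then remains to reconcile the two ways the estimator is written: the posterior mean reads $m^{\text{post}}(z) = p(z) + K(z,\bm x^\top)\vect(\beta^\top)$, while the KR estimator reads $\bar g(z) = p(z) + \sum_{j=1}^N K(z,x_j)\beta_j$. I would verify the identity
\[
K(z,\bm x^\top)\,\vect(\beta^\top) = \sum_{j=1}^N K(z,x_j)\,\beta_j
\]
by comparing components: $K(z,\bm x^\top)$ is the $A\times AN$ block row whose $(a,(b,j))$-entry is $K_{ab}(z,x_j)$, and since $\vect(\beta^\top)$ orders its entries component-first and point-second, consistently with the block layout of $\bm K$ in \eqref{eq:kernel_matrix}, the $a$-th component of each side equals $\sum_{b=1}^A\sum_{j=1}^N K_{ab}(z,x_j)\beta_{bj}$. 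This gives $m^{\text{post}}(z) = \bar g(z)$ for every $z$ and completes the proof. The only genuine obstacle is this final bookkeeping: one must keep the block and vectorization conventions exactly aligned so that the index orderings match. Once the hypotheses force the two coefficient vectors to be literally identical, nothing more is needed.
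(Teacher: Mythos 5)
Your proposal is correct and follows essentially the same route as the paper: the paper's argument is exactly the derivation of the joint Gaussian law \eqref{eq:joint_distr}, Gaussian conditioning to obtain \eqref{eq:post_mean}--\eqref{eq:mean_sol}, and the observation that under $m=p$, $\bm\Sigma=\bm\Lambda$ the coefficient vector \eqref{eq:mean_sol} is verbatim that of Theorem~\ref{thm:vect_krr}. Your additional verification that $K(z,\bm x^\top)\vect(\beta^\top)=\sum_{j=1}^N K(z,x_j)\beta_j$ under the component-major vectorization convention is a correct (and welcome) spelling-out of bookkeeping the paper leaves implicit.
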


The posterior mean is invariant with respect to scaling of $K$ and $\bm \Sigma$ by a factor $s>0$.
That is to replace $K$ by $K'=sK$ and $\bm\Sigma'=s\bm\Sigma$.
Similar as in \cite{Filipovic2022} one can use \eqref{eq:joint_distr} to derive at a prior log-likelihood function of $s$ 
given prices $\bm P$,
\begin{equation*}
  \mathcal{L}(s)=-q_2\frac{1}{s}-\frac{M}{2}\log{(s)} - q_1,
\end{equation*}
for $q_2 = \frac{1}{2}(\bm P- \bm C \vect(m^\top(\bm x))^\top(\bm C\bm K\bm C^\top +\bm \Sigma)^{-1}(\bm P- \bm C \vect(m^\top(\bm x)))$
and $q_1=\frac{1}{2}\log|\bm C\bm K\bm C^\top+\bm \Sigma| +\frac{M}{2}\log(2\pi)$.
The maximum log-likelihood is attained for
\begin{equation*}
  \hat{s} = \frac{2q_2}{M}.
\end{equation*}

\begin{remark}
When $\bm K_{\bm {ab}} = 0$ for all $a\neq b$ the posteriori mean estimator corresponds to $A>1$ independent scalar learned mean estimators. This can be seen from \eqref{eq:mean_sol} as in this case the block diagonal structure of $\bm K$ factors through, given that the matrices $\bm C$ and  $\bm \Sigma$ are blockdiagonal by definition. However, the confidence bands might differ as $\hat s$ does. In the scalar case the optimal scaling is given by $\hat s_a = \frac{2q_{2,a}}{M_a}$, for the respective value $q_{2,a}$. On the other hand, for the transfer learning case it holds by definition $M=\sum_a M_a$, and $\bm K_{\bm {ab}} = 0$ implies $q_2 = \sum_a q_{2,a}$. Hence in general the scaling factors differ, $\frac{q_{2,a}}{M_a}\neq \frac{\sum_b q_{2,b}}{\sum_b M_b}$, for individual classes $a$. 
\end{remark}

\subsection{Gaussian Process View for Separable Kernels}
The Gaussian process view reveals some additional interpretation for separable kernels, see Definition~\ref{def:separable_kernel}. In particular, one can use the theory of Gaussian matrix variate distributions to get some additional insights how different components of $g$ are correlated to each other. The key findings are given below. We first recall the definition and some basic properties of matrix variate
Gaussian distributions.
\begin{definition}
  The random matrix $X\in\mathbb{R}^{N\times A}$ is said to have a matrix variate Gaussian distribution with mean matrix
  $M\in\mathbb{R}^{N\times A}$, covariance matrices $\Sigma\in\mathbb{R}^{N\times N}$ and $B\in\mathbb{R}^{A\times A}$ if and only if
  the probability density function is given by
  \begin{equation*}
    p(X|M,\Sigma,B) = (2\pi)^{-\frac{AN}{2}}(\det{\Sigma})^{-\frac{A}{2}}(\det{B})^{-\frac{N}{2}}\exp{\left(-\frac{1}{2}\operatorname{tr}\left({B^{-1}(X-M)^\top\Sigma^{-1}(X-M)}\right)\right)}
  \end{equation*}
  We denote a matrix variate Gaussian distributed $X$ as $X\sim\mathcal{MN}(M,\Sigma,B)$.
\end{definition}

It holds that $X\sim\mathcal{MN}(M,\Sigma,B)$ if and only if $\vect(X)\sim\mathcal{N}(\vect(M),B\otimes \Sigma)$, see \cite[Theorem 2]{Chen2019}. This again implies that the transpose $X^\top\sim\mathcal{MN}(M^\top,B,\Sigma)$, see \cite[Theorem 1]{Chen2019}. Hence, in view of Definition \ref{def:vgp}, for a separable kernel $K(x,y)=B k(x,y)$, we have that $g\sim\mathcal{MG}(m, K)$ is equivalent to $g^\top(\bm x)\sim\mathcal{MN}(m^\top(\bm x),\bm k, B)$ for $\bm K=B\otimes \bm k$ and $m^\top(\bm x)=(m_1(\bm x),\dots, m_A(\bm x))$, and where $\bm k$ denotes the matrix with entries $\bm k_{ij}=k(x_i,x_j)$.

This leads to a natural interpretation of the variance and covariance structure of the discount curves. From the above, we obtain $\operatorname{Var}(g_a(x))=B_{aa}k(x,x)$ and $\operatorname{Cov}(g_a(x),g_b(y)) =B_{ab}k(x,y)$. The separable kernel structure allows us to interpret each entry \( B_{ab} \) as the covariance between product classes \( a \) and \( b \), scaled by the scalar kernel \( k(x,y) \), which reflects the maturity effect and is independent of the product class. The correlation is obtained by normalization. More details and a general decomposition result for matrix-valued kernels are provided in Lemmas~\ref{lemnormdec} and \ref{lem:normalizedsep} in the appendix.

\section{A Workable Class of Separable Kernels}
\label{sec:sep_kernels}

Overall, the framework developed in the previous two sections provides a direct extension of the single-curve setup in \cite{Filipovic2022} to multiple product classes. Formally, when setting $A=1$, the framework reduces exactly to that of \cite{Filipovic2022}.

In this section, we introduce the baseline model used in the empirical analysis below. The formulation is guided by economic reasoning following \cite{Filipovic2022} and leads to a tractable optimization problem with a closed-form solution. We proceed in two steps. First, we heuristically construct a joint estimation objective that incorporates spread penalties between curves. Second, we show that the resulting problem is equivalent to a vector-valued KR estimator with a separable matrix-valued kernel.

We begin with $A$ scalar-valued estimation problems, each for a fixed-income product class $a = 1, \dots, A$,
\[
  \min_{h_a\in\mathcal{H}_k} \sum_{i=1}^{M_a}\omega_{a,i}\big( P_{a,i}-C_{a,i}p_a(\bm x) -C_{a,i} h_a(\bm x)\big)^2 +  \gamma_a\|h_a\|^2_{\mathcal{H}_k},
\]
where $k$ is a common scalar kernel with RKHS $\Hcal_k$, and $\gamma_a > 0$ is the regularity hyperparameter for class $a$. Each problem yields an individual estimator $h_a$. Estimating the $A$ curves independently is equivalent to solving the joint optimization problem
\[
  \min_{h_1,\dots,h_A\in\mathcal{H}_k} \sum_{a=1}^A \left\{ \sum_{i=1}^{M_a}\omega_{a,i}( P_{a,i}-C_{a,i}p_a(\bm x) -C_{a,i} h_a(\bm x))^2 + \gamma_a\|h_a\|^2_{\mathcal{H}_k} \right\}.
\]
This can be viewed as a single objective over the product space $(\mathcal{H}_k)^A$.

To introduce dependencies across product classes, we extend the regularization to the differences between curves. Specifically, we add spread penalties of the form
\[
  \sum_{a=1}^A\sum_{b>a}\Theta_{ab}\|h_a - h_b\|^2_{\mathcal{H}_k},
\]
where $\Theta_{ab} \ge 0$ controls the strength of transfer learning between classes $a$ and $b$.\footnote{We also considered adjusting the individual regularization parameters $\gamma_a$ downward to keep the total regularization weight constant when adding spread penalties. This corresponds to choosing $\lambda < 1$ in \eqref{eq:krr_obj}. However, in our empirical studies we found that such scaling can introduce irregularities in the estimated discount curves, which is undesirable. We therefore recommend keeping the values of $\gamma_a$ fixed and setting $\lambda = 1$ to achieve a well-balanced and effective transfer learning outcome, as stated in Theorem~\ref{thm_vkr}.} These terms encourage similarity between curves without forcing equality. Instead, they penalize irregularities in the spread curves through the RKHS norm $\|\cdot\|_{\mathcal{H}_k}$. We use the terms regularity and smoothness interchangeably, referring specifically to the notion of smoothness induced by the RKHS norm $\|\cdot\|_{\mathcal{H}_k}$. The following example presents a kernel $k$ that encodes an economically meaningful notion of smoothness introduced in \cite{Filipovic2022}.

\begin{example}
\label{ex:kernel}
Consider the scalar kernel
\begin{equation} \label{eqkSobol}
  k(x,y) =  -\frac{\min\{x,y\}}{\alpha^2}\e^{-\alpha \min\{x,y\}}+\frac{2}{\alpha^3}\left( 1 - \e^{-\alpha\min\{x,y\}}\right)-\frac{\min\{x,y\}}{\alpha^2}\e^{-\alpha \max\{x,y\}},
\end{equation}
with maturity-weight hyperparameter $\alpha>0$.\footnote{The RKHS introduced in \cite{Filipovic2022} is more flexible, as its norm~\eqref{rkhs_norm} includes both first- and second-order derivatives. However, their empirical analysis on US\ data finds that only the second-order term is relevant for the performance of the KR estimator. \cite{CamenzindFilipovic2024} confirm this finding for Swiss data as well.} \cite{Filipovic2022} show that the corresponding RKHS $\mathcal{H}_k$ is a weighted Sobolev space consisting of twice weakly differentiable functions $h:[0,\infty)\to\mathbb{R}$ with $h(0)=0$, $\lim_{x\to\infty} h'(x)=0$, and finite smoothness norm given by
\begin{equation}
\label{rkhs_norm}
      \|h\|_{\mathcal{H}_k}^2 =  \int_0^\infty h''(x)^2 \e^{\alpha x}\,dx .
\end{equation}
\end{example}

The complete transfer learning problem is
\begin{equation}
  \label{eq:multi_curve_motivated}
  \min_{h_1,\dots,h_A\in\mathcal{H}_k}\sum_{a=1}^A\Bigl\{\sum_{i=1}^{M_a}\omega_{a,i}( P_{a,i}-C_{a,i}p_a(\bm x) -C_{a,i} h_a(\bm x))^2 + \gamma_a\|h_a\|^2_{\mathcal{H}_k}+\sum_{ b>a}\Theta_{ab}\|h_a-h_b\|_{\Hcal_k}^2\Bigr\}.
\end{equation}
The following theorem shows that \eqref{eq:multi_curve_motivated} can be interpreted as a vector-valued KR with a separable kernel. This implies in particular that the optimization problem is convex and admits a unique solution.

\begin{theorem}
\label{thm_vkr}
Let $\Theta_{ab} \ge 0$ for $a < b$, and define $\Theta_{ba} = \Theta_{ab}$. Then the transfer learning problem \eqref{eq:multi_curve_motivated} is equivalent to the vector-valued KR problem \eqref{eq:krr_obj} with $\lambda=1$ and vector-valued RKHS norm
\begin{equation}
\label{eq:graph_norm}
\|h\|^2_{\mathcal{H}} = \sum_{a=1}^A \gamma_a \|h_a\|^2_{\mathcal{H}_k} + \sum_{a=1}^A \sum_{b>a} \Theta_{ab} \|h_a - h_b\|^2_{\mathcal{H}_k}
\end{equation}
which corresponds to the separable reproducing kernel $K(x,y) = B k(x,y)$, where $B = Q^{-1}$ and $Q \in \mathbb{R}^{A \times A}$ is defined by
\begin{equation}
\label{eqgraphreg}
Q_{ab} = \begin{cases}
\gamma_a + \sum_{j \neq a} \Theta_{aj}, & \text{if } a = b, \\
- \Theta_{ab}, & \text{if } a \ne b.
\end{cases}
\end{equation}
\end{theorem}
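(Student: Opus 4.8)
The plan is to exploit the fact that the data-fidelity term is literally identical in the two optimization problems (eq:multi\_curve\_motivated) and (eq:krr\_obj) with $\lambda=1$, so the whole equivalence collapses to showing that the penalty (eq:graph\_norm) coincides with the vector-valued RKHS norm induced by the separable kernel $K(x,y)=B\,k(x,y)$ with $B=Q^{-1}$. I would split this into an elementary algebraic identity, a positive-definiteness check, and one genuinely structural appeal to the separable-kernel norm decomposition.

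First I would expand each spread penalty via $\|h_a-h_b\|_{\mathcal{H}_k}^2=\|h_a\|_{\mathcal{H}_k}^2-2\langle h_a,h_b\rangle_{\mathcal{H}_k}+\|h_b\|_{\mathcal{H}_k}^2$ and collect terms. Using the convention $\Theta_{ba}=\Theta_{ab}$, the coefficient of $\|h_a\|_{\mathcal{H}_k}^2$ aggregates to $\gamma_a+\sum_{b\neq a}\Theta_{ab}$, while for $a\neq b$ the total coefficient of $\langle h_a,h_b\rangle_{\mathcal{H}_k}$ is $-2\Theta_{ab}$, which splits as $-\Theta_{ab}$ across the symmetric pair $(a,b),(b,a)$. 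This produces exactly
\[
\|h\|_{\mathcal{H}}^2=\sum_{a,b=1}^A Q_{ab}\,\langle h_a,h_b\rangle_{\mathcal{H}_k},
\]
with $Q$ as in (eqgraphreg). This step is pure bookkeeping and carries no real difficulty.

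Next I would verify that $Q$ is symmetric positive definite, so that $B=Q^{-1}$ exists and is itself positive definite, making $K=B\,k$ a bona fide matrix-valued kernel. Writing $Q=\operatorname{diag}(\gamma_1,\dots,\gamma_A)+L$, where $L$ is the combinatorial graph Laplacian with nonnegative edge weights $\Theta_{ab}$, the positive semidefiniteness of $L$ together with $\gamma_a>0$ gives $x^\top Q x\ge\sum_a\gamma_a x_a^2>0$ for $x\neq 0$. Positive definiteness of $B$ then guarantees, via Moore's theorem in the vector-valued setting (or the construction in the Appendix), that $K=B\,k$ is the reproducing kernel of a well-defined vector-valued RKHS $\mathcal{H}$. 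The crux is then to identify the quadratic form $\sum_{a,b}Q_{ab}\langle h_a,h_b\rangle_{\mathcal{H}_k}$ with the intrinsic norm of that RKHS, for which I would invoke the separable-kernel norm decomposition (Lemma~\ref{lemnormdec}): the RKHS of $K=B\,k$ consists precisely of those $h$ whose components $h_a$ lie in the common $\mathcal{H}_k$, with squared norm $\sum_{a,b}(B^{-1})_{ab}\langle h_a,h_b\rangle_{\mathcal{H}_k}$. Since $B^{-1}=Q$, this matches the form derived above, so (eq:graph\_norm) is indeed $\|h\|_{\mathcal{H}}^2$ for the claimed kernel; equivalence of the two minimizations follows, and convexity together with existence and uniqueness of the minimizer are inherited directly from Theorem~\ref{thm:vect_krr}.

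The main obstacle I anticipate is the norm-decomposition step: establishing that for a separable kernel the vector-valued RKHS norm factorizes as $\sum_{a,b}(B^{-1})_{ab}\langle h_a,h_b\rangle_{\mathcal{H}_k}$ is the genuinely nontrivial ingredient (the generalization of the result from~\cite{baldassarre:multi_output_learning} flagged in the introduction), whereas the coefficient matching and the positive-definiteness argument are routine. Some care is also needed to confirm that the decomposition lemma applies under the present hypotheses, in particular that $\mathcal{H}_k$ is the same scalar RKHS for every component and that admitting a general positive definite $B$, rather than a normalized correlation matrix, is permitted.
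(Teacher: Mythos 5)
Your proposal is correct and takes essentially the same route as the paper: the paper's proof likewise defines $Q$ by \eqref{eqgraphreg}, checks symmetric positive definiteness (via strict diagonal dominance of $Q$, equivalent to your Laplacian bound $x^\top Q x \ge \sum_a \gamma_a x_a^2 > 0$, which the paper also notes via $Q=\operatorname{diag}(\gamma)+L(\Theta)$), and then invokes the separable-kernel norm decomposition, with your expansion of the spread penalties being exactly the rearrangement used inside the proof of part \ref{thmRHKSsepNEW4} of that decomposition result. The only correction: the structural result you need is Theorem~\ref{thmRHKSsepNEW} (parts \ref{thmRHKSsepNEWX2}, \ref{thmRHKSsepNEW3} and \ref{thmRHKSsepNEW4}, the last two giving the quadratic form $\sum_{a,b}Q_{ab}\langle h_a,h_b\rangle_{\Hcal_k}$ and the identification $\Hcal=(\Hcal_k)^A$ for non-singular $B$), not Lemma~\ref{lemnormdec}, which is the kernel-normalization decomposition and plays no role here.
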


The hyperparameters $\Theta_{ab}$ may be interpreted as edge weights on a graph with $A$ nodes, each corresponding to a product class. The matrix $Q$ equals the sum of the diagonal matrix of $\gamma_a$ and the Laplacian of the graph, $Q = \mathrm{diag}(\gamma) + L(\Theta)$. This formulation is known as graph regularization in the literature, see \cite{baldassarre:multi_output_learning} and \cite{sheldon:graph_multi_task_learning}.

In sum, in conjunction with the scalar kernel \eqref{eqkSobol}, this specification includes the following hyperparameters: $\alpha$ (scalar kernel parameter), $\gamma_a$ (discount curve smoothness), $\Theta_{ab}$ (spread smoothness).

\section{Standard Fixed-Income Products}
\label{sec:std_product_formulation}
This section shows how standard fixed-income instruments can be expressed in the discounted cash flow format~\eqref{eq:pricing_eq}, thereby enabling application of our estimation framework. While this formulation may lead to distinct discount curves across different product classes, we demonstrate in Appendix~\ref{appendix:na} how, under an arbitrage-free pricing framework, a single risk-free curve and the corresponding function~$g$ may be recovered.

We proceed as follows: we first show how coupon bonds can be cast into the pricing format~\eqref{eq:pricing_eq}, then extend this formulation to fixed--floating interest-rate swaps. Finally, we examine cross-currency swaps and show how transfer learning facilitates joint estimation of discount curves and forward exchange rates, offering insights into multi-currency pricing.

\subsection{Coupon Bonds}

The transformation of fixed-coupon bonds into the discounted cash flow format~\eqref{eq:pricing_eq} is straightforward. Consider a bond with notional normalized to one and coupons $c_1,\dots,c_n$ paid at dates $0 < T_1 < \dots < T_n$, where $T_n$ denotes the bond's maturity at which the notional is paid.\footnote{The generic time grid $(x_i)$ used in~\eqref{eq:pricing_eq} is assumed to be fine enough to cover all potential cash flow dates across product classes. Hence, most entries in each row of $\bm{C}$ are zero.} Assuming the bond is default-free, the price is given by
\[
P = \sum_{j=1}^n c_j g(T_j) + g(T_n).
\]
Defaultable bonds are treated in Appendix~\ref{appendix:na} under an arbitrage-free pricing framework.

\subsection{Interest-Rate Swaps}\label{sec:swaps}
We consider a standard fixed--floating interest-rate swap based on the risk-free rate (RFR) with start (first reset) date $T_0\ge 0$ and maturity date $T_n$. We denote the reset and cash flow dates of the fixed payments leg by $T_0<T_1<\dots<T_n$ and of the floating payments leg by $T_0 = t_0<t_1<\dots<t_m=T_n$. For simplicity, the accrual periods along both legs are assumed to be constant and denoted by $\Delta = T_i-T_{i-1}$ and $\delta=t_i-t_{i-1}$, respectively.\footnote{This can be generalized to specific day count conventions for both legs where the accrual periods depend on the actual dates, replacing the constant $\Delta$ and $\delta$ by $\Delta(T_{i-1},T_i$) and $\delta(t_{j-1},t_j)$, respectively.}
The swap is spot starting when
$T_0=0$ and forward starting when $T_0>0$. 

The present values of the fixed and floating legs are given by
\begin{align*}
  PV_{\text{fixed}} &= \Delta R\sum_{i=1}^n g(T_i) ,\\
  PV_{\text{floating}} &= g(T_0) - g(T_n),
\end{align*}
where $R$ denotes the corresponding fixed swap rate. The derivation follows from standard no-arbitrage arguments and is provided in Appendix~\ref{appendix:na} for completeness. At inception, the swap has zero value, so that $PV_{\text{floating}}=PV_{\text{fixed}}$. We bring this into the desired format \eqref{eq:pricing_eq} as follows. For a spot-starting swap, $T_0=0$, the price is set to $P=1$, which gives
\begin{equation} 
  \label{sswapcf}
  1 = g(T_n) +\Delta R\sum_{i=1}^n g(T_i).
\end{equation}
For a forward-starting swap, $T_0>0$, the price is set to $P=0$, which gives
\begin{equation} 
  \label{fswapcf}
  0 = g(T_n)-g(T_0) +\Delta R\sum_{i=1}^n g(T_i).
\end{equation}

Based on \eqref{sswapcf} and \eqref{fswapcf}, the YTM $Y$ of the swap can be derived as defined in Example~\ref{ex:weights}. The following result links the YTM $Y$ to the swap rate $R$.
\begin{lemma}
\label{lemYR}
If $T_j - T_{j-1} \equiv \Delta$ for all $j=1, \dots, n$, then $\Delta Y = \log(1 + \Delta R)$. That is, in first order the YTM equals the swap rate, $Y \approx R$.
\end{lemma}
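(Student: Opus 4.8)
The plan is to translate the swap's cash-flow representation into the defining equation for the YTM and then verify by direct computation that $e^{\Delta Y} = 1 + \Delta R$ solves it. First I would read off the cash flows from the pricing identities \eqref{sswapcf} and \eqref{fswapcf}. For a spot-starting swap ($T_0 = 0$, price $P=1$) the cash flows are $\Delta R$ at each $T_i$, $i = 1, \dots, n-1$, together with $\Delta R + 1$ at $T_n$ (the last fixed coupon plus notional); for a forward-starting swap ($T_0>0$, price $P=0$) there is one additional cash flow of $-1$ at $T_0$, coming from the $-g(T_0)$ term. By the definition of YTM in Example~\ref{ex:weights}, $Y$ is the rate for which discounting these cash flows at the constant continuously-compounded rate $Y$ reproduces the price, i.e.\ $\sum_j C_j e^{-Y T_j} = P$.

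Next I would exploit the equal spacing $T_j = T_0 + j\Delta$. In the spot case the YTM equation is $1 = \Delta R \sum_{j=1}^n e^{-Y j\Delta} + e^{-Y n\Delta}$. In the forward case, the common factor $e^{-Y T_0}$ can be pulled out of the YTM equation (legitimate since $e^{-Y T_0}\neq 0$ and the price is zero), reducing it to exactly the same relation. Thus both cases collapse to a single identity in the variable $q := e^{-Y\Delta}$, namely $1 = \Delta R \sum_{j=1}^n q^j + q^n$, which I then solve.

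I would verify that $q = 1/(1 + \Delta R)$ is the solution. With this choice $\Delta R\, q = 1 - q$, so that $\Delta R \sum_{j=1}^n q^j = (1-q)\sum_{j=1}^n q^{j-1} = 1 - q^n$ by telescoping the geometric sum; adding $q^n$ returns exactly $1$. Hence $e^{-Y\Delta} = 1/(1 + \Delta R)$, i.e.\ $\Delta Y = \log(1 + \Delta R)$. Uniqueness follows because $f(Y) := \Delta R \sum_{j=1}^n e^{-Y j\Delta} + e^{-Y n\Delta}$ is continuous and strictly decreasing from $+\infty$ to $0$ as $Y$ ranges over $\R$, so $f(Y)=1$ has exactly one root. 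Finally the first-order claim $Y \approx R$ follows from the Taylor expansion $\log(1 + \Delta R) = \Delta R + O((\Delta R)^2)$, giving $\Delta Y = \Delta R + O((\Delta R)^2)$.

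The computation is elementary; the only points requiring care are unifying the spot- and forward-starting cases (handled by factoring out $e^{-Y T_0}$) and bookkeeping the cash flows correctly, including the $-1$ at $T_0$ and the notional at $T_n$. I therefore do not anticipate a substantive obstacle beyond these routine verifications, with the telescoping identity being the one genuinely load-bearing step.
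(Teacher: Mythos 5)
Your proposal is correct and follows essentially the same route as the paper: both unify the spot- and forward-starting cases by factoring out $e^{-Y T_0}$, substitute $q = e^{-\Delta Y}$, and reduce the YTM equation to the geometric-series identity $1 = \Delta R \sum_{j=1}^n q^j + q^n$ solved by $q = 1/(1+\Delta R)$, i.e.\ $\Delta R = \frac{1-q}{q}$. The only (immaterial) differences are that the paper solves for $\Delta R$ directly rather than verifying the candidate root, and does not include your added uniqueness remark, whose monotonicity claim anyway implicitly assumes $R>0$.
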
 

The following example illustrates this for a single-period overnight swap.
\begin{example}
In the US, the overnight RFR is SOFR, here denoted by $R_{\text{SOFR}}$. Consider a single-period overnight swap maturing at $T_1 = \frac{1}{365}$. In view of \eqref{sswapcf}, its price as a function of YTM~$Y$ is given by $\Pi_{\text{SOFR}}(Y)=e^{-YT_1}(1+T_1R_{\text{SOFR}})$. The market-implied YTM $Y_{\text{SOFR}}$ is defined by $\Pi_{\text{SOFR}}(Y_{\text{SOFR}}) = 1$, which implies that
\[
Y_{\text{SOFR}} = \frac{1}{T_1} \log(1 + T_1 R_{\text{SOFR}}) \approx R_{\text{SOFR}},
\]
is equal to the SOFR up to first order. The derivative $\Pi'_{\text{SOFR}}(Y_{\text{SOFR}}) = -T_1$ then gives the corresponding duration-based weight in Example~\ref{ex:weights}.
\end{example}

\subsection{Cross-Currency Swaps}
Cross-currency swaps (XCCY) involve cash flows in two currencies and combine features of both interest rate and foreign exchange (FX) instruments. See, e.g., \cite{Ranaldo2022XCCY,ECBXCCY} for introductions. We denote the spot exchange rate prevailing at time $t$ as $X_{ab}(t)$, defined as the price of one unit of (base) currency $a$ in terms of (quote) currency $b$. 

A typical use case involves a domestic, say Swiss, firm that holds CHF and wishes to buy a USD-denominated bond. To hedge currency risk, the firm enters a XCCY swapping USD coupon payments against CHF cash flows.\footnote{Another example is two firms located in different countries with different currencies. Each exhibits cheaper local funding sources. To raise funds abroad they can enter into a bilateral XCCY.}

The most standardized and actively traded XCCY is the floating--floating type used in the interbank market. At the start date $t_0$, notional amounts in both currencies are exchanged at the prevailing spot exchange rate $X_{ab}(t_0)$. Thereafter, floating interest payments are made in each currency at dates $ t_0<t_1<\dots<t_m$, typically quarterly and based on RFRs. The initial notional amounts are re-exchanged at maturity date $t_m$. A basis spread $s$ is typically added to the less liquid currency leg to reflect liquidity differences and funding imbalances between the two currencies. Figure~\ref{fig:xccy_cf} illustrates this.
 
\begin{figure}[h!]
\centering
\tcapfig{Schematic cash flows of a floating--floating XCCY swap}
\begin{tikzpicture}[
    axis/.style={very thick},
    arrow/.style={-Latex, thick},
    floatwiggle/.style={decorate, decoration={zigzag, amplitude=0.5mm, segment length=2mm}, thick},
    >=Latex,
    every node/.style={font=\small}
]

\draw[axis] (-0.5,0) -- (11,0) node[right] {$t$};

\foreach \x/\label in {
    1.5/t_0, 
    3.5/t_1, 
    5.5/t_2,
    7.5/\dots,
    9.5/t_m}
    \draw (\x,0.1) -- (\x,-0.1)
          node[anchor=north east, xshift=-2pt, yshift=-2pt] {$\label$};

\node at (0.5,2.2) {Receive floating in leg $a$};
\node at (0.5,-2) {Pay floating in leg $b$};

\draw[->, thick, blue] (1.5,0) -- ++(0,2.0) node[midway, left] {$X_{ab}(t_0)$};  
\draw[->, thick, red] (1.5,0) -- ++(0,-1.5) node[midway, right] {$1$};  

\draw[floatwiggle, blue] (3.5,0) -- ++(0,-1.0);
\draw[->, thick, blue] (3.5,-1.0) -- ++(0,-0.2) node[midway, right] {};
\draw[->, thick, red] (3.7,0) -- ++(0,-0.5) node[near end, right] {$s$};

\draw[floatwiggle, red] (3.5,0) -- ++(0,1.3);
\draw[->, thick, red] (3.5,1.3) -- ++(0,0.2) node[midway, left] {};

\draw[floatwiggle, blue] (5.5,0) -- ++(0,-1.3);
\draw[->, thick, blue] (5.5,-1.3) -- ++(0,-0.2) node[midway, right] {};

\draw[floatwiggle, red] (5.5,0) -- ++(0,1.5);
\draw[->, thick, red] (5.5,1.5) -- ++(0,0.3) node[midway, left] {};
\draw[->, thick, red] (5.7,0) -- ++(0,-0.5) node[near end, right] {$s$};

\draw[floatwiggle, blue] (9.5,0) -- ++(0,-0.8);
\draw[->, thick, blue] (9.5,-0.8) -- ++(0,-0.2) node[midway, right] {};

\draw[floatwiggle, red] (9.5,0) -- ++(0,0.5);
\draw[->, thick, red] (9.5,0.5) -- ++(0,0.3) node[midway, left] {};

\draw[->, thick, red] (9.7,0) -- ++(0,-0.5) node[near end, right] {$s$};

\draw[->, thick, red] (10.1,0) -- ++(0,1.5) node[near end, right] {$1$};  
\draw[->, thick, blue] (10.1,0) -- ++(0,-2.0) node[near end, right] {$X_{ab}(t_0)$};  

\end{tikzpicture}
\label{fig:xccy_cf}
\bnotefig{The figure shows the cash flow diagram of a floating--floating XCCY swap. We take the view of receiving leg $a$ while making periodically payments in leg $b$. Thus, downward pointed arrows reflect a cash flow we have to pay. The wiggled lines denote the floating payments. Straight line are the exchange of notionals and basis spread payments. We assume the basis spread $s$ is added on leg $a$. It is common that this spread is negative, indicated by the downward pointed arrow. We use the convention to normalize the notional of leg $a$ to $1$ so that the corresponding notional of leg $b$ is given by $X_{ab}(t_0)$. }
\end{figure}

An additional feature common in interbank markets is mark-to-market (MTM) resets of the notional leg. These reduce counterparty risk but, as shown in Appendix~\ref{appendix:na}, have no impact on present values. 

End clients generally prefer fixed interest payments. Banks accommodate this by combining floating--floating XCCY with standard fixed--floating interest-rate swaps. This composite structure is also
necessary for estimating the discount curve using our framework. We focus on the non-liquid leg (currency $a$), and bring this now into the desired format \eqref{eq:pricing_eq}. Thereto, let $R_a$ be the fixed swap rate of a standard RFR-based swap in currency~$a$ with the same maturity as the XCCY. We assume this rate is observable from the market.\footnote{This is standard practice in well-developed swap markets.} More specifically, let $t_0=T_0 < T_1<\dots < T_n = t_m$ be the fixed leg's payment dates with constant accrual period $\Delta=T_i-T_{i-1}$, and assume that currency~$b$ is the more liquid leg, so the basis spread $s$ is added to leg~$a$. For a spot-starting XCCY, $T_0=0$, we set $P = 1$. The corresponding discounted cash flow equation becomes
\[
1 = g_{a:b}(T_n) + \Delta R_a \sum_{i=1}^n g_{a:b}(T_i) + \delta s \sum_{j=1}^m g_{a:b}(t_j).
\]
For a forward-starting XCCY, $T_0 > 0$, the price is set $P = 0$ and we obtain
\[
0 = g_{a:b}(T_n) - g_{a:b}(T_0) + \Delta R_a \sum_{i=1}^n g_{a:b}(T_i) + \delta s \sum_{j=1}^m g_{a:b}(t_j).
\]
Here, $g_{a:b}(\cdot)$ denotes the discount curve for currency~$a$ induced by currency~$b$ via an XCCY. It incorporates the cross-currency basis and is generally distinct from the discount curve $g_a(\cdot)$ that corresponds to standard interest-rate swaps in currency $a$. If the basis spread $s$ is zero, $g_{a:b}(\cdot)$ coincides with $g_a(\cdot)$.

An important byproduct of this formulation is an expression for the forward exchange rate that incorporates the cross-currency basis. Let $F_{ab}(x)$ denote the forward exchange rate fixed at time~$0$ for maturity $x$. Then
\begin{equation}\label{FXgg}
F_{ab}(x) = X_{ab}(0)\frac{g_{a:b}(x)}{g_b(x)}.
\end{equation}
This identity can be derived by considering a spot-starting XCCY in combination with an interest-rate swap with a single payment at $t_1 = T_1 = x$. Investing one unit of currency~$a$ via this XCCY and swapping the floating payment for fixed results in a fixed payoff of $\frac{1}{g_{a:b}(x)}$ units of currency~$a$ at maturity $x$. Alternatively, the same initial amount can be used to purchase $\frac{X_{ab}(0)}{g_b(x)}$ units of the discount bond with maturity $x$ in currency~$b$. At maturity, this yields a cash flow in currency~$b$, which is then converted back into currency~$a$ at the forward exchange rate $F_{ab}(x)$, resulting in a payoff of $\frac{X_{ab}(0)}{g_b(x) F_{ab}(x)}$ in currency~$a$. Since both strategies yield deterministic payoffs, the absence of arbitrage implies that they must be equal, which proves~\eqref{FXgg}.

\begin{remark}
Textbook covered interest parity (CIP) posits that $F_{ab}(x) = X_{ab}(0)\frac{g_a(x)}{g_b(x)}$. However, in practice, deviations from CIP are persistent, as $g_{a:b}(x) \ne g_a(x)$ due to liquidity differences and funding constraints. Most currencies exhibit a negative basis against USD, meaning counterparties are willing to accept a lower return to obtain USD funding, which manifests as $s < 0$ in observed XCCY swaps.
\end{remark}

\section{Empirical Analysis}
\label{sec:empirical_illustrations}

This section assesses the economic significance of transfer learning. We begin by describing the data used in the empirical analysis and the associated hyperparameter selection. We then illustrate the qualitative effects of transfer learning on the estimated yield and forward rate curves to build intuition for its impact, and examine the implications for estimation uncertainty from a Gaussian process perspective. Finally, we conduct an extensive masking experiment to evaluate the benefits of transfer learning in an out-of-sample setting. 

\subsection{Data and Hyperparameter Selection}
\label{sec:data_hyperparm_sel}

We use daily data on US government bonds and SOFR swaps obtained from Bloomberg Finance L.P. The sample spans January 2020 through December 2024. 

For US government bonds, we rely on daily end-of-day (mid) dirty prices and assume same-day settlement. The sample includes all fully taxable, non-callable coupon bonds and excludes Treasury bills. In contrast to \cite{gur_sac_wri_07}, we do not impose additional filters such as excluding short-maturity bonds. This choice allows our estimation framework to demonstrate robustness across the full observable cross section of government bonds.

Figure~\ref{fig:max_mat} illustrates the maturity distribution of US government bonds over the sample period.\footnote{A bond enters the sample as soon as it has at least one valid price observation, i.e., a non-NA (non-missing value). For clarity, missing observations on specific days are not displayed in the figure.} The dataset comprises 610 bonds with well dispersed maturities. The longest outstanding bonds have 30-year maturities and are issued regularly throughout the sample period. Although Treasury bills are excluded, shorter-dated coupon bonds still constitute a meaningful share of the sample, ensuring balanced coverage across the maturity spectrum.

\begin{figure}[h!]
  \centering
     \tcapfig{Maturities of US government bonds}
  \begin{subfigure}[t]{0.49\linewidth}
  \centering
    \includegraphics[width=\linewidth]{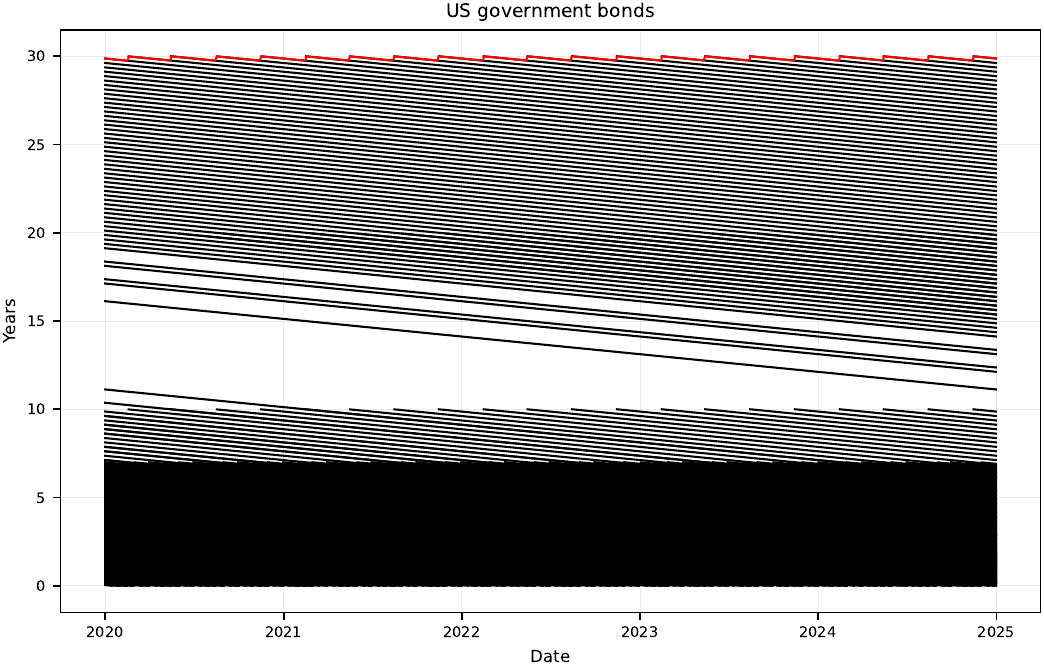}
  \end{subfigure}
  \bnotefig{The figure plots the available bonds and their respective maturities over the sample period. The red line marks the longest maturity among outstanding bonds at each point in time.}
 \label{fig:max_mat}
\end{figure}

For SOFR swaps, we use daily end-of-day (mid) swap rates. The floating leg of these interest-rate swaps references the Secured Overnight Financing Rate (SOFR) \cite{FedSofr}, which is a volume-weighted average of fully collateralized overnight repurchase (repo) transactions. SOFR has been published by the Federal Reserve Bank of New York since April~2018.

The available SOFR swap tenors are
1D, 1W, 2W, 3W, 1M, 2M, 3M, 4M, 5M, 6M, 7M, 8M, 9M, 10M, 11M, 1Y, 13M, 14M, 15M, 16M, 17M, 18M, 19M, 20M, 21M, 22M, 23M, 2Y, 27M, 30M, 33M, 3Y, 42M, 4Y, 54M, 5Y, 6Y, 7Y, 8Y, 9Y, 10Y, 11Y, 12Y, 15Y, 20Y, 25Y, 30Y, 35Y, 40Y, 45Y, and 50Y.
This maturity structure spans a wide range, from overnight to 50~years, with particularly dense coverage at the short end. Such granularity makes SOFR swaps a natural benchmark for transfer learning and a valuable source of information for extrapolating the US government bond discount curve up to 50Y.

For model evaluation, we use the root mean squared error (RMSE) of the YTM $Y$ for bonds (and, analogously, the swap rate $R$ for swaps) at time~$t$, defined as $\text{RMSE}_t \coloneqq \sqrt{\frac{1}{M_t}\sum_{i=1}^{M_t}\big(Y_{i,t}-\hat{Y}^g_{i,t}\big)^2}$, where $M_t$ denotes the number of instruments observed at time~$t$. Overall performance is summarized by the time-averaged $\text{RMSE} \coloneqq \frac{1}{T}\sum_{t=1}^T \text{RMSE}_t$. Results are reported both in aggregate and by maturity buckets: $\leq 1\text{Y}$, $(1\text{Y},5\text{Y}]$, $(5\text{Y},10\text{Y}]$, $(10\text{Y},15\text{Y}]$, $(15\text{Y},20\text{Y}]$, $(20\text{Y},25\text{Y}]$, $(25\text{Y},30\text{Y}]$, $(30\text{Y},35\text{Y}]$, $(35\text{Y},40\text{Y}]$, $(40\text{Y},45\text{Y}]$, and $(45\text{Y},50\text{Y}]$. In what follows, we use the terms fitting error and RMSE interchangeably. 

Figure~\ref{fig:bid_ask} displays the distribution of Bid--Ask spreads for US government bonds (left panel) and SOFR swaps (right panel) across maturity buckets.\footnote{The data use daily end-of-day bid and ask yields and swap rates sourced from Bloomberg Finance L.P.} For bonds, a logarithmic scale is applied to the shortest maturity bucket ($\leq 1$Y), while a linear scale is used for all remaining buckets. For swaps, all maturity buckets are displayed on a linear scale. Overall, both product classes exhibit tight Bid--Ask spreads, typically below 2~basis points (bps). The widest spread distribution is observed for bonds in the $\leq 1$Y maturity bucket.

\begin{figure}[ht!]
     \tcapfig{Bid--Ask yield and swap spreads}
     \begin{subfigure}[t]{0.49\linewidth}
        \centering
        \includegraphics[width=\linewidth]{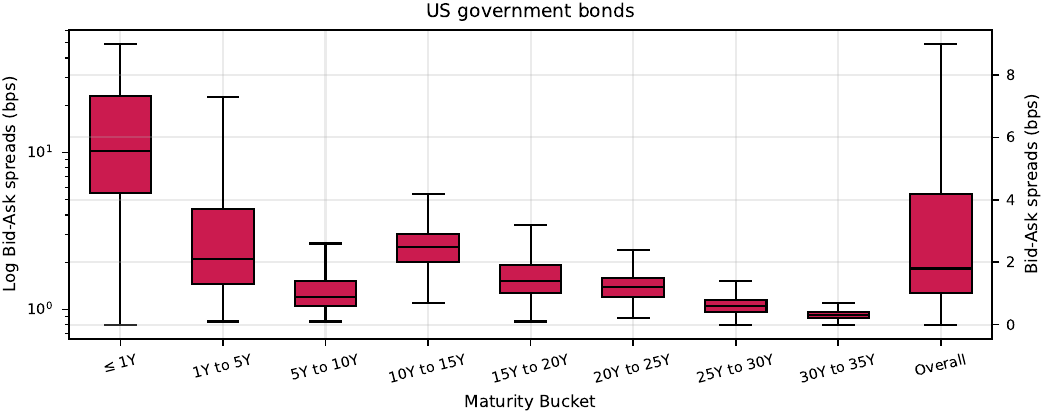}
     \end{subfigure}\hfill
     \begin{subfigure}[t]{0.49\linewidth}
        \centering
        \includegraphics[width=\linewidth]{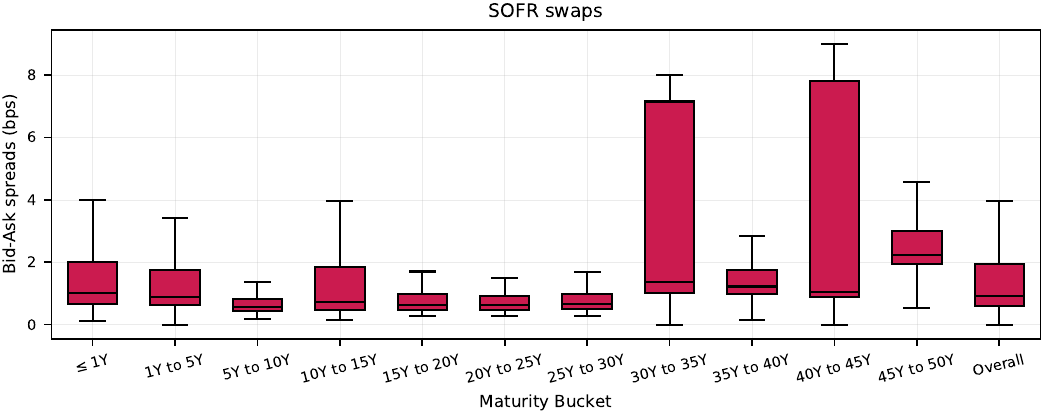}
     \end{subfigure}\vspace{0.3em}
     \bnotefig{Distribution of Bid--Ask yield spreads by maturity bucket for US~government bonds (left) and SOFR swaps (right). The box plots display interquartile ranges in red, with the rightmost bucket indicating the overall aggregate. All values are in bps.}
 \label{fig:bid_ask}
\end{figure}

Table~\ref{tab:bid_ask_stat} summarizes average Bid--Ask spreads for bond yields and swap rates. For US government bonds, we report two measures: the overall average and the average computed after excluding the shortest maturity bucket ($\leq 1$Y). Short-dated bonds exhibit disproportionately large yield movements in response to small price changes, which mechanically inflates observed Bid--Ask spreads. Excluding these maturities substantially lowers the average spread, while the median remains largely unchanged. These magnitudes provide natural benchmarks for assessing the economic relevance of our empirical results: improvements on the order of 2bps are economically negligible, whereas larger differences are economically significant.

\begin{table}[htbp]
  \centering
  \begin{tabular}{lccc}
    \toprule
     & \multicolumn{2}{c}{\textbf{US~government bonds}} & \textbf{SOFR swaps} \\
     & Overall & Excluding $\leq1$Y & Overall \\
    \midrule
    Average & 10.03 & 2.54 & 1.74 \\
    Median  & 1.80  & 1.50 & 0.91 \\
    \bottomrule
  \end{tabular}
  \caption{Average and median Bid--Ask spreads for US government bonds and SOFR swaps. All values are in bps.}
  \label{tab:bid_ask_stat}
\end{table}

All empirical results reported below are based on the estimation setup introduced in Section~\ref{sec:sep_kernels} with $A=2$ product classes. Throughout, we employ the duration-based weighting scheme from Example~\ref{ex:weights} in combination with the scalar kernel $k(x,y)$ specified in Example~\ref{ex:kernel}. As prior we use the constant function $p=1$. Consistent with the modular setup outlined in Section~\ref{sec:sep_kernels}, we proceed sequentially. We first select the kernel parameter $\alpha$ together with the standalone discount curve smoothness parameters $\gamma_a$. Given these choices, we then select the remaining spread smoothness parameter $\theta = \Theta_{12}$ through a masking experiment. 

Following \cite{Filipovic2022,CamenzindFilipovic2024}, selection of the standalone hyperparameters $\alpha$ and $\gamma_a$ is carried out using a daily leave-one-out cross-validation (LOOCV) procedure applied separately to bonds and swaps. We consider the following parameter grids: $\alpha \in \{0.01, 0.02, 0.03, 0.04, 0.05, 0.06, 0.07, 0.08, 0.09, 0.10\}$ and $\gamma \in \{0.01, 0.05, 0.1, 0.5, 1, 5, 10, 50, 100\}$. For notational convenience, values of $\gamma$ are scaled by $10^{-4}$, so that a reported value of~10 corresponds to $10^{-3}$ in the implementation, consistent with the convention adopted in \cite{CamenzindFilipovic2024}.

Figure~\ref{fig:heatmap} displays the aggregated RMSE in bps across hyperparameter combinations $(\gamma,\alpha)$. These heatmaps underscore the robustness of the method to hyperparameter variation, a desirable property that is consistent with the findings in previous literature. The RMSE-minimizing hyperparameters are $(\gamma,\alpha) = (0.1,0.01)$ for bonds and $(\gamma,\alpha) = (0.5,0.06)$ for swaps.\footnote{For US government bonds, \cite{Filipovic2022} report optimal standalone hyperparameters of $(\gamma,\alpha) = (1,0.05)$ based on a different sample and time period. They also employ a different scaling of $\gamma$, namely $1/(365\cdot x_N)$ for the longest available maturity $x_N$ per cross-section.} In light of the observed robustness, and to align with previous literature \cite{Filipovic2022}, we adopt the common choice $\alpha = 0.05$ and $\gamma = 1$ for both US government bonds and SOFR swaps. This parsimonious specification reduces tuning complexity without sacrificing empirical performance.

\begin{figure}[ht!]
     \tcapfig{LOOCV RMSE heatmaps}
     \begin{subfigure}[t]{0.49\linewidth}
        \centering
        \includegraphics[width=\linewidth]{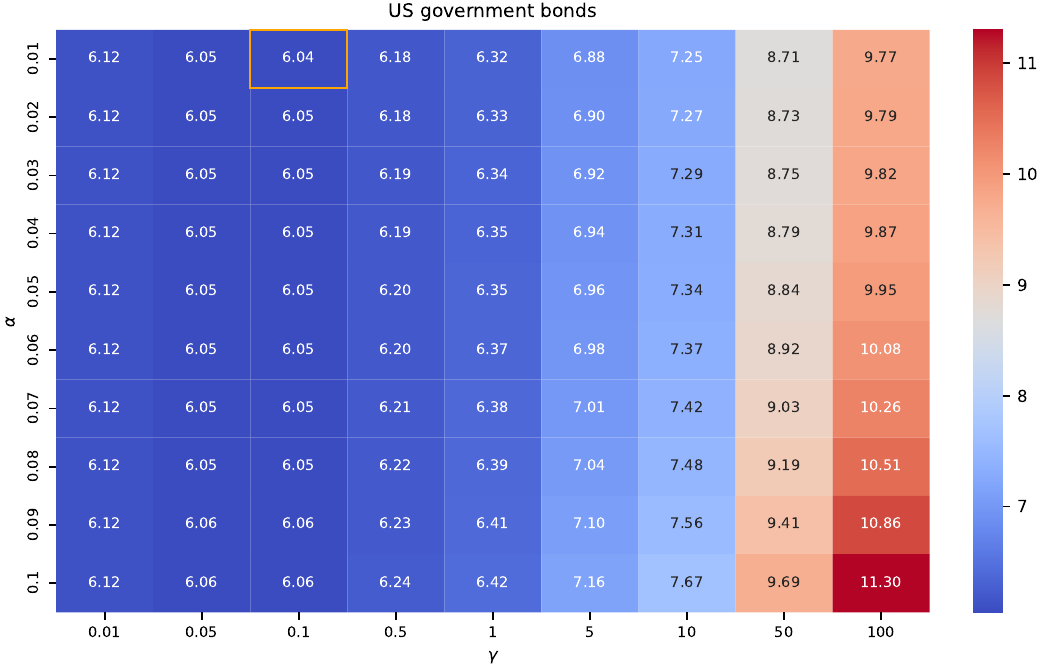}
     \end{subfigure}\hfill
     \begin{subfigure}[t]{0.49\linewidth}
        \centering
        \includegraphics[width=\linewidth]{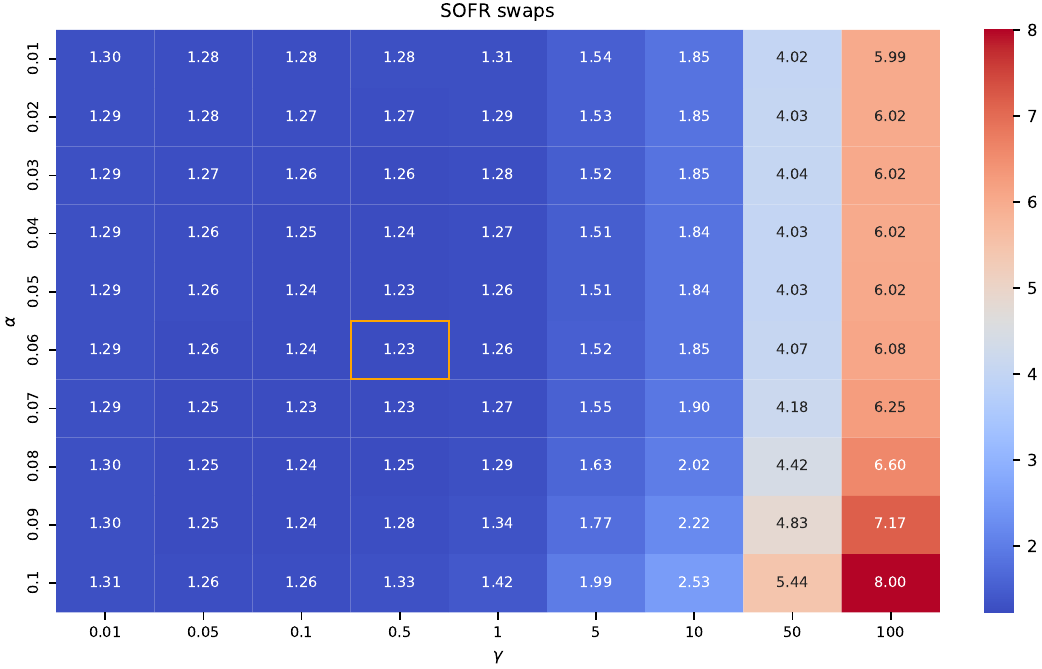}
     \end{subfigure}\vspace{0.3em}
     \bnotefig{Time-averaged YTM RMSE (left) and swap rate RMSE (right) for US government bonds and SOFR swaps. Dark blue areas denote low RMSE, red areas high RMSE, and the orange rectangle highlights optimal hyperparameter choices. All values are in bps.}
 \label{fig:heatmap}
\end{figure}

\subsection{Illustrative Yield and Forward Curves}
\label{sec:ill_yield_and_fwd_curves}

Before turning to the masking experiment used to select $\theta$, we first present illustrative examples to build intuition for its effects. To this end, given the standalone hyperparameters selected above, we apply transfer learning and jointly estimate the discount curves for bonds and swaps for a range of values of $\theta$.

Following \cite{CamenzindFilipovic2024}, we focus on the mid-June (nearest available) business day of each year in the sample. Figure~\ref{fig:example_day_2024} reports the resulting yield curves (left panel) and forward rate curves (right panel) for the most recent example day, 2024-06-14. Across rows, the transfer learning parameter $\theta$ increases from 0 (no transfer learning) to 10, 100, and 1000, illustrating its main effects.\footnote{Intermediate values of $\theta$ yield qualitatively similar results and are omitted for brevity.} For the yield curves, we also report the Gaussian process interpretation of the curve estimates: shaded areas represent $\pm 3\sigma$ confidence bands, truncated at $\pm 2\%$ for readability. Vertical dashed lines indicate the longest available maturities for bonds and swaps on the given date. As expected, US government bonds extend to 30~years, while SOFR swaps cover maturities up to 50~years.

The effect of $\theta$ on the yield curves is most clearly reflected in the confidence bands. When $\theta=0$, estimation uncertainty for the bond curve increases sharply beyond 30~years, whereas swap confidence bands remain tight across the entire maturity range. Within the maturity region supported by bond data, uncertainty remains low. As $\theta$ increases, uncertainty in the bond extrapolation region declines markedly, with the most pronounced reduction occurring between $\theta=0$ and $\theta=100$. In contrast, the swap confidence bands are largely unaffected. A similar pattern is visible in the yield levels themselves: the impact of transfer learning is concentrated in the extrapolation region, where the 50-year bond yield increases by roughly 50bps for large values of $\theta$. Within the well-observed maturity range, yield curves are nearly indistinguishable across values of $\theta$. This behavior is desirable, as transfer learning stabilizes data-sparse regions without distorting well-identified segments of the curve.

The right panel of Figure~\ref{fig:example_day_2024} displays the corresponding forward rate curves, where the effects of transfer learning are even more pronounced. For large values of $\theta$, the swap forward curve becomes noticeably more irregular, reflecting spillovers from the less smooth bond forward curve. This indicates that excessively large values of $\theta$ can induce undesirable bidirectional information transfer, underscoring the importance of moderate calibration.

\begin{figure}[h!]
  \centering
       \tcapfig{Example day 2024-06-14}
  \begin{subfigure}[t]{0.49\linewidth}
    \centering
    \includegraphics[width=\linewidth]{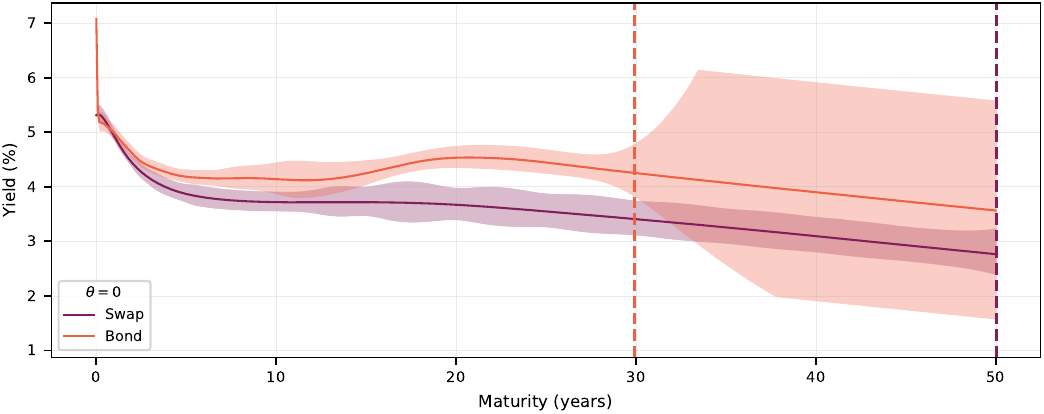}
  \end{subfigure}\hfill
  \begin{subfigure}[t]{0.49\linewidth}
    \centering
    \includegraphics[width=\linewidth]{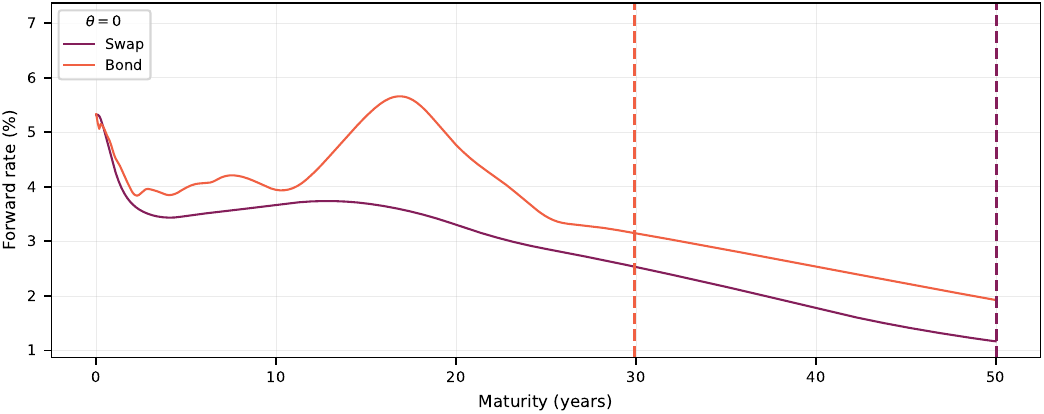}
  \end{subfigure}\vspace{0.3em}
    \begin{subfigure}[t]{0.49\linewidth}
    \centering
    \includegraphics[width=\linewidth]{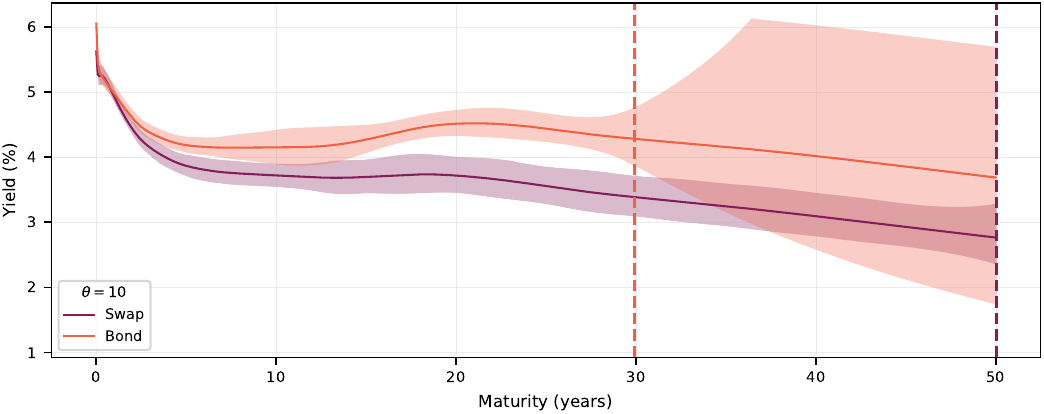}
  \end{subfigure}\hfill
  \begin{subfigure}[t]{0.49\linewidth}
    \centering
    \includegraphics[width=\linewidth]{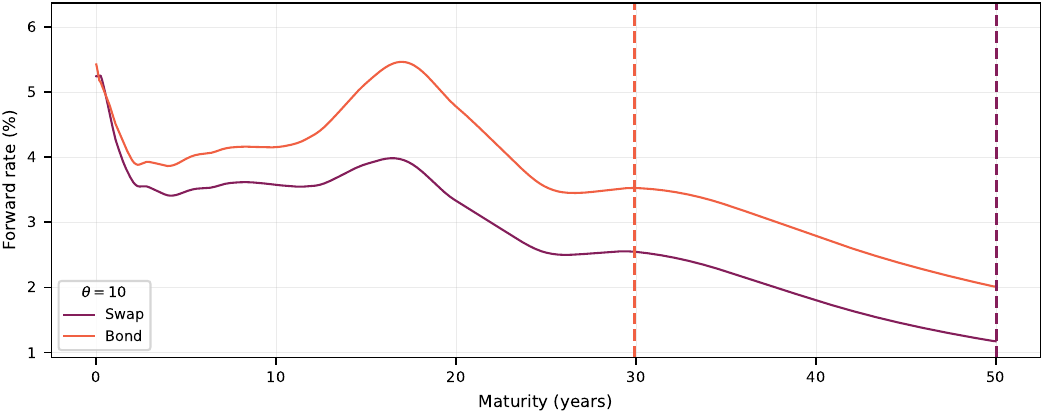}
  \end{subfigure}\vspace{0.3em}
    \begin{subfigure}[t]{0.49\linewidth}
    \centering
    \includegraphics[width=\linewidth]{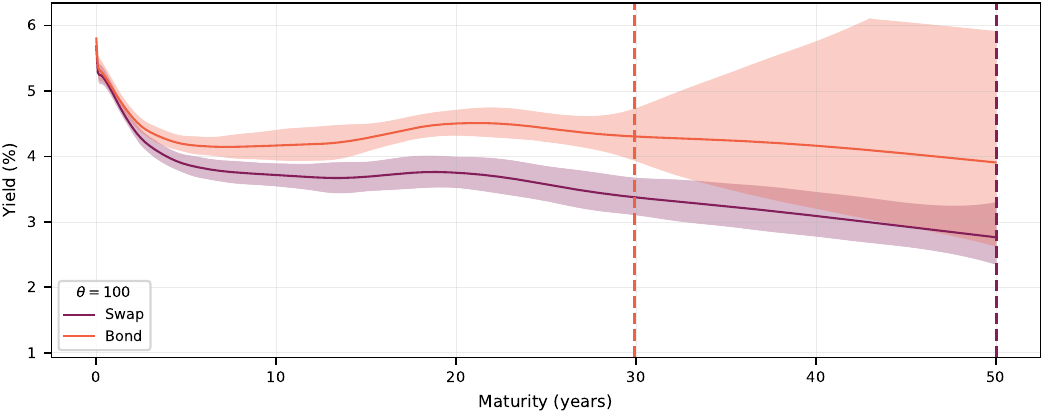}
  \end{subfigure}\hfill
  \begin{subfigure}[t]{0.49\linewidth}
    \centering
    \includegraphics[width=\linewidth]{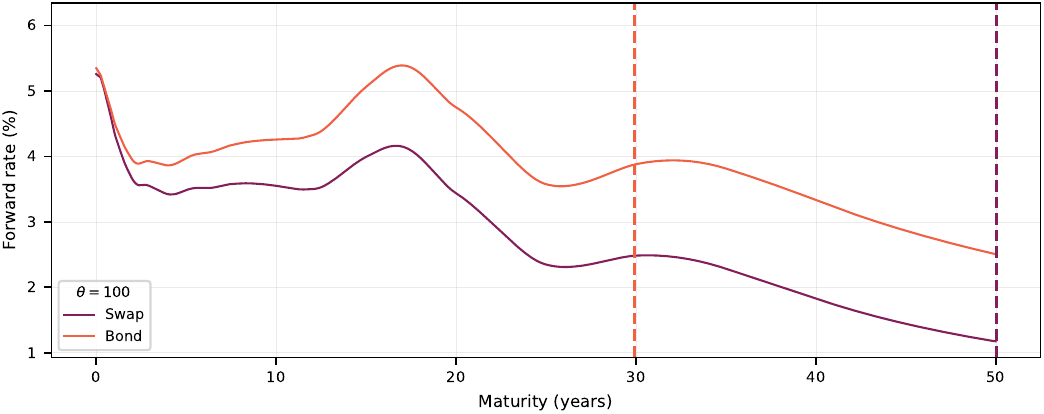}
  \end{subfigure}\vspace{0.3em}
    \begin{subfigure}[t]{0.49\linewidth}
    \centering
    \includegraphics[width=\linewidth]{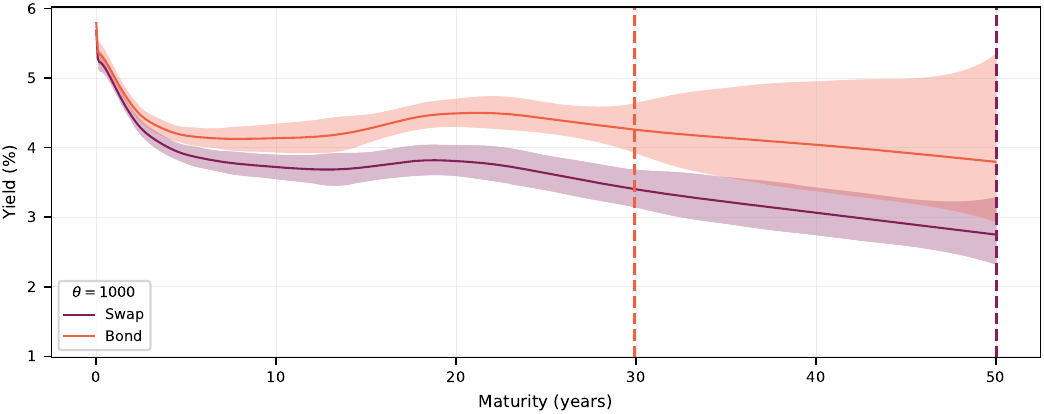}
  \end{subfigure}\hfill
  \begin{subfigure}[t]{0.49\linewidth}
    \centering
    \includegraphics[width=\linewidth]{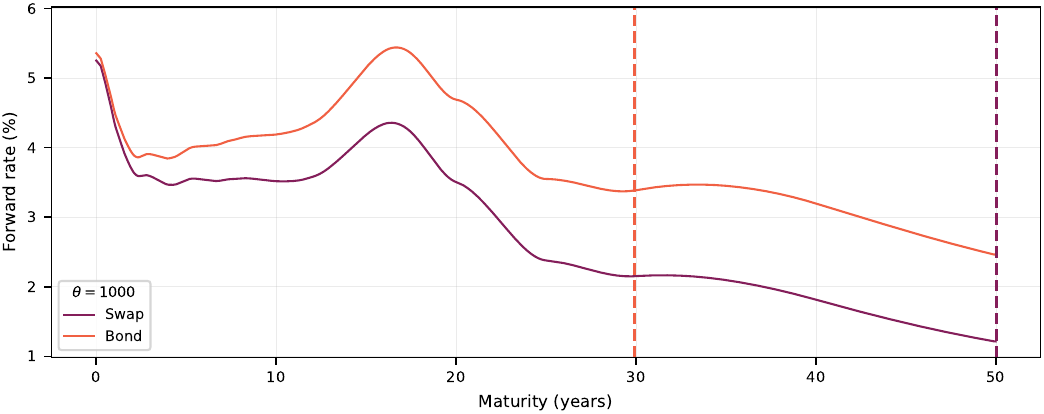}
  \end{subfigure}\vspace{0.3em}
        \bnotefig{This figure shows the resulting yield curves for various $\theta$ on the left and respective forward rate curves on the right on 2024-06-14.  In all panels, the vertical dashed lines indicate the longest available data point in the respective product class. The shaded areas show the $3\sigma$ confidence bands derived from the Gaussian process view and are capped at $\pm 2\%$. All values are in \%.} 
   \label{fig:example_day_2024}
\end{figure}

Additional example days are reported in Appendix~\ref{sec:additional_results}. They confirm the patterns observed in Figure~\ref{fig:example_day_2024}, with the magnitude of the effects varying across dates. Having established these illustrative insights, we now turn to a systematic evaluation of the benefits of transfer learning.

\subsection{Masking Experiment}
\label{sec:masking_experiment}

The remaining hyperparameter to select is $\theta$, which governs the strength of transfer learning. As is apparent from the additional regularization term in~\eqref{eq:multi_curve_motivated}, incorporating transfer learning generally leads to slightly higher in-sample pricing errors relative to the standalone case. The rationale for introducing transfer learning is therefore not improved in-sample fit, but enhanced estimation in regions where data are sparse or entirely unavailable.

To quantify this effect, we conduct a masking experiment. For a given masking horizon $H = 10$~years, all bonds with maturities exceeding $H$ are temporarily treated as unobserved, while swap data remain fully included.\footnote{Alternative choices, such as $H = 5$ and $H = 15$~years, yield qualitatively similar results and are available from the authors upon request.} Discount curves are then estimated using the unmasked bond data and the full swap sample, both in the standalone case ($\theta = 0$) and under transfer learning with $\theta \in \{1, 5, 10, 50, 100, 500, 1000\}$, where values of $\theta$ are scaled by a factor of $10^{-4}$. Model performance is evaluated using RMSE computed across all instruments. Figure~\ref{fig:experiment_design} illustrates the experimental design.

\begin{figure}[h!]
  \centering
       \tcapfig{Transfer learning masking experiment}
\begin{tikzpicture}[scale=0.9,>=stealth]

  \draw[->,thick] (0,0) -- (11.5,0) node[right]{Maturity};

  \draw[thick] (0,0) -- (0,-0.1); 
  \node[below] at (0,0) {0};
  
  \draw[thick,blue] (0,1.0) -- (4,1.0);
  \node[blue,above] at (2,1.0) {Bonds used};
  
  \draw[dashed,thick, blue] (4.1,1.0) -- (9.7,1.0);
  \node[above] at (7,1.0) {Masked bonds};
  
  \draw[ thick, red] (4,-0.1) -- (4,1.3);
  \node[red,below] at (4,0) {Masking horizon 10y};
  
  \draw[thick,green!60!black] (0,0.2) -- (11,0.2);
  \node[green!60!black,above] at (7,0.2) {Swaps used fully};
\end{tikzpicture}
        \bnotefig{The figure illustrates the design of the masking experiment to evaluate the benefits of transfer learning. } 
   \label{fig:experiment_design}
\end{figure}

We expect transfer learning to leave swap fitting errors largely unchanged, as swap data are fully observed throughout the experiment. For bonds, the fit in the unmasked region below~$H$ should be only marginally affected, while the extrapolated segment beyond~$H$ should improve substantially due to the additional information provided by swaps. This is precisely what we observe.

Results are reported both by maturity bucket, to highlight local effects, and in aggregate, to assess whether potential deterioration in well-identified regions is outweighed by gains in the extrapolation area. The masking experiment is conducted on a daily basis over the full sample period, allowing us to trace the effects of transfer learning over time. We first present a series of figures illustrating the improvements achieved through transfer learning, followed by a tabular summary of the results. In all figures, TL denotes transfer learning and SA refers to the standalone estimation. As a reference, we also report the SA unmasked benchmark, which corresponds to a standalone fit without bond masking.

Figure~\ref{fig:transfer_learning_line_mat_buckets} reports the time-averaged RMSE by maturity bucket for bonds (left panel) and swaps (right panel). For bonds, all curves remain tightly close for maturities up to 10~years. Only the largest values of~$\theta$ exhibit slight dispersion, indicating excessive transfer learning, although the differences remain within the low single–basis-point range. In this short- to medium-maturity segment, the SA masked specification typically delivers the lowest errors among the masked cases, aside from the SA unmasked benchmark. This pattern changes markedly in the masked region: the SA masked benchmark now performs worst, with RMSE exceeding 20bps, while the minimum error is attained at $\theta=100$, at approximately 13bps. Turning to the swap panel, modest distortions appear below 10~years for all transfer learning specifications, reflecting spillovers from bond information into the swap curve. The magnitude of these effects increases with~$\theta$ but remains small overall. Taken together, both panels indicate clear benefits of transfer learning for moderate values of~$\theta$, while excessively large values lead to adverse effects.

\begin{figure}[ht]
  \centering
       \tcapfig{Time-averaged fitting errors by maturity bucket and overall}
  \begin{subfigure}[t]{0.49\linewidth}
    \centering
    \includegraphics[width=\linewidth]{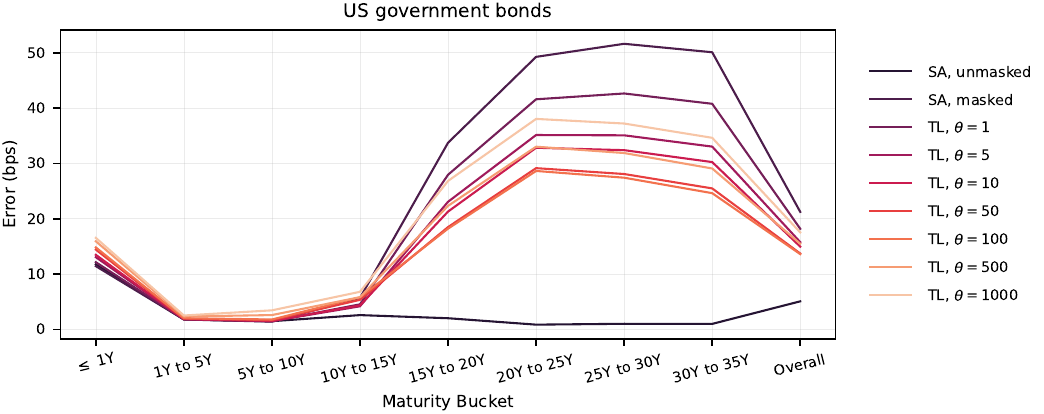}
  \end{subfigure}\hfill
  \begin{subfigure}[t]{0.49\linewidth}
    \centering
    \includegraphics[width=\linewidth]{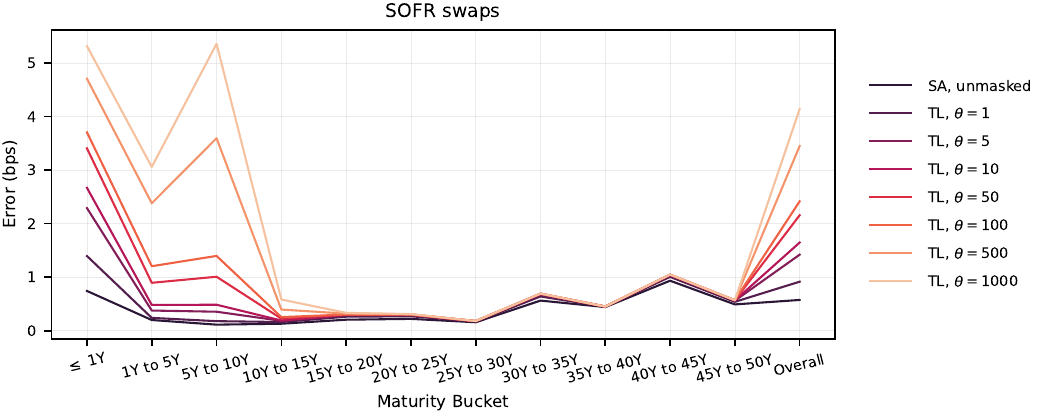}
  \end{subfigure}\vspace{0.3em}
    \bnotefig{This figure shows fitting  errors of US government bonds on the left and SOFR swaps on the right. The different colored lines correspond to different values of $\theta$. The masking horizon is $H=10$. The Overall bucket is the total aggregate. All values are in bps.} 
   \label{fig:transfer_learning_line_mat_buckets}
\end{figure}

Figure~\ref{fig:transfer_learning_box_mat_bucket} presents the corresponding logarithmic error distributions by bucket as well as in aggregate. This more granular perspective corroborates the findings discussed above. A small number of outliers, indicated by dots, are visible, but the overall patterns are stable across specifications. For bonds, the boxplots reveal a slight upward shift in the error distribution for maturities below 10~years. Beyond 10~years, the distributions display a smooth, smile-shaped pattern across values of~$\theta$, indicating an optimal range around $\theta \approx 100$. In contrast, the swap results exhibit the expected stability, with only minor distortions below 10~years, and of negligible economic magnitude.

\begin{figure}[ht]
  \centering
       \tcapfig{Distribution of log fitting errors by maturity bucket and overall}
    \begin{subfigure}[t]{0.49\linewidth}
    \centering
    \includegraphics[width=\linewidth]{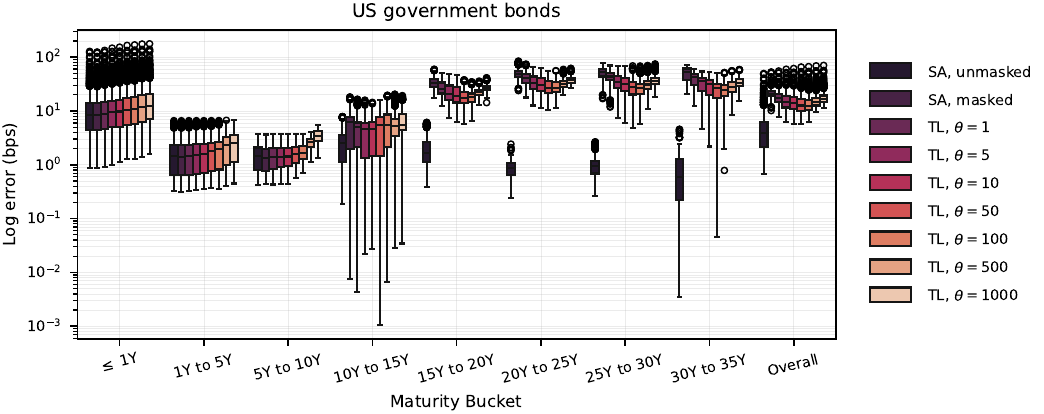}
  \end{subfigure}\hfill
  \begin{subfigure}[t]{0.49\linewidth}
    \centering
    \includegraphics[width=\linewidth]{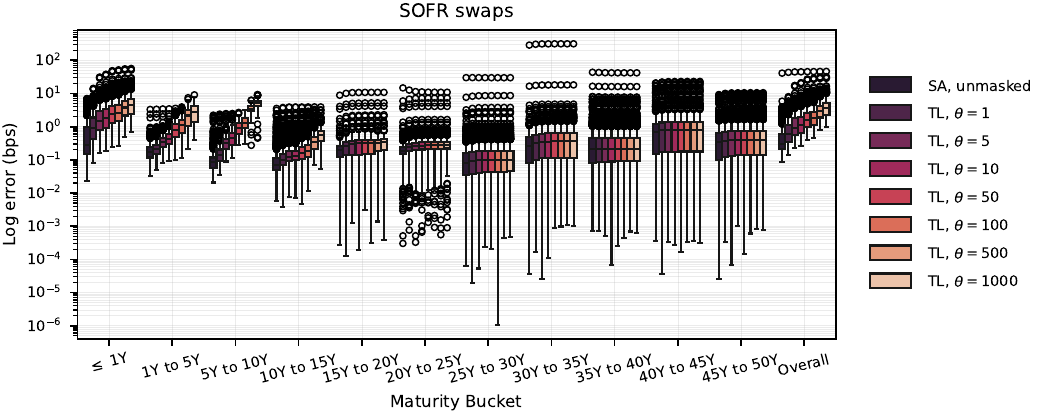}
  \end{subfigure}\vspace{0.3em}
        \bnotefig{This figure shows the distribution of log fitting errors of US government bonds on the left and  SOFR swaps on the right. The different colored whisker plots correspond to different values of $\theta$. The masking horizon is $H=10$. The Overall bucket is the total aggregate. All values are in log bps.} 
   \label{fig:transfer_learning_box_mat_bucket}
\end{figure}

The results thus far are encouraging. Transfer learning delivers clear improvements in the masked region while leaving other maturity segments largely unaffected for moderate values of~$\theta$. The maturity-bucket error distributions confirm these findings at a more granular level. To assess whether these improvements persist over time, Figure~\ref{fig:transfer_learning_time_series} plots the aggregated RMSE as a time series. The SA unmasked benchmark provides a natural lower bound, while all transfer learning specifications consistently outperform the SA masked case throughout the sample period. These patterns indicate that the gains from transfer learning are both robust and temporally stable. For swaps, all series remain closely aligned over time, whereas occasional spikes observed for both product classes are attributable to data noise rather than methodological shortcomings, a point we have verified systematically. 

  \begin{figure}[ht]
  \centering
       \tcapfig{Time-series of overall fitting errors}
    \begin{subfigure}[t]{0.49\linewidth}
    \centering
    \includegraphics[width=\linewidth]{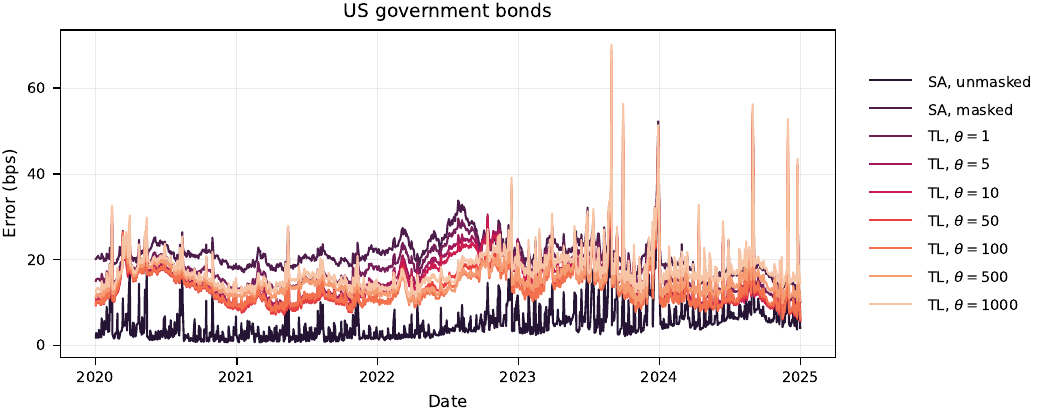}
  \end{subfigure}\hfill
  \begin{subfigure}[t]{0.49\linewidth}
    \centering
    \includegraphics[width=\linewidth]{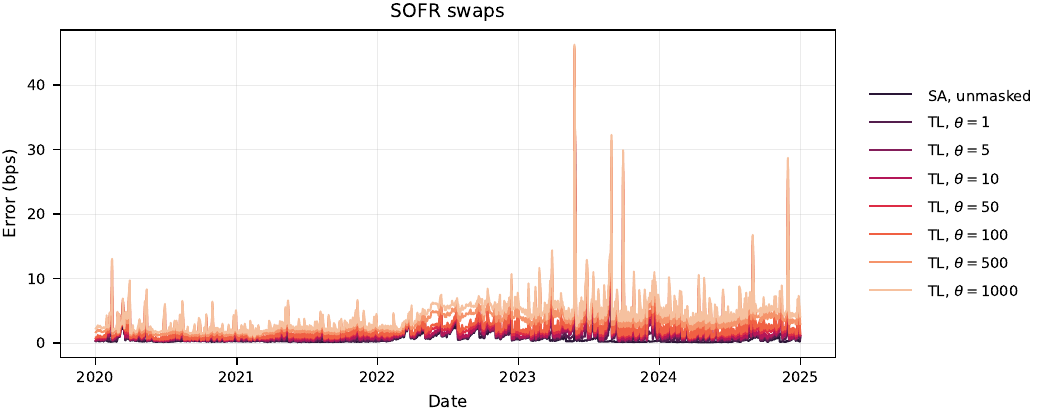}
    \end{subfigure}\vspace{0.3em}
       \bnotefig{This figure shows the time series of overall fitting errors of US government bonds on the left and SOFR swaps on the right. The different colored lines correspond to different values of $\theta$. The masking horizon is $H=10$. All values are in bps.} 
   \label{fig:transfer_learning_time_series}
\end{figure}

Table~\ref{tab:tl_effects} reports average and median fitting errors, measured in basis points. For reference, the upper panel presents results for the unmasked setting, in which no instruments are removed. As expected, fitting errors in this case are substantially smaller. The key observation is that, in the unmasked setup, transfer learning does not lead to any economically significant deterioration in fit quality. For bonds, the average error increases only marginally when moving from the SA unmasked specification to transfer learning with $\theta = 100$ (the RMSE-minimizing value), and the same pattern holds for swaps. An analogous conclusion emerges for median errors. In contrast, the masked results highlight the clear benefits of transfer learning. The minimum average error is attained at $\theta = 100$ (highlighted in green), reducing bond fitting errors by approximately 8bps relative to the standalone masked case, corresponding to a reduction of about 36\%. In light of the Bid--Ask spreads documented in Figure~\ref{fig:bid_ask}, this improvement is economically significant. At the same time, distortions for swaps remain modest: average errors increase only slightly, from 0.57bps in the SA masked case to 2.42bps under transfer learning with $\theta = 100$. The same qualitative conclusions hold when considering median errors.

\begin{table}[htbp] 
\centering 
\begin{tabular}{lcc@{\hspace{1em}}cc} 
\toprule & \multicolumn{2}{c}{Average} & \multicolumn{2}{c}{Median} \\ \cmidrule(lr){2-3} \cmidrule(lr){4-5} & \textbf{Bonds} & \textbf{Swaps} & \textbf{Bonds} & \textbf{Swaps} \\ 
\midrule \textbf{Unmasked} & & & & \\ 
SA & 5.08 & 0.57 & 3.91 & 0.29\\ 
TL, $\theta = 1$ & 5.39 & 0.89 & 4.12 & 0.57 \\ 
TL, $\theta = 5$ &  5.82 & 1.41 & 4.41 & 1.00\\ 
TL, $\theta = 10$ & 6.03 & 1.66 & 4.57 & 1.21\\ 
TL, $\theta = 50$ & 6.51 & 2.22 & 4.99 & 1.78 \\ 
TL, $\theta = 100$ & 6.72 & 2.49 & 5.19 & 2.04 \\ 
TL, $\theta = 500$ & 7.33 & 3.58 & 5.78 & 3.19\\ 
TL, $\theta = 1000$ & 7.73 & 4.40 & 6.18 & 4.01\\ 
\midrule \textbf{Masked} & & & & \\ 
SA & 21.24 & 0.57 & 20.84 & 0.29  \\ 
TL, $\theta = 1$ & 18.18 & 0.92 & 17.56 & 0.59   \\ 
TL, $\theta = 5$ & 15.78 & 1.42 & 14.76 & 0.94\\ 
TL, $\theta = 10$ & 14.96 & 1.65 & 13.98 & 1.11 \\ 
TL, $\theta = 50$ & 13.72 & 2.16 & 12.63 & 1.58\\ 
TL, $\theta = 100$ & \textbf{\textcolor{green!60!black}{13.61}} & 2.42 & \textbf{\textcolor{green!60!black}{12.25}} & 1.88 \\ 
TL, $\theta = 500$ & 15.49 & 3.45 & 14.38 & 3.02 \\ 
TL, $\theta = 1000$ & 17.53 & 4.14 & 16.48 & 3.76 \\ 
\bottomrule 
\end{tabular} 
\caption{The average and median fitting errors for US government bonds and SOFR swaps. Green highlights indicate the minimal masked bond errors. All values are in bps.} \label{tab:tl_effects} \end{table}

Thus far, we have assumed that bond and swap curves are estimated jointly via transfer learning. From a practical standpoint, however, this is not always necessary. Standalone KR curves already perform well across markets and relative to benchmark methods \cite{Filipovic2022,CamenzindFilipovic2024}. When data are sufficiently dense, the standalone approach remains preferable. When data become sparse or unavailable, transfer learning provides a simple and effective extension that materially improves extrapolation, in particular for extending the bond curve up to the longest available swap maturity of 50~years. In this sense, the transfer-learned swap curve is best viewed as a by-product of the procedure rather than a replacement for its standalone estimation.

In summary, transfer learning preserves fit quality in well-populated regions while delivering economically meaningful improvements in data-scarce segments. These gains are robust across maturity buckets and stable over time.

\section{Conclusion}

We introduce a transfer learning framework for jointly estimating discount curves across fixed-income product classes. Building on the discounted cash flow principle, our approach extends kernel ridge regression to a vector-valued setting, resulting in a convex optimization problem with a closed-form solution in a vector-valued RKHS. A key feature is the use of separable operator-valued kernels, which enable regularization of curve spreads in an economically meaningful way.

We derive a norm decomposition for separable kernels, generalizing prior results and yielding a principled spread regularization term. The framework admits a Gaussian process interpretation, allowing for estimation uncertainty quantification in the vector-valued setting.

We show how standard fixed-income instruments, including coupon bonds, interest-rate swaps, and cross-currency swaps, can be embedded within this framework. An extensive masking experiment demonstrates that transfer learning US government bonds with SOFR swaps improves extrapolation while leaving well-identified regions unaffected. The resulting effects are economically significant and consistent across maturity buckets and over time. A comprehensive empirical assessment for additional currencies is left for future work.


\bibliographystyle{alpha}
\bibliography{bibliography}

\begin{appendix}

\section{Vector-Valued Reproducing Kernel Hilbert Spaces}\label{secRKHS}

This appendix presents the theoretical background on vector-valued RKHS that underpins our transfer learning framework. For completeness, we begin by recalling the definition and main properties of vector-valued RKHS, following \cite[Chapter 6]{pau_rag_16}. Let $E$ be any set and $A\in\N$.

\begin{definition}
  An \emph{$\R^A$-valued RKHS} on $E$ is a Hilbert space $\Hcal$ consisting of functions $h=(h_1,\dots,h_A)^\top:E\to\R^A$ such that for every $x\in E$, the linear evaluation map $E_x:\Hcal\to\R^A$ given by $E_x(h)=h(x)$ is bounded.
\end{definition}

An $\R^A$-valued RKHS $\Hcal$ has a reproducing kernel function $K:E\times E\to \R^{A\times A}$ defined by $K(x,y)=E_x E_y^\ast$, where we identify a linear operator on $\R^A$ with its $A\times A$-matrix representation in the standard Euclidean basis of $\R^A$. $E_y^\ast$ denotes the adjoint operator. We immediately obtain that $K(\cdot,y)v=E_y^\ast v \in\Hcal$ and $\langle K(\cdot,y)v, h\rangle_{\Hcal}=v^\top h(y)$, for any $y\in E$, $v\in \R^A$, $h\in\Hcal$. Moreover, we see that $K$ is symmetric in the following sense,
\begin{equation}\label{Ktop}
  K(x,y)^\top = K(y,x) .
\end{equation}
Note, however, that the matrices $K(x,y)$ are not symmetric for $x\neq y$ in general.\footnote{Many papers in the literature assume that the matrices $K(x,y)$ are symmetric. But this is not the case in general, and, in fact, it excludes many examples.} Moreover, for any finite points $x_1,\dots,x_n\in E$ the operator $(K(x_i,x_j))$ on $(\R^A)^A$ is positive semi-definite in the sense that for all choices of vectors $v_1,\dots,v_n\in\R^A$ we have
\begin{equation}\label{PSD}
  \sum_{i,j=1}^n v_i^\top K(x_i,x_j) v_j \ge 0.
\end{equation}

Conversely, this leads to the following definition.\footnote{Note that \cite[Definition 6.11]{pau_rag_16} does not require \eqref{Ktop} because they work on complex Hilbert spaces, where the non-negativity, that is, \eqref{PSD} with $v_i$ replaced by its complex conjugate, already implies that $K(x,y)^\ast = K(y,x)$.}
\begin{definition}
  A function $K:E\times E\to\R^{A\times A}$ satisfying \eqref{Ktop} and \eqref{PSD} is called a \emph{$\R^{A\times A}$-valued kernel function}.
\end{definition}

It follows by inspection that a function $K:E\times E\to\R^{A\times A}$ is a $\R^{A\times A}$-valued kernel function if and only if there exists a scalar kernel function $k$ on $\{1,\dots,A\}\times E$ such that $K_{ab}(x,y) = k((a,x),(b,y))$.

\begin{example} The concept of matrix-valued kernels is surprisingly strong as it is somewhat difficult to generate examples easily.
  However, one possible way is to let $k_1, k_2,\dots, k_A$ be scalar kernels on $E$, then $K(x,y)=\diag(k_1(x,y),\dots,k_A(x,y))$ is a $\R^{A\times A}$-valued kernel.
  Indeed, property \eqref{Ktop} holds because $K(x,y)=K(y,x)$ is symmetric.
  Property \eqref{PSD} is valid since
  \begin{align*}
    \sum_{i,j=1}^n v_i^\top K(x_i,x_j) v_j = \sum_{i,j=1}^n \sum_{a=1}^A v_{i,a} k_a(x_i,x_j) v_{j,a} = \sum_{a=1}^A\bigg(\underbrace{\sum_{i,j=1}^n  v_{i,a} k_a(x_i,x_j) v_{j,a}}_{\ge 0}\bigg) \ge 0,
  \end{align*}
where the inner sums are non-negative due to the kernel property of each scalar kernel $k_a$.
\end{example}
Moore's vector-valued theorem \cite[Theorem 6.12]{pau_rag_16} states that for every $\R^{A\times A}$-valued kernel function $K$ there exists a unique $\R^A$-valued RKHS $\Hcal$ such that $K$ is its reproducing kernel function. Moreover, functions of the form
\begin{equation}\label{reprH}
  h(x) = \sum_{j=1}^n K(x,y_j) v_j ,\quad v_j\in\R^{A}, \quad y_1,\dots, y_n\in E,\quad n\in\N,
\end{equation}
are dense in $\Hcal$, see \cite[Proposition 6.7]{pau_rag_16}.\footnote{Note that \eqref{reprH} differs from the corresponding formulas in \cite[page 209]{alv_ros_law_11} and the wikipedia page \cite{wiki_Kmfvo}. The latter formulas are correct only if $K(x,y)$ is a symmetric matrix, which in view of \eqref{Ktop} is not true in general.}
A special class of $\mathbb{R}^{A\times A}$-valued kernels are separable kernels. In fact, as it turns out they are tractable and easy to interpret. 
\begin{definition}
\label{def:separable_kernel}
  A $\R^{A\times A}$-valued kernel $K$ on $E$ is separable if it can be written as $K(x,y)= B k(x,y)$ for some $A\times A$-matrix $B$ and a scalar kernel $k$ on $E$. In view of \eqref{Ktop}, the matrix $B$ is necessarily symmetric positive semi-definite.
\end{definition}

\begin{remark}
    Separable kernels are one of the  simplest matrix-valued kernel. If we regard kernels as similarity measures, $B$ encodes similarity across components while $k$ encodes similarity across the space $E$.
\end{remark}

The following theorem provides an important representation result, which is at the heart of transfer learning in this paper.

\begin{theorem}\label{thmRHKSsepNEW}
Let $\Hcal$ be the vector-valued RKHS corresponding to the separable kernel $K(x,y)= B k(x,y)$. Let $\Hcal_k$ denote the RKHS corresponding to the scalar kernel $k$. Let $Q$ be any generalized inverse $A\times A$-matrix of $B$ such that $BQB=B$. Then the following hold.
\begin{enumerate}
    \item\label{thmRHKSsepNEWX1} $\Hcal$ is isomorphic to the direct sum $\bigoplus_{a=1}^{\tilde A}\Hcal_k$, where $\tilde A=\rank B$.
\item\label{thmRHKSsepNEWX2} $\Hcal\subseteq (\Hcal_k)^A=\Hcal_k\times\cdots\times\Hcal_k$ as sets, with equality if and only if $B$ is non-singular.
    \item\label{thmRHKSsepNEW3} For any $h=(h_1,\dots,h_A)^\top\in \Hcal$, the $\Hcal$-norm can be expressed as
  \begin{equation}
       \| h\|^2_\Hcal  = \sum_{a,b=1}^A Q_{ab}\langle h_a,h_b\rangle_{\Hcal_k} \label{HKQeqfullnew}.
  \end{equation}
\item\label{thmRHKSsepNEW4} If $Q$ is symmetric then \eqref{HKQeqfullnew} can also be written as
\begin{equation}
    \| h\|^2_\Hcal  = \sum_{a=1}^A\gamma_a\|h_a\|_{\Hcal_k}^2- \sum_{a=1}^A\sum_{b>a} Q_{ab}\|h_a-h_b\|_{\Hcal_k}^2,\label{HKQeqnew}
\end{equation}
where $\gamma_a=  \sum_{b=1}^A Q_{ab}$ denote the row sums.
\end{enumerate}
\end{theorem}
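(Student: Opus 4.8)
The plan is to reduce the entire statement to a single full-rank factorization of $B$ and then transport everything into copies of $\Hcal_k$. Since $B$ is symmetric positive semi-definite of rank $\tilde A$, I would first fix a matrix $W\in\R^{A\times\tilde A}$ of full column rank with $B=WW^\top$, obtained for instance from the spectral decomposition $B=U\diag(d_1,\dots,d_A)U^\top$ by keeping only the $\tilde A$ strictly positive eigenvalues, i.e. $W=U_1 D_1^{1/2}$. The central object is the linear map $\Phi:\bigoplus_{a=1}^{\tilde A}\Hcal_k\to\Hcal$ sending $\mathbf f=(f_1,\dots,f_{\tilde A})$ to the $\R^A$-valued function $x\mapsto W\,(f_1(x),\dots,f_{\tilde A}(x))^\top$. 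To prove \eqref{thmRHKSsepNEWX1} I would verify that $\Phi$ is an isometric isomorphism by testing it on the dense generating families. Using $K(\cdot,y)v=(Bv)\,k(\cdot,y)=W\big((W^\top v)\,k(\cdot,y)\big)$, every generator $g=\sum_j K(\cdot,y_j)v_j$ of $\Hcal$ equals $\Phi(\mathbf f)$ with $f_c=\sum_j (W^\top v_j)_c\,k(\cdot,y_j)\in\Hcal_k$, and conversely, since $W^\top$ is surjective, every finite tuple of $\Hcal_k$-generators is hit. The reproducing property gives $\langle K(\cdot,y_i)v_i,K(\cdot,y_j)v_j\rangle_\Hcal=v_i^\top B v_j\,k(y_i,y_j)=(W^\top v_i)^\top(W^\top v_j)\,k(y_i,y_j)$, which matches the $\bigoplus\Hcal_k$ inner product of the images term by term, so $\Phi$ is isometric on a dense set and extends by completeness to the desired isomorphism.

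Statement \eqref{thmRHKSsepNEWX2} then follows quickly. Each component of $W\mathbf f$ is a linear combination (or $\Hcal_k$-norm limit of such) of $f_1,\dots,f_{\tilde A}$, hence lies in $\Hcal_k$, so $\Hcal\subseteq(\Hcal_k)^A$. Moreover every $h\in\Hcal$ takes values in $\Ima B$: the generators do, and since evaluation is continuous on $\Hcal$ this property passes to norm limits. If $B$ is singular then $\Ima B\subsetneq\R^A$, so choosing a nonzero $g\in\Hcal_k$ and a direction $w\notin\Ima B$ exhibits $x\mapsto g(x)w\in(\Hcal_k)^A\setminus\Hcal$; conversely, if $B$ is nonsingular then $W$ is square and invertible, and $\Phi(W^{-1}(g_1,\dots,g_A)^\top)=(g_1,\dots,g_A)$ for any tuple, giving equality.

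For \eqref{thmRHKSsepNEW3} the key algebraic fact is that every generalized inverse satisfies $W^\top Q W=I_{\tilde A}$. Indeed $BQB=B$ reads $WW^\top Q WW^\top=WW^\top$, and multiplying on the left by $(W^\top W)^{-1}W^\top$ and on the right by $W(W^\top W)^{-1}$ (legitimate because $W$ has full column rank, so $W^\top W$ is invertible) collapses this to $W^\top Q W=I_{\tilde A}$. Writing $h=W\mathbf f$ so that $h_a=\sum_c W_{ac}f_c$ and $\langle h_a,h_b\rangle_{\Hcal_k}=(WG^fW^\top)_{ab}$ for the Gram matrix $G^f_{cd}=\langle f_c,f_d\rangle_{\Hcal_k}$, I would then compute $\sum_{a,b}Q_{ab}\langle h_a,h_b\rangle_{\Hcal_k}=\trace\!\big((W^\top Q W)\,G^f\big)=\trace(G^f)=\sum_c\|f_c\|^2_{\Hcal_k}=\|h\|^2_\Hcal$, the final equality being the isometry from \eqref{thmRHKSsepNEWX1}. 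Only the value $W^\top Q W$ enters, so the formula holds for any generalized inverse.

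Finally, \eqref{thmRHKSsepNEW4} is a purely algebraic rewriting, valid once $Q$ is symmetric. Expanding $\|h_a-h_b\|^2_{\Hcal_k}=\|h_a\|^2_{\Hcal_k}+\|h_b\|^2_{\Hcal_k}-2\langle h_a,h_b\rangle_{\Hcal_k}$ on the right-hand side of \eqref{HKQeqnew} and collecting coefficients, the diagonal terms carry coefficient $\gamma_a-\sum_{b\neq a}Q_{ab}=Q_{aa}$ (using $\gamma_a=\sum_b Q_{ab}$ and $Q_{ab}=Q_{ba}$), while each off-diagonal pair contributes $2Q_{ab}\langle h_a,h_b\rangle_{\Hcal_k}$, which together reproduce $\sum_{a,b}Q_{ab}\langle h_a,h_b\rangle_{\Hcal_k}$ of \eqref{HKQeqfullnew}. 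I expect the main obstacle to be the singular case handled through a general generalized inverse rather than the Moore--Penrose pseudoinverse: the crux is the identity $W^\top Q W=I_{\tilde A}$, which must be shown for every $Q$ with $BQB=B$, together with the care needed in \eqref{thmRHKSsepNEWX1} to argue that the isometry $\Phi$ is genuinely onto (density plus completeness) and that membership in $\Hcal$ really constrains functions to take values in $\Ima B$.
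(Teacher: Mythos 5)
Your proof is correct, and although it shares the paper's skeleton---factor $B$, transport $\Hcal$ onto copies of $\Hcal_k$ using density of the kernel generators, then extend---it executes the reduction differently, and in several places more cleanly. The paper works with the reduced spectral decomposition $B=USU^\top$ and the map $\Ucal h = U^\top h$, which is only a bounded bijection (the norm transports as $\|h\|_\Hcal^2=\sum_a s_a^{-1}\|g_a\|_{\Hcal_k}^2$, via the identity $U^\top Q U = S^{-1}$ deduced from $BQB=B$); it first proves \eqref{HKQeqfullnew} and \eqref{HKQeqnew} on the dense subspace of simple functions, with a separate injectivity argument, and then invokes the extension principle for densely defined bounded operators to pass to all of $\Hcal$. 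Your full-rank factorization $B=WW^\top$ absorbs $S^{1/2}$ into the map, so that $\Phi(\mathbf f)=W\mathbf f$ is an \emph{isometry}: injectivity comes for free, the extension step is the standard extension of an isometry, and \eqref{HKQeqfullnew} follows for every $h\in\Hcal$ at once from the single trace computation $\sum_{a,b}Q_{ab}\langle h_a,h_b\rangle_{\Hcal_k}=\trace\bigl(W^\top Q W\,G^f\bigr)=\trace(G^f)=\|h\|_\Hcal^2$, with your lemma $W^\top Q W=I_{\tilde A}$ playing the role of the paper's $U^\top Q U=S^{-1}$ and, like it, holding for an arbitrary generalized inverse. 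Your treatment of part \ref{thmRHKSsepNEWX2} is actually more explicit than the paper's: the observation that every $h\in\Hcal$ takes values in $\Ima B$ (true for generators and preserved under norm limits because evaluations are bounded), combined with the counterexample $x\mapsto g(x)w$ for $w\notin\Ima B$, establishes strictness of the inclusion for singular $B$, which the paper only asserts through $\tilde A<A$. What the paper's normalization buys in exchange is the explicit two-sided norm equivalence with constants $s_1^{-1}$ and $s_{\tilde A}^{-1}$, i.e., quantitative control of how $\Hcal$ sits inside $\bigoplus_{a=1}^{\tilde A}\Hcal_k$. Two minor points to tighten: when arguing that every tuple of $\Hcal_k$-generators is hit, you should first pass to a common point set $y_1,\dots,y_n$ across the $\tilde A$ components before invoking surjectivity of $W^\top$; and your ``only if'' direction in \ref{thmRHKSsepNEWX2} tacitly assumes $\Hcal_k\neq\{0\}$, an assumption the paper makes implicitly as well.
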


\begin{proof}
We define the linear subspace $\Dcal$ of $\Hcal$ that consists of all functions of the form 
\begin{equation}\label{hsimplen}
    h(\cdot)=\sum_{j=1}^n Bv_j k(\cdot,y_j), \quad v_j\in\R^A,\quad y_1,\dots,y_n\in E,\quad n\in\N.
\end{equation}
From \eqref{reprH} we know that $\Dcal$ is dense in $\Hcal$. Similarly, we define the dense subspace $\Dcal_k$ of $\Hcal_k$ of all functions of the form $g(\cdot)=\sum_{j=1}^n c_j k(\cdot,y_j)$, for $c_j\in\R$. Consequently, the direct sum $\bigoplus_{a=1}^{\tilde A}\Dcal_k$ is a dense subspace of $\bigoplus_{a=1}^{\tilde A}\Hcal_k$.

We prove the theorem in two steps. First, we prove all statements for $\Hcal$ and $\Hcal_k$ replaced by $\Dcal$ and $\Dcal_k$. Second, we argue by the continuous extension principle that all results carry over to $\Hcal$ and $\Hcal_k$.

We let $B=USU^\top$ denote the reduced spectral decomposition where $U$ is an orthogonal $A\times \tilde A$-matrix such that $U^\top U = I_{\tilde A}$, and $S=\diag(s_1,\dots,s_{\tilde A})$ contains the positive eigenvalues $s_1\ge \cdots \ge s_{\tilde A}>0$ of $B$. 

We define the linear operator $\Ucal:\Dcal\to \bigoplus_{a=1}^{\tilde A}\Dcal_k $, by $\Ucal h(\cdot)  =U^\top \sum_{j=1}^n   Bv_j k(\cdot,y_j) = \sum_{j=1}^n  S U^\top v_j k(\cdot,y_j)$. The operator $\Ucal$ is injective, because $\Ucal h(\cdot)=0$ implies that $SU^\top v_j=0$ and thus $v_j=0$ for all $j=1,\dots,n$, hence $h=0$. Here we assume that $n$ is minimal in the sense that $k(\cdot,y_1),\dots,k(\cdot,y_n)$ are linearly independent in $\Hcal_k$, without loss of generality. We claim that $\Ucal$ is also surjective, $\Ucal(\Dcal)=\bigoplus_{a=1}^{\tilde A}\Dcal_k$. Indeed, any $g\in \bigoplus_{a=1}^{\tilde A}\Dcal_k$ can be written as $g(\cdot)=\sum_{j=1}^n w_j k(\cdot,y_j) $, for some $w_j\in\R^{\tilde A}$, $y_1,\dots,y_n\in E$, $n\in \N$. Define the linear operator $\Vcal: \bigoplus_{a=1}^{\tilde A}\Dcal_k \to \Dcal$ by $\Vcal g(\cdot) = U \sum_{j=1}^n  w_j k(\cdot,y_j) $. As the $\tilde A\times A$-matrix $SU^\top $ has full rank $\tilde A$, there exist $v_j\in\R^A$ such that $w_j=SU^\top v_j$. Then $h\in\Dcal$ given by $h(\cdot)=\Vcal g(\cdot)= \sum_{j=1}^n  B v_j k(\cdot,y_j)$ is a pre-image of $g$, because $\Ucal h(\cdot)= U^\top U \sum_{j=1}^n  w_j k(\cdot,y_j) = g(\cdot)$. We conclude that $\Ucal:\Dcal\to \bigoplus_{a=1}^{\tilde A}\Dcal_k$ is a linear bijection with inverse given by $\Ucal^{-1}=\Vcal$, which proves \ref{thmRHKSsepNEWX1}.

We also obtain that the components $h_a$ of any $h\in\Dcal$ are linear combinations of functions $g_b \in\Dcal_k$ and thus elements in $\Dcal_k$ themselves. As $\tilde A=A$ if and only if $B$ is non-singular, this proves \ref{thmRHKSsepNEWX2}.

Next we claim that \eqref{HKQeqfullnew} holds for $h\in\Dcal$. Indeed, on one hand we have
\[\| h\|^2_\Hcal = \sum_{i,j=1}^n \langle  B v_i k(\cdot,y_i) ,  Bv_j k(\cdot,y_j)\rangle_\Hcal =\sum_{i,j=1}^n  v_i^\top B v_j k(y_i,y_j) \]
by the basic reproducing kernel property of $K(\cdot,y_i) = B k(\cdot,y_i)$. On the other hand, the right hand side of \eqref{HKQeqfullnew} equals
\[ \sum_{a,b=1}^A Q_{ab}\langle h_a,h_b\rangle_{\Hcal_k}= \sum_{i,j=1}^n \sum_{a,b=1}^A Q_{ab} (Bv_i)_a (Bv_j)_b k(y_i,y_j)= \sum_{i,j=1}^n v_i^\top B  Q B v_j k(y_i,y_j),\]
which equals the former and thus proves \ref{thmRHKSsepNEW3}.

As for \eqref{HKQeqnew}, straightforward rearrangement of sums shows that the right hand side of \eqref{HKQeqnew} equals
  \begin{align*}
    RHS&=\sum_{a=1}^A  Q_{aa}\|h_a\|_{\Hcal_k}^2+\sum_{a=1}^A\sum_{b\neq a} Q_{ab}\left(\|h_a\|_{\Hcal_k}^2-\frac{1}{2}\|h_a-h_b\|_{\Hcal_k}^2\right) \\
       &= \sum_{a=1}^A  Q_{aa}\|h_a\|_{\Hcal_k}^2+\sum_{a=1}^A\sum_{b\neq a} Q_{ab}\langle h_a,b_b\rangle_{\Hcal_k}=\sum_{a,b=1}^A  Q_{ab}\langle h_a,h_b\rangle_{\Hcal_k}.
  \end{align*}
In view of \eqref{HKQeqfullnew}, this proves \ref{thmRHKSsepNEW4}.

We now extend the validity of the above proved properties to $\Hcal$ and $\Hcal_k$. Thereto, when writing $h=\Ucal^{-1} g$ for $g=\Ucal h\in \bigoplus_{a=1}^{\tilde A}\Dcal_k$, we observe that the right hand side of \eqref{HKQeqfullnew} becomes
\begin{equation}\label{hsumgH}
    \| h\|^2_\Hcal  = \sum_{a=1}^{\tilde A} s_a^{-1} \|g_a\|_{\Hcal_k}^2.
\end{equation}
Indeed, $BQB=B$ implies $U^\top Q U=S^{-1}$, and thus $v^\top Q v = w^\top S^{-1} w$ for any $v=Uw$, which shows \eqref{hsumgH}.\footnote{In more detail: we have $h_a=\sum_{i=1}^{\tilde A} U_{ai}g_i$, and hence $\sum_{a,b=1}^A Q_{ab}\langle h_a,h_b\rangle_{\Hcal_k} = \sum_{i,j=1}^{\tilde A}\sum_{a,b=1}^A U_{ai}Q_{ab}U_{bj}\langle g_i,g_j\rangle_{\Hcal_k}$, which equals the right hand side of \eqref{hsumgH}.} We obtain the bounds $s_1^{-1} \| g\|_{\bigoplus_{a=1}^{\tilde A}\Hcal_k}^2 \le \| h\|_\Hcal^2\le s_{\tilde A}^{-1} \| g\|_{\bigoplus_{a=1}^{\tilde A}\Hcal_k}^2$

We infer that $\Ucal:\Dcal\subset\Hcal \to \bigoplus_{a=1}^{\tilde A}\Hcal_k$ is bounded with operator norm $\|\Ucal\|= s_1$. In the same vein, $\Ucal^{-1}:\bigoplus_{a=1}^{\tilde A}\Dcal_k\subset \bigoplus_{a=1}^{\tilde A}\Hcal_k\to \Hcal$ is bounded with operator norm $\|\Ucal^{-1}\|= s_{\tilde A}^{-1}$. By the extension principle for bounded densely defined operators on Banach spaces, \cite[Section III.2.2]{kat_95}, $\Ucal$ uniquely extends to an invertible bounded operator $\Ucal:\Hcal \to \bigoplus_{a=1}^{\tilde A}\Hcal_k$ with inverse given by the respective extension of $\Ucal^{-1}$. As norm convergence in $\Hcal$ and $\bigoplus_{a=1}^{\tilde A}\Hcal_k $ implies point-wise convergence, we have $\Ucal h(\cdot) = U^\top h(\cdot)$ and $\Ucal^{-1}g(\cdot)=Ug(\cdot)$, for all $h\in\Hcal$ and $g\in \bigoplus_{a=1}^{\tilde A}\Hcal_k$. The validity of \ref{thmRHKSsepNEWX1}, \ref{thmRHKSsepNEWX2}, \ref{thmRHKSsepNEW3}, \ref{thmRHKSsepNEW4} for $\Hcal$ and $\Hcal_k$ now follows by continuity arguments.
\end{proof}

\begin{remark}
Equation \eqref{HKQeqfullnew} is also proved in \cite[Proposition 1]{baldassarre:multi_output_learning}, however, only for simple functions of the form \eqref{hsimplen}, which corresponds to the first step in our proof of Theorem \ref{thmRHKSsepNEW}. 
\end{remark}

The following two auxiliary lemmas are of independent interest and potentially useful for the specification of a matrix-valued kernel. They provide general elementary decomposition results, which are known as kernel normalization in the scalar case.
\begin{lemma}\label{lemnormdec}
  Any $\R^{A\times A}$-valued kernel $K$ can be decomposed in the following way
  \begin{equation}\label{eqnormdec}
    K(x,y) = S(x) R(x,y) S(y)
  \end{equation}
  where $R$ is a normalized $\mathbb{R}^{A\times A}$-valued kernel such that $R_{aa}(x,x)=1$, and $S(x)$ is a diagonal matrix with non-negative elements, for all $a=1,\dots, A$ and $x\in E$. 

  A particular decomposition is given by
  \begin{equation}\label{Snormdec}
    S_{aa}(x) = K_{aa}(x,x)^\frac{1}{2}
  \end{equation}
  and  
  \begin{equation}\label{Rnormdec}
    R_{ab}(x,y)=\begin{cases} 1,&\text{if $a=b$ and $x=y$,}\\ S_{aa}(x)^{-1} K_{ab}(x,y) S_{bb}(y)^{-1}, &  \text{if $S_{aa}(x)>0$ and $S_{bb}(y)>0$,}\end{cases}
  \end{equation}
  and we set 
  \begin{equation}\label{Rnormdec0}
    R_{ab}(x,y)=0\quad \text{otherwise.}
  \end{equation}  
  On the other hand, any such decomposition necessarily satisfies \eqref{Snormdec} and \eqref{Rnormdec}.\footnote{Property \eqref{Rnormdec0} does not necessarily hold. Indeed, consider the finite set $E=\{x_1,x_2\}$ and $A=1$ and suppose that $K(x_1,x_1)=1$ and $K(x_1,x_2)=K(x_2,x_2)=0$. Then $R(x_1,x_1)=1$, $R(x_1,x_2)=1/2$ and $R(x_2,x_2)=1$ is a normalized kernel satisfying the decomposition \eqref{eqnormdec}, but not \eqref{Rnormdec0}.}
\end{lemma}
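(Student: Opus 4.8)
The plan is to break the argument into three parts: first establishing a pointwise Cauchy--Schwarz inequality that controls off-diagonal entries by diagonal ones, then verifying that the explicit formulas \eqref{Snormdec}--\eqref{Rnormdec0} define a valid normalized decomposition, and finally reading off the converse uniqueness claim componentwise.

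First I would record the inequality
\[
K_{ab}(x,y)^2 \le K_{aa}(x,x)\,K_{bb}(y,y).
\]
This follows directly from the scalar-kernel reformulation $K_{ab}(x,y)=k((a,x),(b,y))$ noted after the kernel definition, or equivalently by applying \eqref{PSD} to the two index-points $(a,x)$, $(b,y)$ with scalar weights $c_1,c_2$ and using \eqref{Ktop} to get $K_{ba}(y,x)=K_{ab}(x,y)$: non-negativity of the resulting quadratic form $c_1^2 K_{aa}(x,x)+2c_1c_2 K_{ab}(x,y)+c_2^2 K_{bb}(y,y)$ is exactly this discriminant inequality. In particular $K_{aa}(x,x)\ge 0$, so $S_{aa}(x)=K_{aa}(x,x)^{1/2}$ is well defined and non-negative, and whenever $S_{aa}(x)=0$ the inequality forces $K_{ab}(x,y)=0$ for every $b,y$. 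This implication is exactly what makes the factorization $K(x,y)=S(x)R(x,y)S(y)$ hold on the degenerate entries of \eqref{Rnormdec0}: there one side vanishes by definition of $R$ and the other by Cauchy--Schwarz, while on the set $S_{aa}(x),S_{bb}(y)>0$ the identity is immediate from \eqref{Rnormdec}, and the diagonal case reduces to $S_{aa}(x)^2=K_{aa}(x,x)$.

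The hard part will be showing that the $R$ defined by \eqref{Rnormdec}--\eqref{Rnormdec0} is itself a kernel, i.e.\ satisfies \eqref{Ktop} and \eqref{PSD}, since the ad hoc zeroing of entries breaks the clean conjugation formula $R(x,y)=S(x)^{-1}K(x,y)S(y)^{-1}$ that would otherwise make PSD automatic. Symmetry \eqref{Ktop} is routine from $K_{ba}(y,x)=K_{ab}(x,y)$. For \eqref{PSD} I would fix points and weights $u_i\in\R^A$, assume the $x_i$ distinct without loss of generality, and classify each index pair $(i,a)$ as \emph{active} if $S_{aa}(x_i)>0$ and \emph{inactive} otherwise. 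The key observation is that every mixed active/inactive entry of $R$ vanishes: an off-diagonal or off-point entry is zeroed by \eqref{Rnormdec0} as soon as one $S$ factor vanishes, while a surviving diagonal entry $R_{aa}(x,x)=1$ can only pair two indices of identical activity status. Hence $\sum_{i,j}u_i^\top R(x_i,x_j)u_j$ block-decomposes into an active and an inactive part. On the active block $R$ equals $S^{-1}KS^{-1}$ exactly, so substituting $v_{i,a}=S_{aa}(x_i)^{-1}u_{i,a}$ (and $v_{i,a}=0$ on inactive indices) turns it into the full $K$-form, which is $\ge 0$ by \eqref{PSD}; the inactive block collapses to $\sum_{\text{inactive }(i,a)}u_{i,a}^2\ge 0$. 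Non-negativity of the total follows, and $R_{aa}(x,x)=1$ gives normalization.

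Finally, for the converse I would read $K=SRS$ componentwise: the diagonal gives $K_{aa}(x,x)=S_{aa}(x)^2 R_{aa}(x,x)=S_{aa}(x)^2$, forcing \eqref{Snormdec} by non-negativity of $S$; and on the set $S_{aa}(x),S_{bb}(y)>0$, solving $K_{ab}(x,y)=S_{aa}(x)R_{ab}(x,y)S_{bb}(y)$ for $R_{ab}$ yields \eqref{Rnormdec}. The freedom left in the remaining degenerate entries, which is exactly why \eqref{Rnormdec0} need not hold, is the content of the footnote's counterexample.
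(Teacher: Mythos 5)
Your proof is correct and takes essentially the same route as the paper: the same classification of index pairs $(a,i)$ by whether $S_{aa}(x_i)>0$, the same vanishing of mixed entries so the quadratic form splits into an active block (reduced to the $K$-form via the substitution $w_{ia}=S_{aa}(x_i)^{-1}u_{ia}$, hence $\ge 0$ by \eqref{PSD}) plus a non-negative inactive block, and the same componentwise reading for the converse. Your one addition, the explicit Cauchy--Schwarz inequality $K_{ab}(x,y)^2\le K_{aa}(x,x)K_{bb}(y,y)$ used to verify that $K(x,y)=S(x)R(x,y)S(y)$ holds on the degenerate entries where some $S_{aa}(x)=0$, is a welcome sharpening of a step the paper subsumes under ``by inspection.''
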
 

\begin{proof}
  Necessity of \eqref{Snormdec} and \eqref{Rnormdec} follows by inspection.

  It remains to prove that $R_{ab}(x,y)$ given by \eqref{Rnormdec} and \eqref{Rnormdec0} defines a $\mathbb{R}^{A\times A}$-valued kernel. It is readily verified that $R_{ab}(x,y)=R_{ba}(y,x)$, which proves \eqref{Ktop}. As for \eqref{PSD}, we define the index set $\Ical_0=\{(a,i)\mid S_{aa}(x_i)=0\}$ and its complement $\Ical_1=\Ical_0^c$. Now let $v_1,\dots,v_n\in\R^A$, and define $w_i\in\R^A$ by $w_{ia}=v_{ia}S_{aa}(x_i)^{-1}$ for $(a,i)\in\Ical_1$ and $w_{ia}=0$ otherwise. Then we have
  \begin{align*}
    \sum_{i,j=1}^n v_i^\top R(x_i,x_j)v_j &=  \sum_{a,b=1}^A \sum_{i,j=1}^n v_{ia}  R_{ab}(x_i,x_j)v_{jb}=\sum_{(a,i)\in\Ical_0} v_{ia}^2+\sum_{(a,i),(b,j)\in\Ical_1}v_{ia}  R_{ab}(x_i,x_j)v_{jb}\\
                                          &\ge \sum_{(a,i),(b,j)\in\Ical_1} w_{ia}  K_{ab}(x_i,x_j) w_{jb} =  \sum_{i,j=1}^n w_{i}^\top  K(x_i,x_j)w_{j}\ge 0
  \end{align*}
  by the kernel property \eqref{PSD} of $K$. This completes the proof.
\end{proof}

In the special case of separable kernels, Lemma \ref{lemnormdec} extends as follows.

\begin{lemma}
  \label{lem:normalizedsep}
  Let $K(x,y) = Bk(x,y)$ be a separable kernel. Then the normalized kernel given by \eqref{Rnormdec} and \eqref{Rnormdec0} is separable of the form $R(x,y)=C \rho(x,y)$ for the symmetric positive semi-definite matrix $C$ given by
  \[ C_{ab}= \begin{cases} 1,&\text{if $a=b$,}\\ B_{aa}^{-\frac{1}{2}} B_{ab}B_{bb}^{-\frac{1}{2}} ,&\text{if $B_{aa}>0$ and $B_{bb}>0$,}\end{cases} \]
  and we set $C_{ab}=0$ otherwise
  and the scalar kernel $\rho$ given by \[\rho(x,y)=\begin{cases} 1,&\text{if $x=y$,}\\ k(x,x)^{-\frac{1}{2}}k(x,y)k(y,y)^{-\frac{1}{2}} ,&\text{if $k(x,x)>0$ and $k(y,y)>0$,}\end{cases}\] 
  and we set $\rho(x,y)=0$ otherwise. In particular, $C$ and $\rho$ are normalized in the sense that $C_{aa}=1$ and $\rho(x,x)=1$, for all $a=1,\dots, A$ and $x\in E$.
\end{lemma}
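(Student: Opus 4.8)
The plan is to specialize the general normalization of Lemma~\ref{lemnormdec} to the separable case $K(x,y)=Bk(x,y)$ and observe that every quantity factorizes into a component-index part governed by $B$ and a maturity part governed by $k$. First I would compute the diagonal scaling \eqref{Snormdec}. Since $K_{aa}(x,x)=B_{aa}\,k(x,x)$, the square root splits multiplicatively,
\[
S_{aa}(x)=\bigl(B_{aa}\,k(x,x)\bigr)^{1/2}=B_{aa}^{1/2}\,k(x,x)^{1/2},
\]
so the diagonal matrix $S(x)$ is itself a product of a constant factor built from $B$ and a scalar factor built from $k$. In the non-degenerate regime where $S_{aa}(x)>0$ and $S_{bb}(y)>0$, I would then substitute this into the defining formula \eqref{Rnormdec}:
\[
R_{ab}(x,y)=\frac{K_{ab}(x,y)}{S_{aa}(x)S_{bb}(y)}=\frac{B_{ab}}{B_{aa}^{1/2}B_{bb}^{1/2}}\cdot\frac{k(x,y)}{k(x,x)^{1/2}k(y,y)^{1/2}}=C_{ab}\,\rho(x,y),
\]
which is exactly the claimed separable form. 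The equal-point/diagonal convention is immediate: for $a=b$ and $x=y$ the first branch of \eqref{Rnormdec} gives $R_{aa}(x,x)=1=C_{aa}\rho(x,x)$, consistent with the stated normalizations $C_{aa}=1$ and $\rho(x,x)=1$.

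To settle the structural claims, I would recognize $C$ and $\rho$ as the scalar-valued normalizations of $B$ and $k$ in the sense of Lemma~\ref{lemnormdec}. Viewing the symmetric positive semi-definite matrix $B$ as a scalar kernel on the finite index set $\{1,\dots,A\}$, its normalization is precisely $C$, so $C$ is symmetric positive semi-definite with unit diagonal; likewise $\rho$ is the normalization of the scalar kernel $k$, hence a scalar kernel with $\rho(x,x)=1$. By Definition~\ref{def:separable_kernel}, the product $R=C\rho$ is then a valid separable $\R^{A\times A}$-valued kernel, which re-derives its kernel property independently of Lemma~\ref{lemnormdec}.

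The main obstacle lies in the degenerate boundary cases, where $B_{aa}=0$ or $k(x,x)=0$ and the factors $S_{aa}(x)$ vanish, so that \eqref{Rnormdec0} forces $R_{ab}=0$. The key structural fact I would exploit is that positive semi-definiteness makes an entire row and column vanish as soon as a diagonal entry does: the standard $2\times2$ principal-minor argument for the positive semi-definite matrix $B$ gives $B_{aa}=0\Rightarrow B_{ab}=0$ for all $b$, and the analogous argument from \eqref{PSD} for the scalar kernel gives $k(x,x)=0\Rightarrow k(x,y)=0$ for all $y$. This makes the convention $R_{ab}=0$ agree with $C_{ab}\rho(x,y)=0$ on the genuinely off-diagonal degenerate entries. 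The delicate point is the mixed entries — for instance $a=b$ with $x\neq y$ when $B_{aa}=0$ while $k$ is non-degenerate — where the stated conventions $C_{aa}=1$ and $\rho(x,x)=1$ do not automatically produce the matching zero. Here I would invoke the natural non-degeneracy assumption that $B_{aa}>0$ and $k(x,x)>0$ for all $a$ and all relevant $x$, which holds in our setting since $B=Q^{-1}$ with $Q$ positive definite and the Sobolev kernel \eqref{eqkSobol} satisfies $k(x,x)>0$ for $x>0$, so that the clean factorization above applies throughout and no degenerate branch is triggered.
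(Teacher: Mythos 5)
Your core argument coincides with the paper's proof, which is deliberately terse: it reduces the lemma to showing that $C$ is symmetric positive semi-definite and $\rho$ is a scalar kernel, ``using similar arguments as in the proof of Lemma~\ref{lemnormdec}'' --- precisely your device of viewing $C$ as the normalization of $B$ regarded as a scalar kernel on the index set $\{1,\dots,A\}$, and $\rho$ as the normalization of $k$. Your factorization $S_{aa}(x)=B_{aa}^{1/2}k(x,x)^{1/2}$ via \eqref{Snormdec}, and the resulting identity $R_{ab}(x,y)=C_{ab}\,\rho(x,y)$ on the non-degenerate set, is the routine verification the paper leaves implicit.

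Where you depart from the paper is in the mixed degenerate cases, and there your diagnosis is better than your cure. You are right that these cases are delicate: if $k(x,x)=0$ while $B_{aa},B_{bb}>0$ and $B_{ab}\neq 0$ for some $a\neq b$, then $S_{aa}(x)=0$, so \eqref{Rnormdec0} forces $R_{ab}(x,x)=0$, whereas $C_{ab}\,\rho(x,x)=C_{ab}\neq 0$ because both default branches return $1$ on the diagonal; symmetrically, $B_{aa}=0$ with $x\neq y$ and $k$ non-degenerate gives $R_{aa}(x,y)=0$ while $C_{aa}\,\rho(x,y)=\rho(x,y)$ can be nonzero. So the pointwise identity $R=C\rho$ genuinely fails at such points --- no argument can close this --- and note that the paper's own Sobolev kernel \eqref{eqkSobol} satisfies $k(0,0)=0$, so with $B=Q^{-1}$ (positive diagonal, generically nonzero off-diagonal entries) the mismatch occurs at the boundary point $x=y=0$ of $E=[0,\infty)$. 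Your fix --- importing the hypothesis $B_{aa}>0$ and $k(x,x)>0$ for ``all relevant $x$'' --- is therefore an emendation of the statement rather than a proof of it, and even in the paper's setting it silently excludes $x=0$. The defensible conclusions are: (i) the identity holds wherever no mixed degeneracy occurs, in particular everywhere once $B_{aa}>0$ for all $a$ and $k(x,x)>0$ for all $x\in E$; and (ii) unconditionally, $C\rho$ is a normalized separable kernel realizing the decomposition \eqref{eqnormdec}, since the offending entries are annihilated by the vanishing factors $S_{aa}(x)$ --- it satisfies \eqref{Snormdec} and \eqref{Rnormdec} but differs from the specific convention \eqref{Rnormdec0} on the degenerate set, exactly the phenomenon flagged in the footnote to Lemma~\ref{lemnormdec}. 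The paper's proof passes over this in silence, so your observation is a genuine catch; present it as a correction to the lemma's degenerate branch rather than as an extra standing assumption.
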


\begin{proof}
  It is enough to show that $C$ is a symmetric positive semi-definite matrix and $\rho$ a scalar kernel. This can both be proved using similar arguments as in the proof of Lemma~\ref{lemnormdec}.
\end{proof}

\section{Proofs}

This appendix provides the proofs of the results stated in the main text, based on the foundational material presented in Appendix~\ref{secRKHS}.

\subsection{Proof of Theorem \ref{thm:vect_krr}}

Let $S$ be the sampling operator as in equation \eqref{eq:sampling_op}.
  For any $m \in \{1,\dots,M\}$ define $a(m),i(m)$ such that $\bm C_{m} = (\dots, C_{a(m),i(m)}, \dots)$ is the $m$-th row of $\bm C$, and $\bm P_m=P_{a(m),i(m)}$ is the $m$-th component of $\bm P$, and $\bm\omega_m = \omega_{a(m),i(m)}$ the corresponding weight. Then the weighted mean-squared pricing error can be written as
  \[ \sum_{m=1}^M \bm \omega_m ( \bm P_m-\bm C_m \vect( p^\top(\bm x)) -\bm C_m Sh)^2 .\]
Similarly for the constraints, where $\bm \omega_m=\infty$. 

It then follows that the solution of the KR problem must lie in the orthogonal complement of the null space of $\bm C S$. That is, $h= S^\ast\bm C^\top q$, for some $q\in\R^M$. The rest of the proof now follows as in the scalar case \cite[Theorem A.1]{Filipovic2022}, using Lemma~\ref{lemadjS} below. This completes the proof of Theorem \ref{thm:vect_krr}.

\begin{lemma}\label{lemadjS}
  Define the sampling operator $S:\Hcal \to \R^{AN}$ by 
  \begin{equation}
  \label{eq:sampling_op}
Sh = \vect(h^\top(\bm x)).      
  \end{equation}
  The adjoint $S^\ast:\R^{AN}\to \Hcal$ is given by
  \begin{equation}\label{eqSast}
    S^\ast v = \sum_{j=1}^N K(\cdot,x_j) V_j^\top
  \end{equation}
  where $V_j$ is the $j$-th row of the matrix $V\in\R^{N\times A}$ with $\vect(V)=v$. Moreover, $\bm K$ is the matrix representation of the linear operator $SS^\ast:\R^{AN}\to\R^{AN}$ in the standard Euclidean basis of $\R^{AN}$.
\end{lemma}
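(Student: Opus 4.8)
The plan is to establish the adjoint formula \eqref{eqSast} directly from the defining relation $\langle Sh, v\rangle_{\R^{AN}} = \langle h, S^\ast v\rangle_\Hcal$, required to hold for all $h\in\Hcal$ and $v\in\R^{AN}$, and then to read off $SS^\ast$ by composing the two operators. The whole argument rests on a single tool, the reproducing property $\langle K(\cdot,y)w, h\rangle_\Hcal = w^\top h(y)$ recalled in Appendix~\ref{secRKHS}, together with the membership $K(\cdot,y)w\in\Hcal$ for every $w\in\R^A$.

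First I would fix $v\in\R^{AN}$ and write $v=\vect(V)$ for $V\in\R^{N\times A}$, so that, by the column-stacking convention, the entry of $v$ in the $(a,j)$-slot (block $a$, within-block index $j$) equals $V_{ja}$. Unwinding $Sh=\vect(h^\top(\bm x))$ from \eqref{eq:sampling_op}, whose $(a,j)$-slot is $h_a(x_j)$, the Euclidean inner product on the left becomes $\langle Sh,v\rangle_{\R^{AN}}=\sum_{a=1}^A\sum_{j=1}^N h_a(x_j)\,V_{ja}$.

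For the right-hand side I would insert the candidate $S^\ast v=\sum_{j=1}^N K(\cdot,x_j)V_j^\top$, where $V_j$ is the $j$-th row of $V$ and hence $V_j^\top\in\R^A$. Applying the reproducing property with $w=V_j^\top$ (so $w^\top=V_j$) and summing over $j$ gives $\langle h, S^\ast v\rangle_\Hcal=\sum_{j=1}^N V_j\,h(x_j)=\sum_{j=1}^N\sum_{a=1}^A V_{ja}\,h_a(x_j)$, which coincides with the left-hand side. Since this holds for all $h$ and $v$, the expression \eqref{eqSast} is indeed the adjoint. For the second claim I would set $f:=S^\ast v$ and evaluate at the sampling points, $f(x_i)=\sum_{j=1}^N K(x_i,x_j)V_j^\top$, whose $a$-th component is $f_a(x_i)=\sum_{j=1}^N\sum_{b=1}^A K_{ab}(x_i,x_j)V_{jb}$. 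Stacking these via $Sf=\vect(f^\top(\bm x))$ and comparing slot-by-slot with the action of $\bm K$, whose $(a,b)$-block has entries $K_{ab}(x_i,x_j)$ by \eqref{eq:kernel_matrix}, yields $SS^\ast v=\bm K v$, so $\bm K$ represents $SS^\ast$ in the standard basis.

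There is no genuine analytic difficulty here; the only real care required is bookkeeping across the two vectorization conventions—keeping straight that $V_j$ denotes a \emph{row} of $V$ while the column $V_j^\top\in\R^A$ is what is fed into the kernel, and that the $(a,j)$-stacking of $Sh$ aligns with that of $v=\vect(V)$. Ensuring these index alignments are consistent throughout is the single point at which an error could creep in, and I would verify it explicitly on both computations above.
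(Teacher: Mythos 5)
Your proposal is correct and follows essentially the same route as the paper's own proof: both rest on the reproducing property $\langle K(\cdot,x_j)V_j^\top, h\rangle_{\Hcal} = V_j h(x_j)$ to match $\langle Sh, v\rangle_{\R^{AN}}$ with $\langle h, S^\ast v\rangle_{\Hcal}$, and then obtain $SS^\ast v = \bm K v$ by evaluating $S^\ast v$ at the sampling points and comparing blockwise with \eqref{eq:kernel_matrix}. The only cosmetic difference is that you verify the stated candidate for $S^\ast$ while the paper reads it off from the inner-product identity; since the adjoint is unique, the two arguments are equivalent.
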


\begin{proof}[Proof of Lemma \ref{lemadjS}]
  Let $v\in \R^{AN}$ and $V\in \R^{N\times A}$ its matricization such that $\vect(V)=v$. Then
  \[    \langle Sh,v\rangle_{\R^{AN}} = \sum_{j=1}^N\sum_{a=1}^A h_a(x_j) V_{ja} = \sum_{j=1}^N V_j  h(x_j) = \sum_{j=1}^N \langle  h , K(\cdot, x_j) V_j^\top\rangle_\Hcal,\]
  which proves \eqref{eqSast}. In coordinates, \eqref{eqSast} reads as
  \[ S^\ast v = \sum_{b=1}^A\sum_{j=1}^N  \left(K_{1b}(\cdot,x_j),K_{2b}(\cdot,x_j),\dots,K_{Ab}(\cdot,x_j)\right)^\top V_{jb},\]
  and thus we obtain
  \[ SS^\ast v = \sum_{b=1}^A \sum_{j=1}^N  \vect\left(K_{1b}(\bm x,x_j),K_{2b}(\bm x,x_j),\dots,K_{Ab}(\bm x,x_j)\right) V_{jb} = \bm K v,\]
  as desired.
\end{proof}

\subsection{Proof of Theorem \ref{thm_vkr}}

According to Theorem \ref{thmRHKSsepNEW}\ref{thmRHKSsepNEW4} it is enough to construct a symmetric positive definite matrix $Q$ such that $\gamma_a=  \sum_{b=1}^A Q_{ab}$ and $Q_{ab}=-\Theta_{ab}$ for $a<b$. Therefore, we parameterize $Q$ by the $A(A-1)/2$ spread smoothness parameters $\Theta_{ab} \ge 0$, as defined in \eqref{eqgraphreg}.

By construction, the matrix $Q$ is strictly diagonally dominant, $Q_{aa} > \sum_{b\neq a} |Q_{ab}|$, for all $a$, and hence positive definite, see \cite[Theorem 6.1.10]{horn_johnson_2012}. Hence $B=Q^{-1}$ is symmetric and positive definite leading to a valid separable kernel. Theorem \ref{thmRHKSsepNEW} implies that the norm of the vector-valued RKHS $\Hcal$ with separable kernel $K(x,y) = Bk(x,y)$ is given by \eqref{eq:graph_norm}. Theorem \ref{thmRHKSsepNEW} also implies that the optimization problem \eqref{eq:multi_curve_motivated} over the product space $(\Hcal_k)^A$ is equivalent to the KR problem \eqref{eq:krr_obj} with norm \eqref{eq:graph_norm} for $\lambda=1$.

\begin{remark}
  The matrix $Q$ in \eqref{eqgraphreg} is strictly diagonally dominant, by construction. This is sufficient for $Q$ being positive definite. However, not every symmetric positive definite matrix is strictly diagonally dominant. An example is given by
  \begin{equation*}
    Q=
    \begin{pmatrix}
      4 & q \\ q & 1
    \end{pmatrix},
  \end{equation*}
  for any $1<q<2$. Indeed, the characteristic polynomial is  
  $(4-\lambda)(1-\lambda)-q^2 = \lambda^2-5\lambda + 4-q^2$. Hence the eigenvalues of $Q$ are positive, $\lambda_{1,2}=\frac{5\pm \sqrt{25-4(4-q^2)}}{2}>0$,
  and $Q$ is positive definite. However, $Q$ is not diagonally dominant, as $Q_{22}=1 < q = Q_{21}$. In that sense, specification \eqref{eq:graph_norm} is a special case of a vector-valued RKHS with separable kernel as discussed in Theorem \ref{thmRHKSsepNEW}    
\end{remark}

\subsection{Proof of Lemma \ref{lemYR}}
Under the assumption of the lemma, we have after multiplication with $\e^{T_0 Y}$
\[ 0 = \Delta R \sum_{j=1}^n \e^{-\Delta Y j} + \e^{-\Delta Y n} -1 = \Delta R \frac{q}{1-q}(1-q^n) - (1-q^n),\]
where we write $q= \e^{-\Delta Y}$. Therefore $\Delta R = \frac{1-q}{q}$, which proves the claim.

\section{Arbitrage-Free Pricing Framework}
\label{appendix:na}

In this appendix, we place the discounted cash flow equation~\eqref{eq:pricing_eq} within an arbitrage-free pricing framework, following standard principles of asset pricing theory (see, e.g., \cite{bjo_09}). 

Let $(\Omega, \Fcal, \Q)$ be a probability space equipped with a filtration $(\Fcal_t)_{t \ge 0}$ representing the flow of market information. All processes are assumed to be adapted to this filtration. The pricing measure $\Q$ is risk-neutral with respect to a numeraire $B(t)$, interpreted as the money market account, satisfying $B(0) = 1$ and accruing at the overnight RFR. The present value at time $0$ of an $\Fcal_T$-measurable cash flow $Z$ paid at time $T > 0$ is
\begin{equation}\label{eqnPVformula}
    PV_Z = \E_\Q\bigg[\frac{Z}{B(T)}\bigg] = \E_{\Q^T}[Z]\, g_0(T),
\end{equation}
where $g_0(T) = \E_\Q\big[\frac{1}{B(T)}\big]$ is the price of a risk-free discount bond maturing at $T$, and $\Q^T$ denotes the $T$-forward measure defined via the Radon--Nikodym derivative $\frac{d\Q^T}{d\Q} = \frac{1}{g_0(T)\, B(T)}$.

\subsection{Non-Defaultable Bonds}

Bonds issued by highly rated sovereigns, such as US\ Treasuries or German government bonds, are typically regarded as non-defaultable (or risk-free). The discounted cash flow equation~\eqref{eq:pricing_eq} applies directly with $g_a = g_0$ for such a bond paying nominal coupons $c_1,\dots,c_n$ at dates $0 < T_1 < \dots < T_n$ and the notional of one at the maturity $T_n$. 

\subsection{Defaultable Bonds}\label{ssec_defaultbond}

Defaultable (or credit-risky) bonds include corporate debt and sovereign debt issued by less creditworthy countries. These instruments generally trade at a spread over the risk-free curve to reflect credit risk. Let $\tau$ denote the default time (which is a stopping time). Under the widely used recovery-of-treasury assumption (see~\cite{Jarrow1995-ss}), the cash flow at $T_i$ is modeled as
\[
Z_i = c_i \, \mathbf{1}_{\{\tau > T_i\}} + c_i \, \delta_i \, \mathbf{1}_{\{\tau \le T_i\}},
\]
where $\delta_i \in [0,1)$ is a deterministic recovery rate. Applying~\eqref{eqnPVformula}, we obtain
\[
PV_{Z_i} = \E_{\Q^T}[Z_i]\, g_0(T_i) = c_i \big( \Q^{T_i}[\tau > T_i] + \delta_i\, \Q^{T_i}[\tau \le T_i] \big) g_0(T_i),
\]
which motivates the effective discount factor
\begin{equation}\label{defbona0}
    g_a(T_i) = \big( \Q^{T_i}[\tau > T_i] + \delta_i\, \Q^{T_i}[\tau \le T_i] \big) g_0(T_i).
\end{equation}

Defaultable bonds are typically grouped by credit rating. Assuming all bonds within a given rating class~$a$ share the same default distribution and recovery profile, the class admits a common discount curve $g_a(x)$, and the discounted cash flow equation~\eqref{eq:pricing_eq} applies.

\subsection{RFR-Based Swaps}\label{ssec_OIS}

An RFR-based swap is an interest-rate swap whose floating leg is linked to the money market account \( B(t) \), which accrues at the RFR, such as SOFR in the United States or SARON in Switzerland, see \cite{FedSofr, SixSaron}. Under the no-arbitrage assumption, RFR-based swap contracts should be priced using the same discount curve \( g_0 \) as creditworthy government bonds denominated in the same currency. In practice, however, a swap--government bond spread is observed. This spread arises due to market frictions and regulatory effects, and lies outside the scope of our simple arbitrage-free pricing framework, see, e.g., \cite{wu_jar_24}.

As for the fixed leg, let \( T_0 < T_1 < \dots < T_n \) denote the payment dates, with notional normalized to one. For a given annualized swap rate \( R \), the fixed cash flow at time \( T_i \) is \( \Delta  R \) with $\Delta = T_i-T_{i-1}$. By \eqref{eqnPVformula}, the present value of the fixed leg is  
\begin{equation*}\label{eqnOISfixed}
    PV_\text{fixed} = \Delta R\sum_{i=1}^n  \, g_0(T_i).
\end{equation*}

Let \(  T_0 = t_0 < \dots < t_m = T_n \) denote the reset and payment dates of the RFR floating leg, again with notional normalized to one. The floating cash flow at time \( t_i > 0 \) corresponds to the simple return of the money market account over the accrual period \( [t_{i-1}, t_i] \), given by $\frac{B(t_i)}{B(t_{i-1})} - 1$. Using \eqref{eqnPVformula} and observing the telescoping structure of the discounted cash flows, we obtain $\sum_{i=1}^m \frac{1}{B(t_i)}\big( \frac{B(t_i)}{B(t_{i-1})} - 1 \big) = \frac{1}{B(T_0)} - \frac{1}{B(T_n)}$,
from which the present value of the RFR floating leg follows as  
\begin{equation}\label{eqnOISfloat}
    PV_\text{RFR–floating} = g_0(T_0) - g_0(T_n).
\end{equation}

Although the above specification, where floating cash flows are "fixed in arrears," has become the standard, see, e.g., \cite{icma2020rfrguide, risk2021sofr}, an alternative is to define the floating rate over \( [t_{i-1}, t_i] \) as the simple return on a discount bond, $R_\text{term}(t_{i-1}, t_i) = \frac{1}{g_0(t_{i-1}, t_i)} - 1$. Here, with a slight abuse of notation, we denote by \( g_0(t, T) = \E_\Q\big[ \frac{1}{B(T)} \mid \Fcal_t \big] \) the time-\( t \) value of a risk-free discount bond maturing at \( T \), such that \( g_0(x) = g_0(0,x) \). Under this alternative specification, the present value of the floating leg remains given by \eqref{eqnOISfloat}, which follows directly as a simple consequence of the arbitrage-free pricing formula \eqref{eqnPVformula}.

\subsection{IBOR Swaps}

Interest-rate swaps whose floating leg is tied to an interbank loan term rate (IBOR) reflect credit and liquidity risk, which we model by adding a spread to the floating cash flows. For example, EURIBOR can be viewed as the sum of the risk-free ESTR and a credit spread capturing interbank risk.\footnote{Strictly speaking, ESTR is not secured, unlike SOFR. However, as an overnight rate, its credit risk is considered negligible, and we treat it as risk-free for our purposes.}

Formally, using the same tenor structures for the floating and fixed legs as in Subsection~\ref{ssec_OIS}, the floating cash flow of an IBOR swap at time \( t_i \) is given by $R_\text{term}(t_{i-1}, t_i) + S(t_{i-1}, t_i)$, where \( S(t_{i-1}, t_i) \) denotes a spread that reflects the credit and liquidity risk of lending in the interbank market over the period \( [t_{i-1}, t_i] \). The present value of the IBOR swap’s floating leg is then  
\begin{equation}\label{eqnIBORfloat}
    PV_\text{IBOR–floating} = g_0(T_0) - g_0(T_n) + \sum_{i=1}^n \E_{\Q^{t_i}}[S(t_{i-1}, t_i)]\, g_0(t_i).
\end{equation}

As in the case of defaultable bonds discussed in Subsection~\ref{ssec_defaultbond}, we classify IBOR swaps according to the length of the accrual period (tenor) of the floating leg, such as quarterly, semiannual, or annual. We assume that all IBOR swaps within a given tenor class \( a \) share the same spread structure, which gives rise to a common discount curve \( g_a(x) \). This curve is determined from the discounted cash flow equation~\eqref{eq:pricing_eq}, in conjunction with the expressions for the floating and fixed cash flows, resulting from \eqref{sswapcf} and \eqref{fswapcf}, respectively.

For positive spreads \( S(t_{i-1}, t_i) > 0 \), the discount curve implied by the IBOR swap is strictly below the RFR-based swap curve, that is, \( g_a(x) < g_0(x) \). However, as seen from \eqref{eqnIBORfloat}, this relationship is not as explicit as in the recovery-of-treasury model for defaultable bonds, as given in~\eqref{defbona0}.

\subsection{Cross-Currency Swaps}

We are considering a standard floating--floating XCCY swap. The tenor structure of the cash flows is given by $0\le t_0 <t_1<\dots <t_m$. The XCCY consists of two legs, leg $a$ and leg $b$. Leg $b$ is treated as the liquid leg. Thus, the basis spread $s$ is added to leg $a$ which has a normalized notional of $1$. The initial notional of leg $b$ is set to the spot exchange rate, \( X_{ab}(t_0) \). The MTM feature is sometimes applied to leg~$b$. 

We now show that, when present, the MTM adjustments do not affect the present value of leg~$b$. According to Clarus Financial Technology,\footnote{Clarus FT is a data and analytics provider focused on OTC derivatives markets. See \cite{clarusft2017} for a discussion of MTM mechanics in cross-currency swaps.} the floating cash flow \( Z_i \) at each payment date \( t_i > 0 \) consists of the simple return on the money market account applied to the MTM notional over the accrual period \( [t_{i-1}, t_i] \), minus the change in MTM notionals over that period, and plus the MTM notional at maturity if \( t_i = t_m \). Formally, this gives  
\[
Z_i = X_{ab}(t_{i-1}) \bigg( \frac{B_b(t_i)}{B_b(t_{i-1})} - 1 \bigg) - \left( X_{ab}(t_i) - X_{ab}(t_{i-1}) \right) + X_{ab}(T) \, 1_{t_i = t_m},
\]
where \( X_{ab}(t) \) denotes the MTM notional in currency \( b \) at time \( t \), and \( B_b(t) \) is the corresponding money market account.

Discounting each cash flow by the money market account and simplifying the telescoping sum yields  
\[
\sum_{i=1}^m \frac{Z_i}{B_b(t_i)} = \sum_{i=1}^m \left( \frac{X_{ab}(t_{i-1})}{B_b(t_{i-1})} - \frac{X_{ab}(t_i)}{B_b(t_i)} \right) + \frac{X_{ab}(t_m)}{B_b(t_m)} = X_{ab}(t_0),
\]
which is known (deterministic) at time $t_0$ and equal to the initial notional. Hence, the present value of leg \( b \) is given by $X_{ab}(t_0)$, as in the case without MTM. This demonstrates that MTM adjustments, while relevant for risk management, do not affect the arbitrage-free valuation of the liquid leg.

\section{Additional Yield and Forward Curves}

This appendix complements Section~\ref{sec:ill_yield_and_fwd_curves} by additional example days shown in Figures~\ref{fig:example_day_2020}--\ref{fig:example_day_2023}.

\label{sec:additional_results}
\begin{figure}[ht!]
  \centering
       \tcapfig{Example day 2020-06-15}
  \begin{subfigure}[t]{0.49\linewidth}
    \centering
    \includegraphics[width=\linewidth]{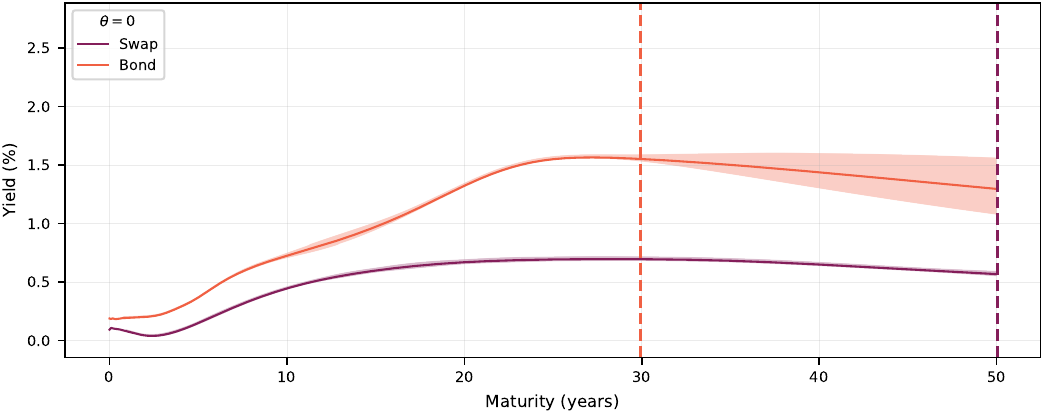}
  \end{subfigure}\hfill
  \begin{subfigure}[t]{0.49\linewidth}
    \centering
    \includegraphics[width=\linewidth]{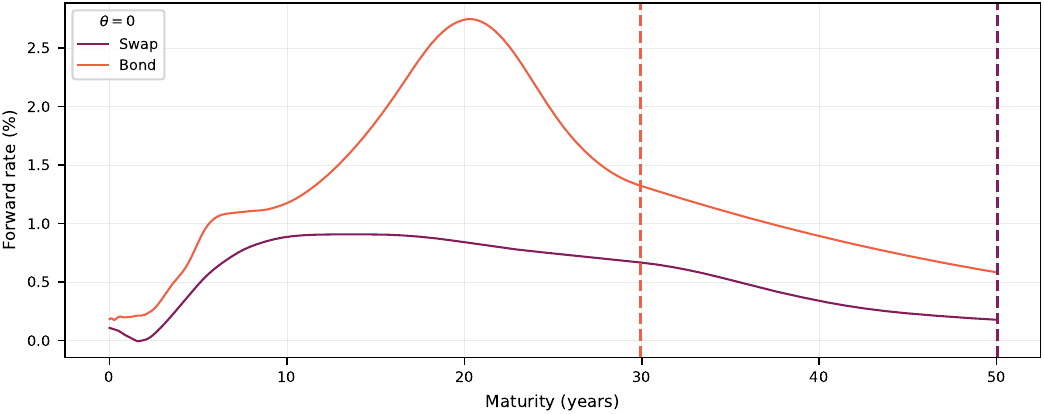}
  \end{subfigure}\vspace{0.3em}
    \begin{subfigure}[t]{0.49\linewidth}
    \centering
    \includegraphics[width=\linewidth]{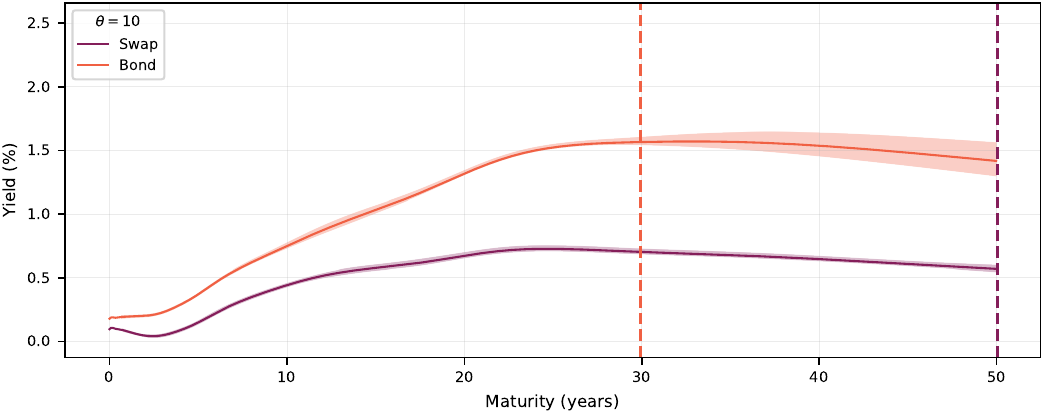}
  \end{subfigure}\hfill
  \begin{subfigure}[t]{0.49\linewidth}
    \centering
    \includegraphics[width=\linewidth]{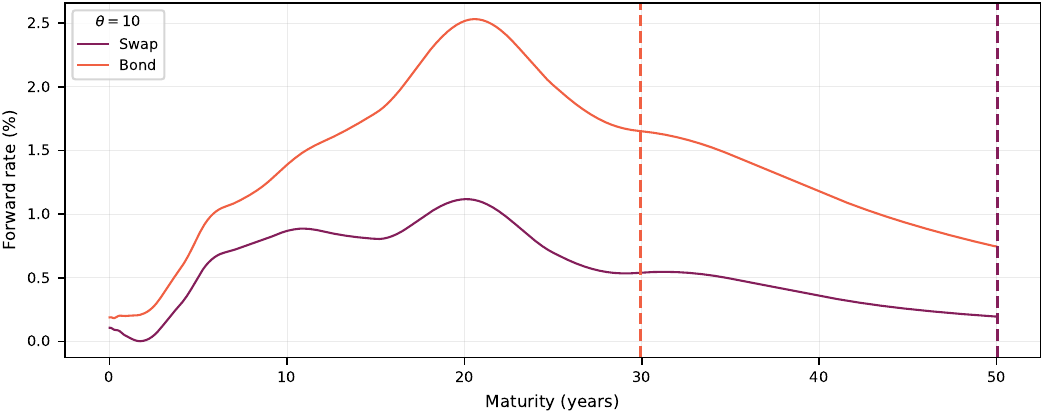}
  \end{subfigure}\vspace{0.3em}
    \begin{subfigure}[t]{0.49\linewidth}
    \centering
    \includegraphics[width=\linewidth]{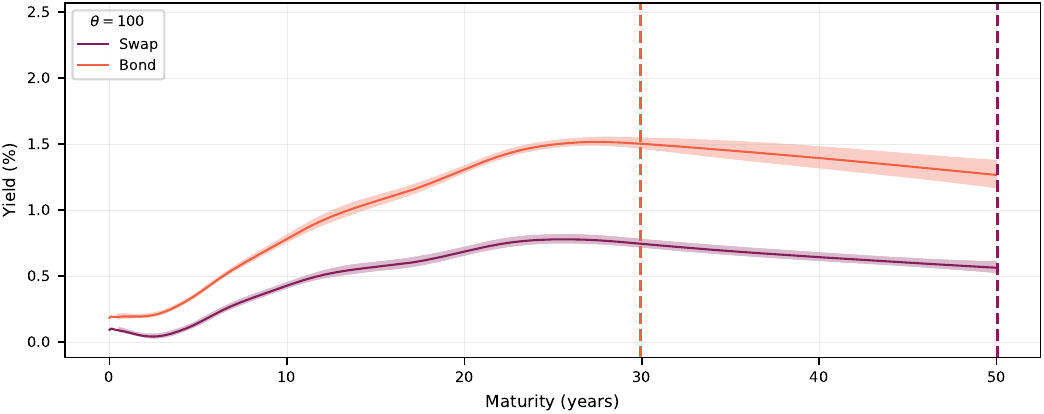}
  \end{subfigure}\hfill
  \begin{subfigure}[t]{0.49\linewidth}
    \centering
    \includegraphics[width=\linewidth]{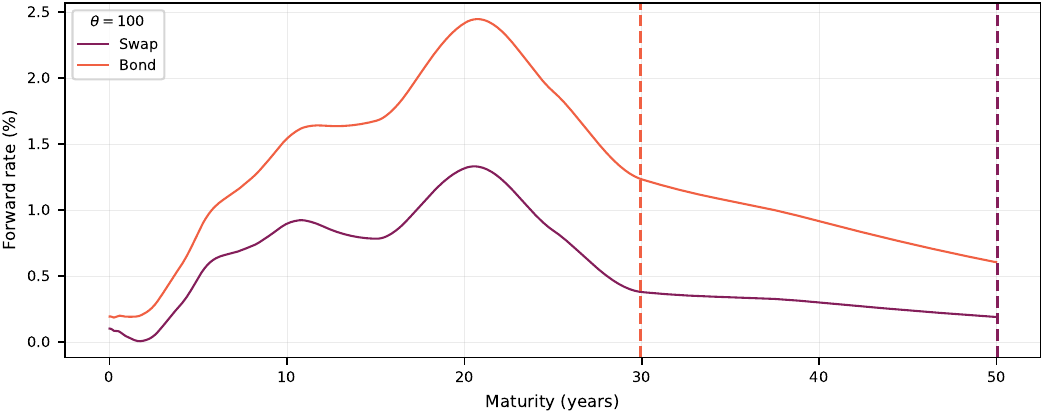}
  \end{subfigure}\vspace{0.3em}
    \begin{subfigure}[t]{0.49\linewidth}
    \centering
    \includegraphics[width=\linewidth]{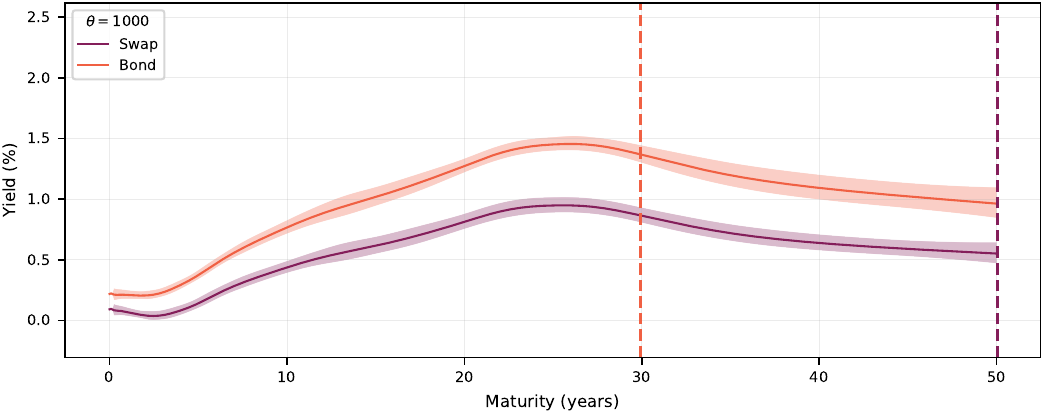}
  \end{subfigure}\hfill
  \begin{subfigure}[t]{0.49\linewidth}
    \centering
    \includegraphics[width=\linewidth]{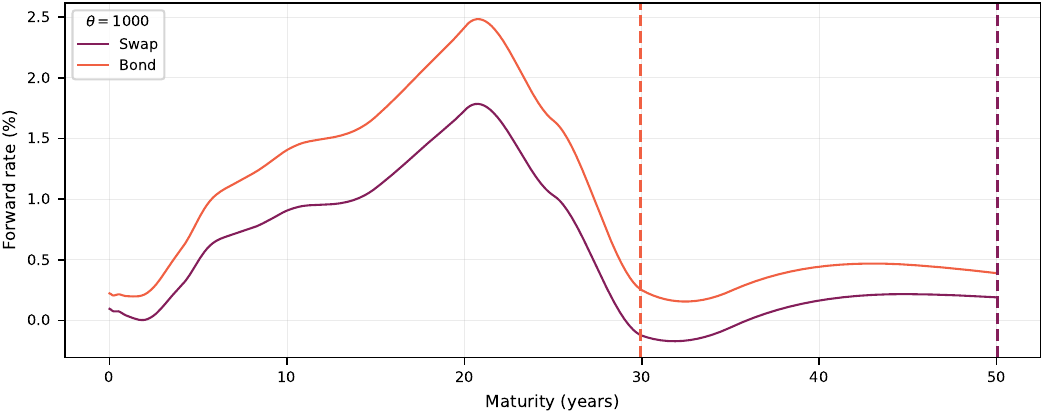}
  \end{subfigure}\vspace{0.3em}
        \bnotefig{This figure shows the resulting yield curves for various $\theta$ on the left and respective forward rate curves on the right on 2020-06-15. In all panels, the vertical dashed lines indicate the longest available data point in the respective product class. The shaded areas show the $3\sigma$ confidence bands derived from the Gaussian process view and are capped at $\pm 2\%$. All values are in \%} 
   \label{fig:example_day_2020}
\end{figure}
\begin{figure}[ht!]
  \centering
       \tcapfig{Example day 2021-06-15}
  \begin{subfigure}[t]{0.49\linewidth}
    \centering
    \includegraphics[width=\linewidth]{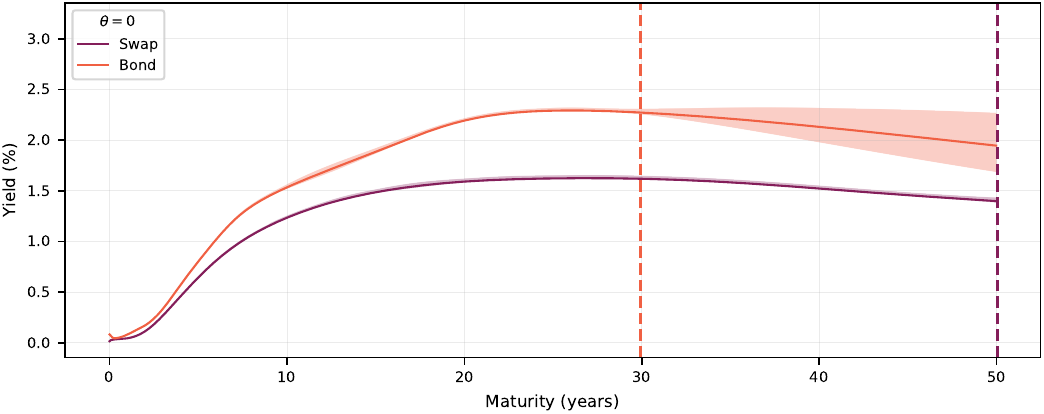}
  \end{subfigure}\hfill
  \begin{subfigure}[t]{0.49\linewidth}
    \centering
    \includegraphics[width=\linewidth]{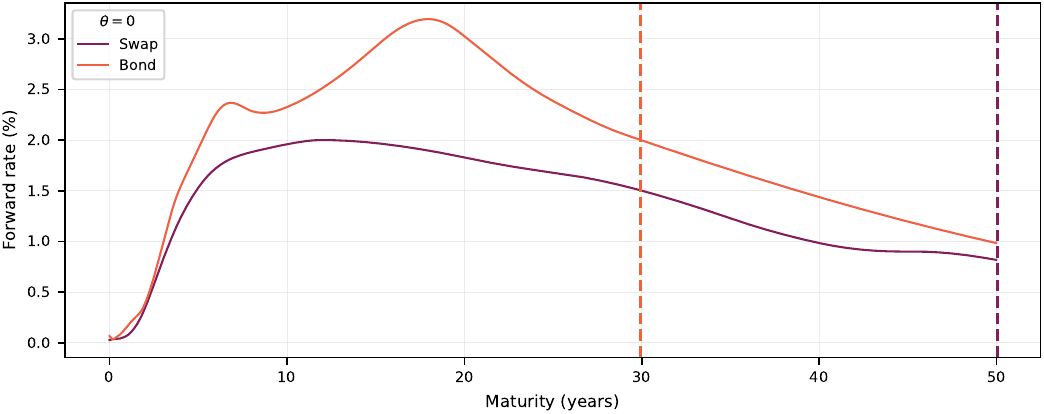}
  \end{subfigure}\vspace{0.3em}
    \begin{subfigure}[t]{0.49\linewidth}
    \centering
    \includegraphics[width=\linewidth]{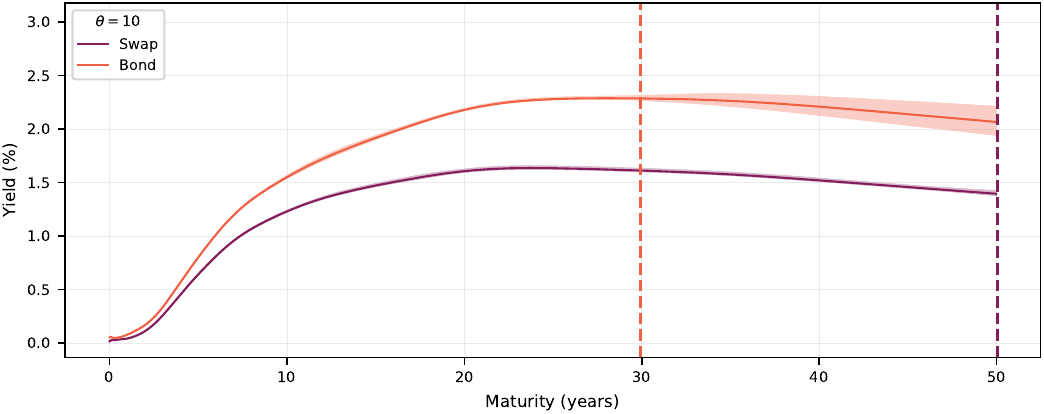}
  \end{subfigure}\hfill
  \begin{subfigure}[t]{0.49\linewidth}
    \centering
    \includegraphics[width=\linewidth]{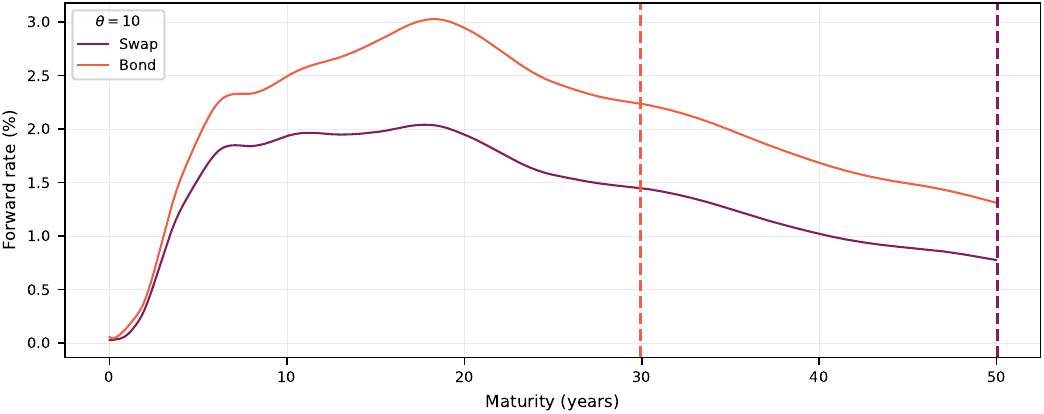}
  \end{subfigure}\vspace{0.3em}
    \begin{subfigure}[t]{0.49\linewidth}
    \centering
    \includegraphics[width=\linewidth]{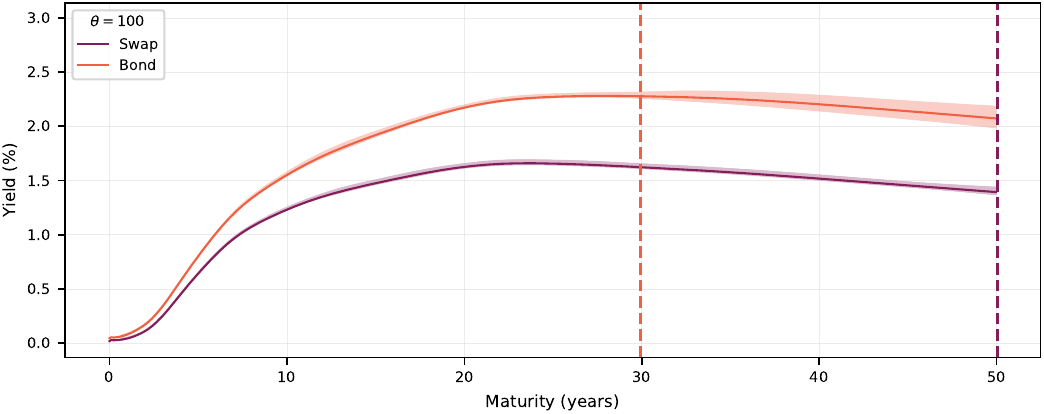}
  \end{subfigure}\hfill
  \begin{subfigure}[t]{0.49\linewidth}
    \centering
    \includegraphics[width=\linewidth]{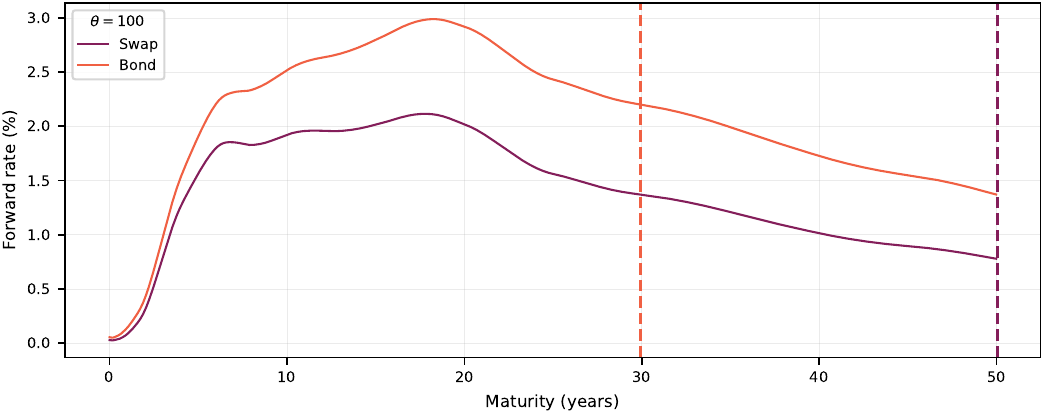}
  \end{subfigure}\vspace{0.3em}
    \begin{subfigure}[t]{0.49\linewidth}
    \centering
    \includegraphics[width=\linewidth]{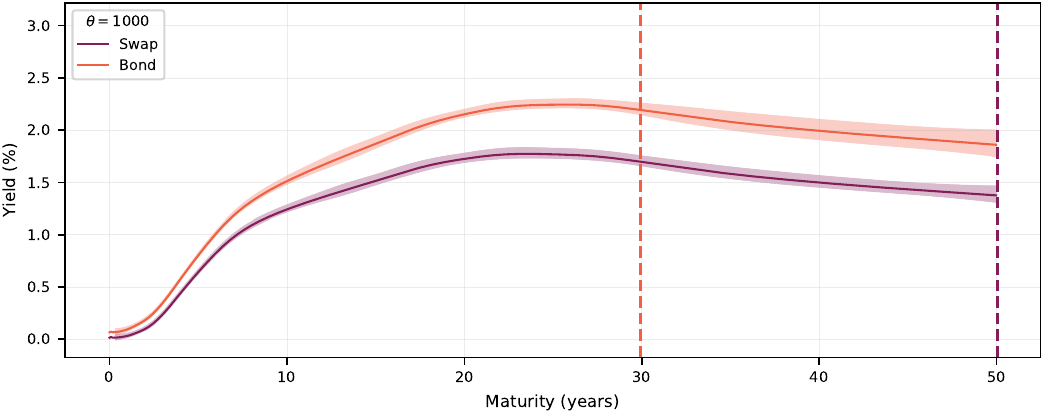}
  \end{subfigure}\hfill
  \begin{subfigure}[t]{0.49\linewidth}
    \centering
    \includegraphics[width=\linewidth]{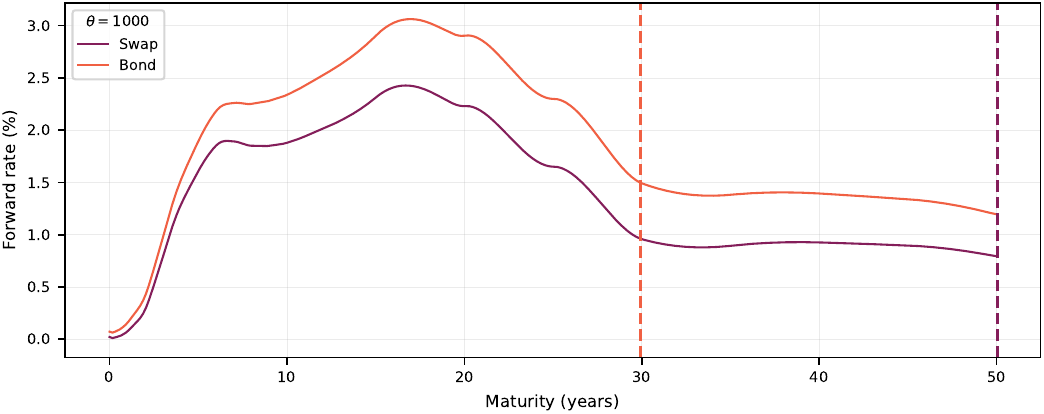}
  \end{subfigure}\vspace{0.3em}
        \bnotefig{This figure shows the resulting yield curves for various $\theta$ on the left and respective forward rate curves on the right on 2021-06-15. In all panels, the vertical dashed lines indicate the longest available data point in the respective product class. The shaded areas show the $3\sigma$ confidence bands derived from the Gaussian process view and are capped at $\pm 2\%$. All values are in \%} 
   \label{fig:example_day_2021}
\end{figure}
\begin{figure}[ht!]
  \centering
       \tcapfig{Example day 2022-06-15}
  \begin{subfigure}[t]{0.49\linewidth}
    \centering
    \includegraphics[width=\linewidth]{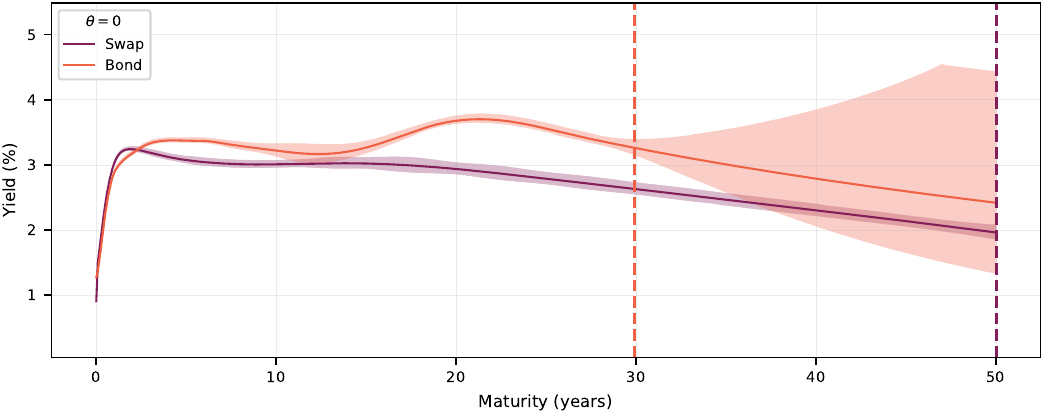}
  \end{subfigure}\hfill
  \begin{subfigure}[t]{0.49\linewidth}
    \centering
    \includegraphics[width=\linewidth]{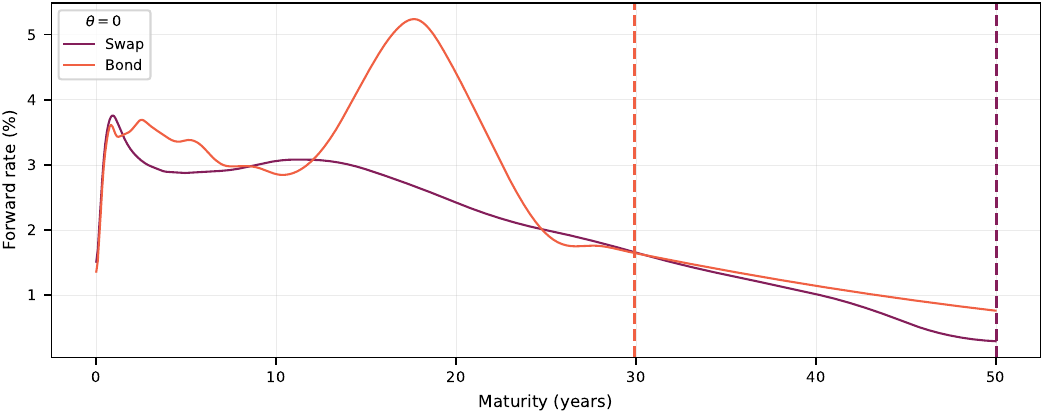}
  \end{subfigure}\vspace{0.3em}
    \begin{subfigure}[t]{0.49\linewidth}
    \centering
    \includegraphics[width=\linewidth]{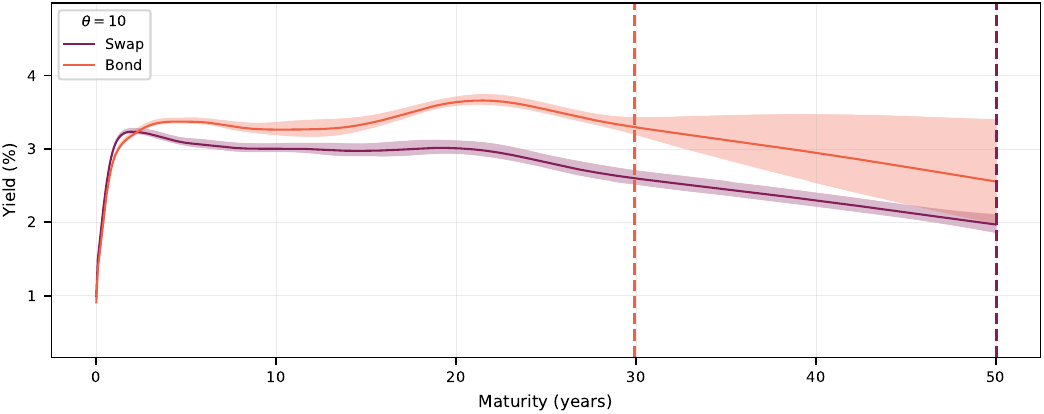}
  \end{subfigure}\hfill
  \begin{subfigure}[t]{0.49\linewidth}
    \centering
    \includegraphics[width=\linewidth]{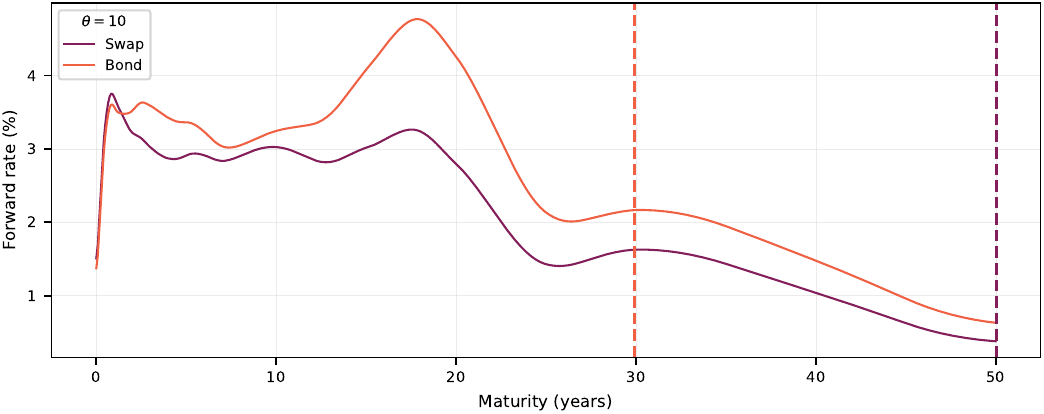}
  \end{subfigure}\vspace{0.3em}
    \begin{subfigure}[t]{0.49\linewidth}
    \centering
    \includegraphics[width=\linewidth]{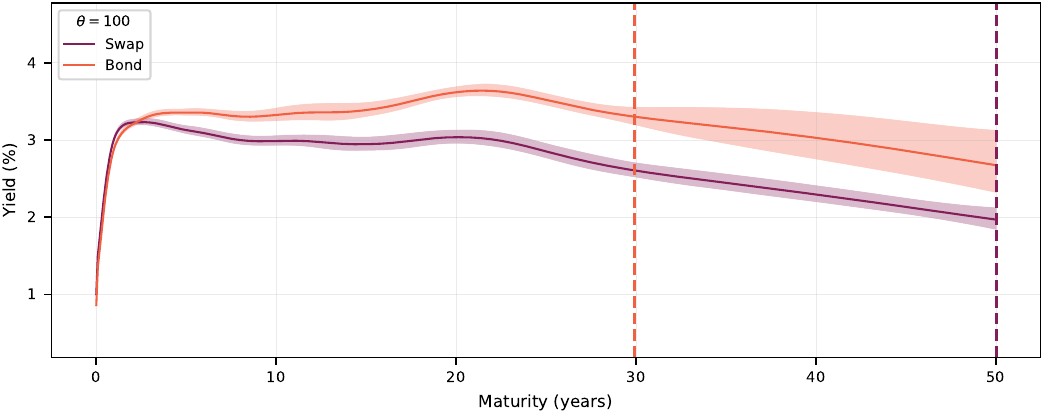}
  \end{subfigure}\hfill
  \begin{subfigure}[t]{0.49\linewidth}
    \centering
    \includegraphics[width=\linewidth]{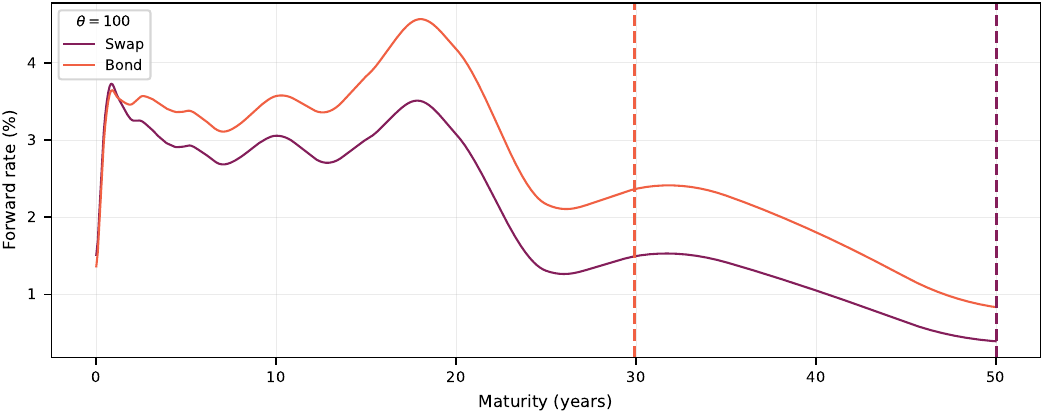}
  \end{subfigure}\vspace{0.3em}
    \begin{subfigure}[t]{0.49\linewidth}
    \centering
    \includegraphics[width=\linewidth]{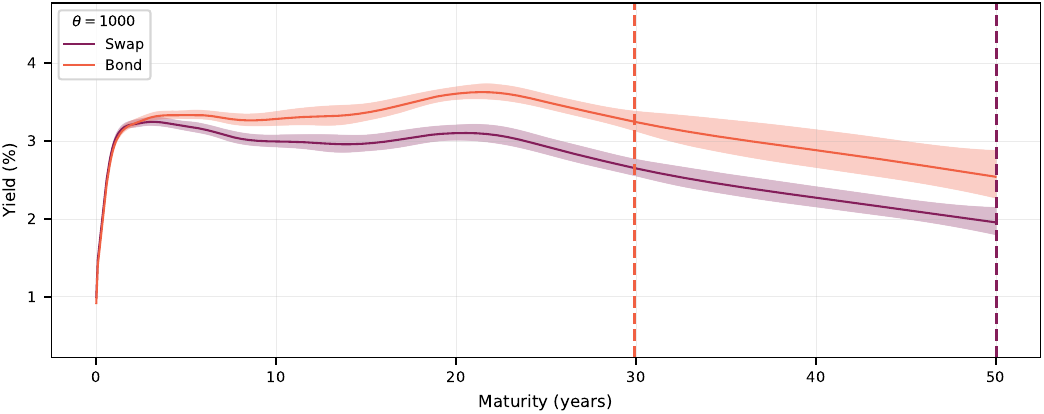}
  \end{subfigure}\hfill
  \begin{subfigure}[t]{0.49\linewidth}
    \centering
    \includegraphics[width=\linewidth]{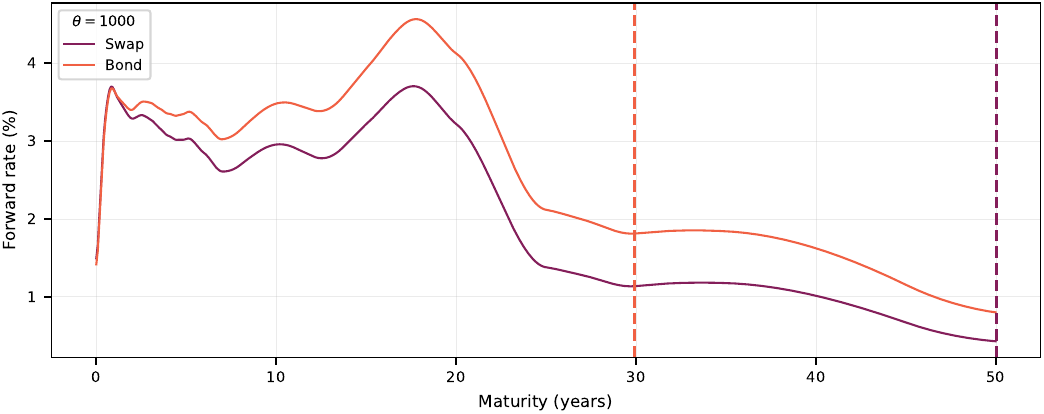}
  \end{subfigure}\vspace{0.3em}
        \bnotefig{This figure shows the resulting yield curves for various $\theta$ on the left and respective forward rate curves on the right on 2022-06-15. In all panels, the vertical dashed lines indicate the longest available data point in the respective product class. The shaded areas show the $3\sigma$ confidence bands derived from the Gaussian process view and are capped at $\pm 2\%$. All values are in \%} 
   \label{fig:example_day_2022}
\end{figure}
\begin{figure}[ht!]
  \centering
       \tcapfig{Example day 2023-06-15}
  \begin{subfigure}[t]{0.49\linewidth}
    \centering
    \includegraphics[width=\linewidth]{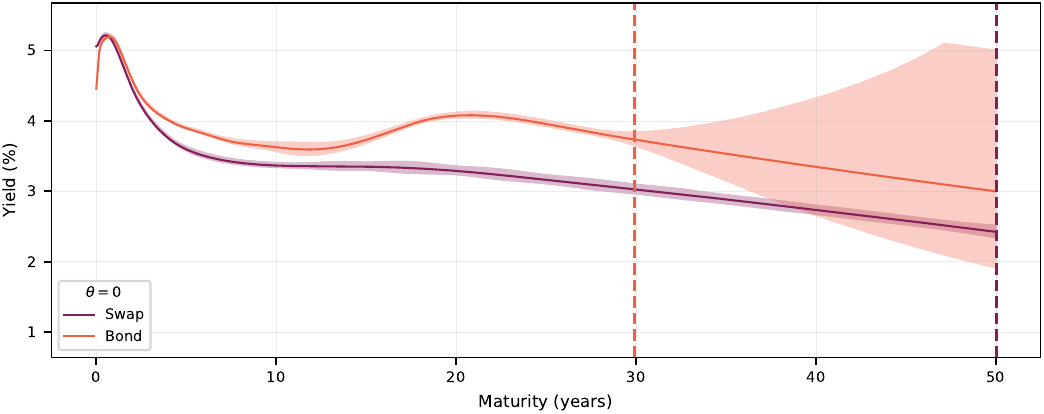}
  \end{subfigure}\hfill
  \begin{subfigure}[t]{0.49\linewidth}
    \centering
    \includegraphics[width=\linewidth]{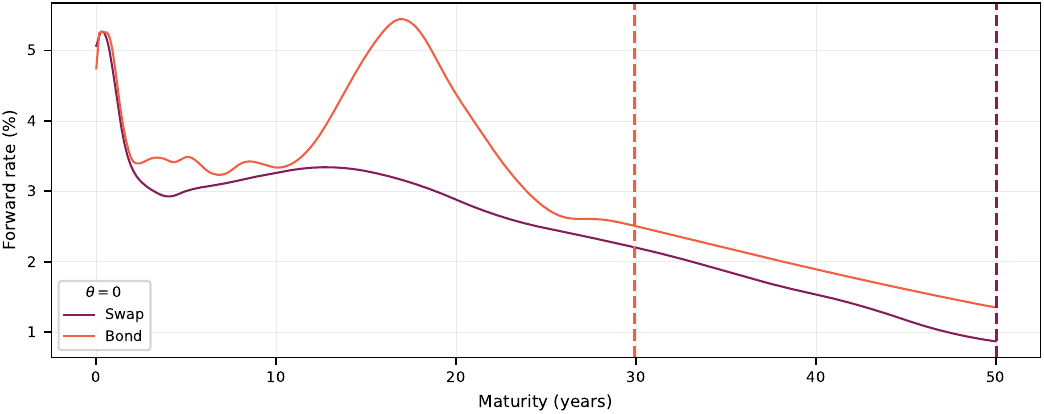}
  \end{subfigure}\vspace{0.3em}
    \begin{subfigure}[t]{0.49\linewidth}
    \centering
    \includegraphics[width=\linewidth]{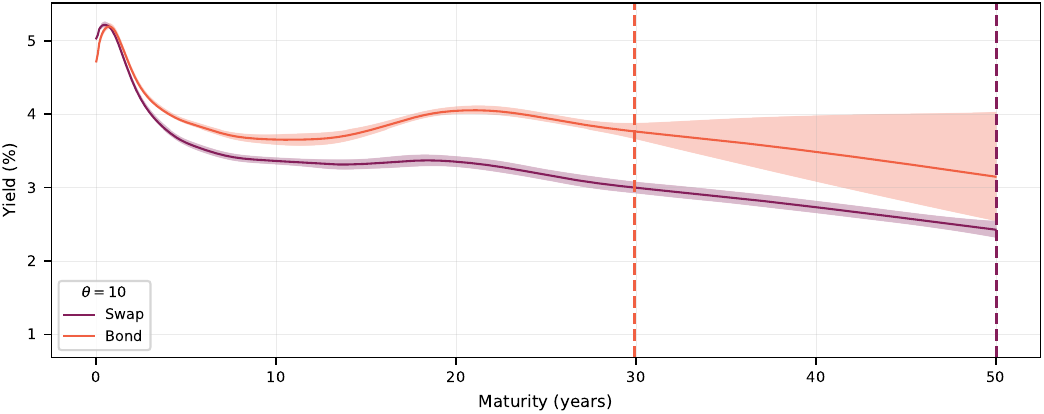}
  \end{subfigure}\hfill
  \begin{subfigure}[t]{0.49\linewidth}
    \centering
    \includegraphics[width=\linewidth]{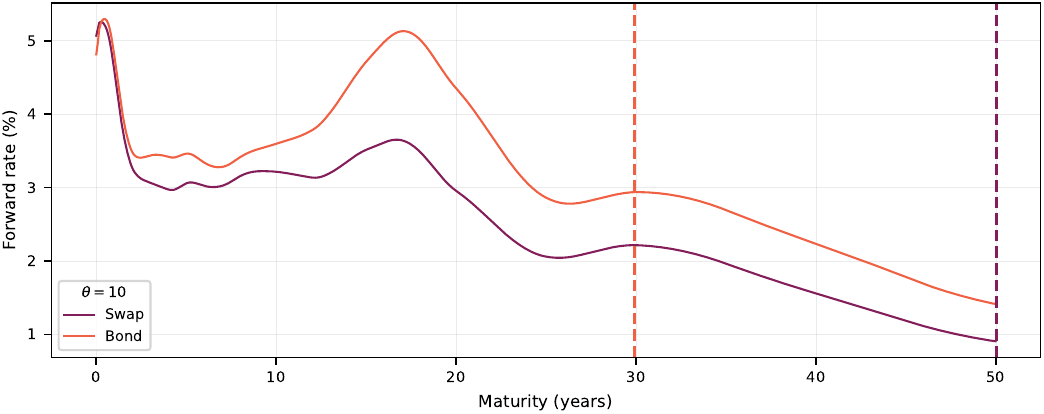}
  \end{subfigure}\vspace{0.3em}
    \begin{subfigure}[t]{0.49\linewidth}
    \centering
    \includegraphics[width=\linewidth]{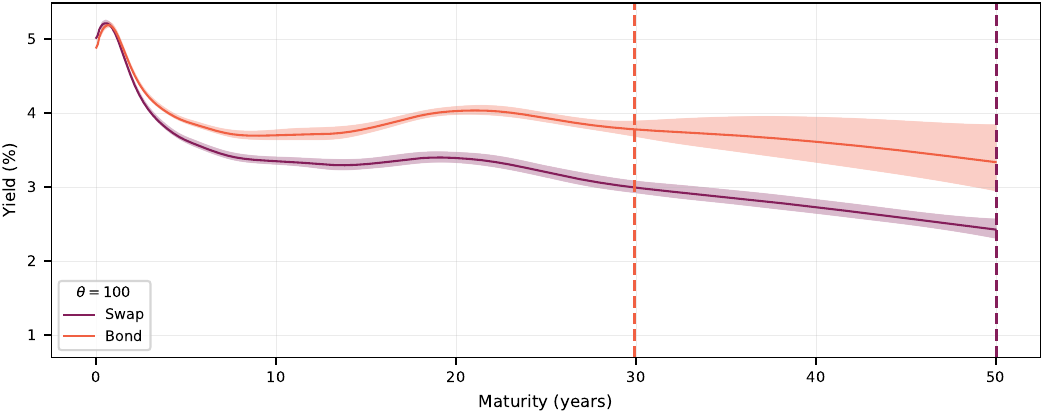}
  \end{subfigure}\hfill
  \begin{subfigure}[t]{0.49\linewidth}
    \centering
    \includegraphics[width=\linewidth]{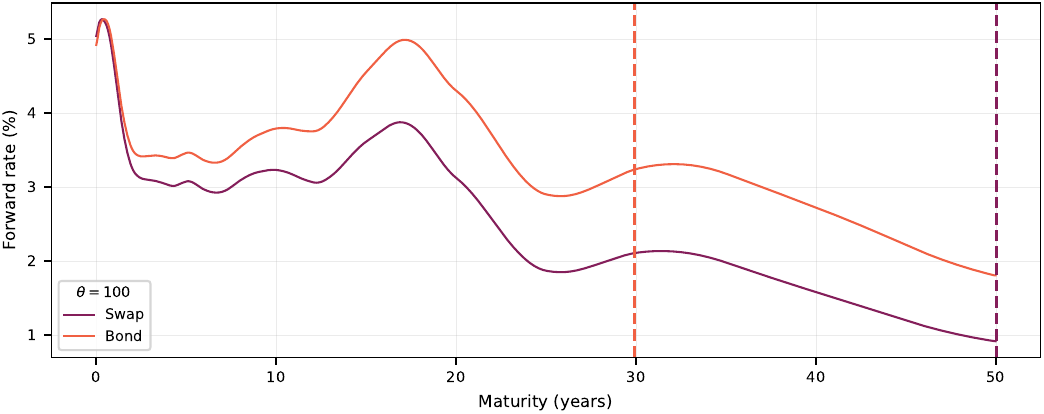}
  \end{subfigure}\vspace{0.3em}
    \begin{subfigure}[t]{0.49\linewidth}
    \centering
    \includegraphics[width=\linewidth]{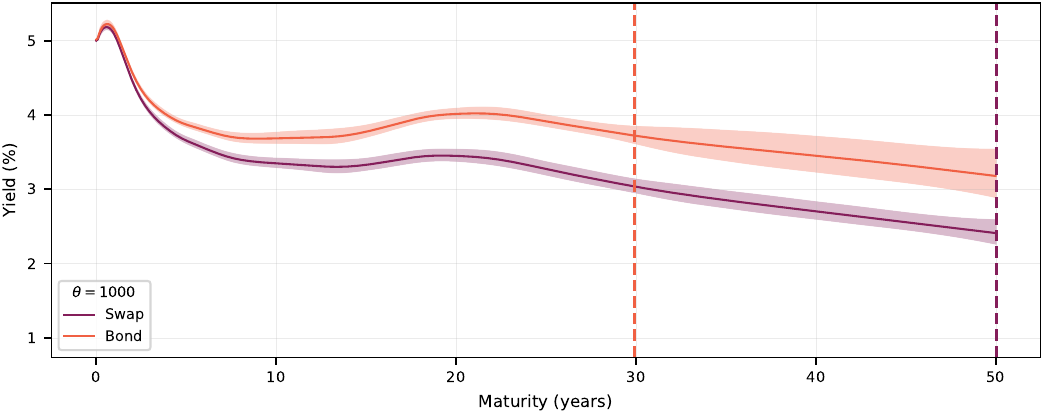}
  \end{subfigure}\hfill
  \begin{subfigure}[t]{0.49\linewidth}
    \centering
    \includegraphics[width=\linewidth]{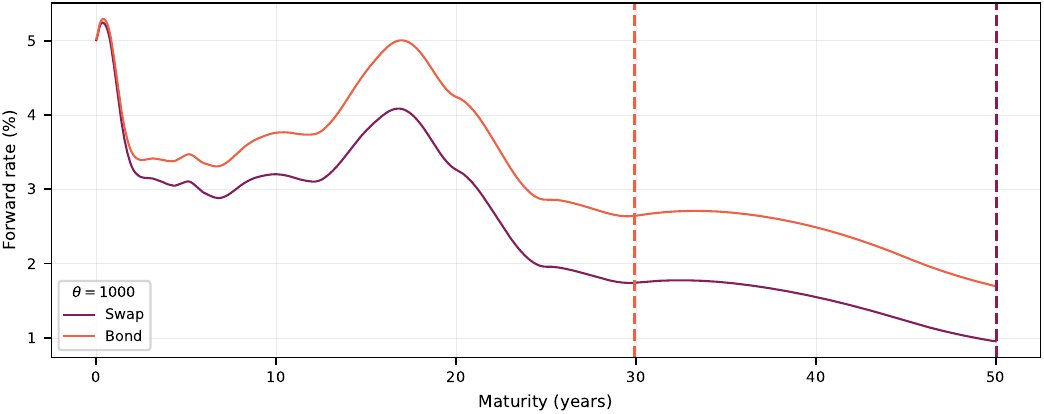}
  \end{subfigure}\vspace{0.3em}
        \bnotefig{This figure shows the resulting yield curves for various $\theta$ on the left and respective forward rate curves on the right on 2023-06-15. In all panels, the vertical dashed lines indicate the longest available data point in the respective product class. The shaded areas show the $3\sigma$ confidence bands derived from the Gaussian process view and are capped at $\pm 2\%$. All values are in \%} 
   \label{fig:example_day_2023}
\end{figure}

\end{appendix}
\end{document}